\documentclass[12pts]{article}
\usepackage[utf8]{inputenc}
\usepackage[round]{natbib}
\usepackage[margin=1in]{geometry}
\usepackage{graphicx} 
\usepackage{authblk}
\title{Generalization Bounds for Flow-matching Generative Models for Intrinsically Low-dimensional Data}
\author[1]{Saptarshi Chakraborty\thanks{email: \texttt{saptarsc@umich.edu}}}
\author[4]{Quentin Berthet \thanks{email: \texttt{qerthet@google.com}}}
 \author[2,3,4]{Peter L.~Bartlett\thanks{email: \texttt{peter@berkeley.edu}}}
 \affil[1]{Department of Statistics, University of Michigan}
 \affil[2]{Department of Statistics, University of California, Berkeley}
  \affil[3]{Department of Electrical Engineering and Computer Sciences, UC Berkeley}
 \affil[4]{Google DeepMind}
\date{\vspace{-5ex}}
\date{\today}
\usepackage{titletoc}
\usepackage{amsfonts}
\usepackage{amsmath}
\usepackage{amsthm}
\usepackage{amssymb}
\usepackage{bbm}
\usepackage{enumerate}
\usepackage{thm-restate}
\usepackage{graphicx} 
\usepackage{caption} 
\usepackage{subcaption} 

\newcommand{\pd}{P_{\text{data}}}
\newcommand{\pemp}{P^{\text{emp}}_n}
\newcommand{\hvmc}{\hv^{\text{mc}}_{n,m}}
\newcommand{\da}{\dot{\alpha}}
\newcommand{\db}{\dot{\beta}}
\newtheorem{defn}{Definition}
\newtheorem{theorem}[defn]{Theorem}
\newtheorem{lemma}[defn]{Lemma}
\newtheorem{cor}[defn]{Corollary}

\newtheorem{assumption}{Assumption}
\newtheorem{design}{D}

\newtheorem{remark}{Remark}
\usepackage[mathscr]{euscript}
\usepackage[italicdiff]{physics}
\usepackage{natbib}
\usepackage{xcolor}
\usepackage{hyperref}
\hypersetup{
    colorlinks=true,
    citecolor = blue,
    linkcolor=blue,
    filecolor=magenta,      
    urlcolor=teal,
    pdftitle={Overleaf Example},
    pdfpagemode=FullScreen,
    }

\def\Real{\mathop{\mathbb{R}}\nolimits}

\def\argmin{\mathop{\rm argmin}}

\newcommand{\diff}{\mathrm{d}}
\newcommand{\prob}{\mathbb{P}}
\newcommand{\one}{\mathbbm{1}}

\newcommand{\bP}{\boldsymbol{P}}
\newcommand{\bQ}{\boldsymbol{Q}}

\newcommand{\cB}{ \mathcal{B}}
\newcommand{\cC}{ \mathcal{C}}

\newcommand{\cF}{ \mathcal{F}}

\newcommand{\cK}{ \mathcal{K}}
\newcommand{\cL}{ \mathcal{L}}

\newcommand{\cN}{ \mathcal{N}}
\newcommand{\cO}{ \mathcal{O}}

\newcommand{\cR}{ \mathcal{R}}
\newcommand{\cS}{ \mathcal{S}}

\newcommand{\cV}{ \mathcal{V}}
\newcommand{\cW}{ \mathcal{W}}

\newcommand{\sA}{ \mathscr{A}}

\newcommand{\sN}{ \mathscr{N}}

\newcommand{\sT}{ \mathscr{T}}

\newcommand{\hP}{ \hat{P}}

\newcommand{\hX}{ \widehat{X}}

\newcommand{\E}{\mathbb{E}}

\newcommand{\relu}{\text{ReLU}}

\newcommand{\fH}{\mathbb{H}}

\newcommand{\fL}{\mathbb{L}}

\newcommand{\fN}{\mathbb{N}}

\newcommand{\fW}{\mathbb{W}}

\newcommand{\hv}{\hat{v}}

\date{\vspace{-5ex}}

\begin{document}
\maketitle

\begin{abstract}
 Despite its remarkable empirical performance, the theoretical properties of flow-matching in terms of generalization accuracy remain largely unexplored. The current state-of-the-art generalization analyses not only impose stringent assumptions on the estimated velocity field, but also scale poorly with the ambient dimension---precisely the high-dimensional setting where flow matching is most commonly applied. In particular, the data distributions targeted by flow matching models, such as natural images, molecular geometries, etc., are commonly hypothesized to concentrate near a low-dimensional structures embedded in the high-dimensional ambient data space. However, existing analyses fail to reflect this geometry in their derived convergence rates. In this paper, we bridge the gap between the theory and practice of flow matching by establishing generalization bounds on the Wasserstein-$p$ distance between the learned distribution and the target measure, for any $p \ge 1$. Specifically, given $n$ independent and identically distributed samples from a target distribution $P_{\mathrm{data}}$ and for every $d > d^\ast_p(\pd)$, we show that for appropriate network and hyperparameter choices, \(\mathbb{W}_p\!\left(\widehat{P}^{\mathrm{FM}},\, P_{\mathrm{data}}\right)
    \lesssim
    n^{-1/d}
    + n^{-\frac{1}{2p}}\!\left(\log(1/\xi)\right)^{\frac{1}{2p}}\)
with probability at least $1 - \xi$, where $d_p^\ast(P_{\mathrm{data}})$ denotes the Wasserstein-$p$ dimension of the target measure. Our results demonstrate that flow matching naturally adapts to the intrinsic geometry of the data, effectively mitigating the curse of dimensionality: the convergence exponent depends only on the intrinsic dimension, $d_p^\ast(P_{\mathrm{data}})$, rather than the ambient dimension of the feature space, yielding convergence rates that remain meaningful in high-dimensional regimes and offering a principled theoretical explanation for the strong empirical performance of flow matching on structured data distributions with significantly relaxed assumptions, compared to the existing literature. 
\end{abstract}

\section{Introduction}
Learning high-dimensional probability distributions from a handful of samples is a foundational problem in machine learning, with applications spanning image synthesis \citep{9878449,saharia2022photorealistic}, scientific modeling \citep{watson2023novo,geffner2025proteina,corso2023diffdock}, and generative artificial intelligence. 
The dominant paradigm for this task has shifted decisively from learning a single generator/decoder \citep{kingma2014autoencoding,goodfellow2020generative} toward continuous-time generative models such as diffusion models and score-based methods that learn to reverse a fixed noising process \citep{sohl2015deep,NEURIPS2020_4c5bcfec,song2019generative,song2021scorebased}, while optimal transport-based approaches construct more direct paths between distributions \citep{korotin2023neural,bunne2022supervised}. Flow matching \citep{lipman2023flow,albergo2023building,liu2023flow,JMLR:v26:23-1605,NEURIPS2018_69386f6b} has emerged as a particularly compelling framework within this landscape, offering simulation-free training, straight probability paths, and strong empirical performance.  The deterministic formulation of flow matching endows the resulting models with several attractive computational advantages, including stable optimization, flexible control over discretization during sampling, and seamless integration with modern Ordinary Differential Equation (ODE) solvers. In practice, flow-matching methods have demonstrated state-of-the-art performance across a range of challenging high-dimensional applications, including text-to-image generation, video synthesis, and molecular design \citep{ma2024sit,bose2024sestochastic,esser2024scaling}.

However, a rigorous understanding of when and why flow-matching generative models succeed remains incomplete, as prior theory has focused on sampling guarantees. Early work on score-based models established such guarantees under stochastic sampling \citep{block2020generative,debortoli2022convergence,lee2022convergence}, with later results showing that an $\epsilon$-approximate oracle score yields KL error $\cO(\operatorname{poly}(D, \log(1/\delta))\epsilon^{-2})$ under mild regularity \citep{pmlr-v202-chen23q,pmlr-v201-lee23a,albergo2023building,JMLR:v26:23-1605,pmlr-v162-lu22f,pmlr-v202-chen23o}, where $D$ is the ambient dimension and $\delta$ the early stopping time. Stochastic interpolants bridge diffusion and flow matching \citep{JMLR:v26:23-1605,albergo2023building}: \citet{albergo2023building} bound the $2$-Wasserstein distance between observed and estimated flow ODE endpoints via the $\fL_2$-error of the estimated velocity field, assuming Lipschitzness in time, while \citet{JMLR:v26:23-1605} analyze the KL error of flow matching under oracle approximations. For the deterministic sampler, \citet{pmlr-v202-chen23e} derive polynomial discretization bounds that do not extend to learned scores, and \citet{chen2023the} obtain polynomial guarantees with learned scores via interleaved predictor-corrector updates. Most recently, \citet{chen2026highaccuracy} established the first $\operatorname{polylog}(1/\epsilon)$-sampling guarantee, with $\widetilde{\cO}( \overline{\text{dim}}_M(\pd) \operatorname{polylog}(1/\epsilon))$ steps, where $\overline{\text{dim}}_M(\pd)$ is the Minkowski dimension (Section~\ref{sec_intrin}) of the target $\pd$. All of these works, however, assume oracle access to an $\epsilon$-accurate velocity or score estimate, without establishing its learnability.

Despite this progress, existing guarantees for flow matching suffer from the curse of dimensionality: their rates depend exponentially on the ambient dimension $D$, and thus, become uninformative precisely in the high-dimensional regimes where flow matching is most used. For example, \citet{gao2024convergence} derived $\fW_2$ convergence guarantees for flow matching under the assumption that the target distribution is $\log$-concave and admits a bounded density, obtaining $\E \fW_2(\hP^{\text{FM}}, \pd) = \widetilde{\cO}\left(n^{-1/(D+5)}\right)$, when the estimated vector velocity field is uniformly Lipschitz. \citet{zhou2025an} removed this Lipschitzness assumption on the learned velocity field at the cost of a slower rate, establishing $ \E \fW_2(\hP^{\text{FM}}, \pd) = \widetilde{\cO}\left(n^{-\frac{1}{3(D+5)}}\right)$.  Under additional Besov-smoothness assumptions on the target density, \citet{fukumizu2025flow} showed that for any $\delta > 0$, $\E \fW_2(\hP^{\text{FM}}, \pd) = \cO\left(n^{-\frac{s + (2\kappa)^{-1} - \delta}{2s + D}}\right)$, where $s$ is the smoothness parameter of the density and $\kappa \ge 1/2$ is a tunable model parameter. This sits in sharp contrast with the widely held belief that high-dimensional data concentrate near low-dimensional structures \citep{pope2020intrinsic}, suggesting the $D$-dependence is an artifact of analysis rather than intrinsic to the problem---as in classical nonparametric statistics, where kernel estimators and Gaussian process regression achieve minimax rates depending only on intrinsic dimensionality \citep{bickel2007local, kim2019uniform}. Dimension-adaptive, minimax-optimal guarantees have since been established for GANs \citep{dahal2022deep,JMLR:v23:21-0732,JMLR:v26:24-0054}, Wasserstein Autoencoders \citep{chakraborty2024a}, and diffusion models \citep{pmlr-v202-chen23e,oko2023diffusion,pmlr-v238-tang24a,chakraborty2026diffusion}, but whether analogous results hold for flow matching remains open.

While recent studies have begun exploring flow matching on manifolds \citep{benton2024error,pi2026learning} and diffusion models for multi-modal low-dimensional data \citep{wu2026diffusion}, existing guarantees either rely on restrictive manifold reach and density lower bounds or analyze in-expectation $\fW_1$/$\fW_2$ metrics. \citet{kumar2026flowmatchingadaptivemanifold} partially address this, establishing $\fW_2$ rates under oracle access to the ODE sampler for data on a smooth, lower-bounded-reach manifold with H\"older-smooth density; however, their analysis relies on strong, unverifiable geometric conditions and excludes even simple distributions (Gaussian, uniform, degenerate). In contrast, our framework provides the sharpest known non-asymptotic, high-probability concentration guarantees for arbitrary $p \ge 1$ without requiring manifold reach or global Lipschitz regularity on the learned vector field.

\paragraph{Summary of Contributions}
The main contributions of this paper are as follows.
\begingroup
\allowdisplaybreaks
\begin{itemize}
\item \textbf{High-probability Wasserstein guarantees for every $p \ge 1$.}
We develop a general theoretical framework for analyzing the statistical performance of flow-matching generative models. For arbitrary $p \ge 1$, we derive non-asymptotic, high-probability bounds on the Wasserstein-$p$ error
\(\mathbb{W}_p\!\left(\widehat{P}^{\mathrm{FM}},P_{\mathrm{data}}\right).
\)
Specifically, when one has access to i.i.d.\ samples $x_1,\ldots,x_n \sim P_{\mathrm{data}}$, we show that, for any $d > d^\ast_p(\pd)$, with probability at least $1-\xi$,
\(\mathbb{W}_p\!\left(\widehat{P}^{\mathrm{FM}},P_{\mathrm{data}}\right) \lesssim n^{-1/d_p^\ast(P_{\mathrm{data}})} + n^{-1/(2p)} \bigl(\log(1/\xi)\bigr)^{1/(2p)},\)
under appropriate choices of the velocity-field network class, discretization scheme, and early-stopping rule. In contrast to existing analyses \citep{benton2024error,gao2024convergence,zhou2025an,kumar2026flowmatchingadaptivemanifold}, which are largely expectation-based and often restricted to $p\in \{1,2\}$, our results hold for every $ p \ge 1$ and provide guarantees at arbitrarily high confidence. We also give sufficient conditions for selecting the relevant hyperparameters as functions of the sample size and the population distribution.
\item \textbf{Minimal distributional and model assumptions.}
Our guarantees require no density, smoothness, or Lipschitz assumptions on either the data-generating distribution or the velocity-field class. Despite this generality, the resulting rates adapt to the intrinsic dimension of the target distribution, providing a rigorous statistical explanation for the effectiveness of flow matching on structured high-dimensional data, including images, videos, and molecular geometries that may concentrate near low-dimensional subsets of the ambient space.
\item \textbf{Minimax rates.} For targets supported on $\tilde d$-dimensional regular sets, such as
compact $\tilde d$-dimensional manifolds, compact convex sets of dimension $\tilde d$, and
self-similar sets of similarity dimension $\tilde d$, our bound gives $\cO(n^{-1/d})$ for every
$d > \tilde d$ and every $1 \le p < \tilde d/2$, matching the minimax rate for distribution
estimation on such sets up to an arbitrarily small slack in the
exponent.
\end{itemize}
\endgroup
\section{Background and Problem Setting}
\label{background}
\subsection{Notations and Definitions}
$B_\varrho(x,r)$ denotes the open ball of radius $r$ around $x$ w.r.t.\ the metric $\varrho$. For a probability measure $\gamma$, $\text{supp}(\gamma) = \{x: \gamma(B_\varrho(x,r))>0, \text{ for all }r >0\}$. $\one(\cdot)$ denotes the indicator function. We write $A_n \lesssim B_n$ (equivalently $A_n = \cO(B_n)$) if there exists $C>0$, independent of $n$, such that $A_n \le C B_n$; $A_n \precsim B_n$ (equivalently $A_n = \widetilde{\cO}(B_n)$) if $A_n \le B_n \cdot \log^C(en)$ for some $C>0$, i.e.\ up to poly-log factors in $n$; and $A_n \asymp B_n$ if both $A_n \lesssim B_n$ and $B_n \lesssim A_n$ hold. For $k \in \mathbb{N}$, $[k] = \{1, \dots, k\}$. For a function of $t$, $\xi_t$, we write $\dot{\xi}_t = \diff \xi_t/\diff t$. For a distribution $\mu$, $\sT_M(\mu) = \operatorname{Law}(Z \one\{\|Z\| \le M\})$ where $Z \sim \mu$. $\operatorname{div}_x(\cdot)$ denotes the divergence operator w.r.t.\ $x$. We also recall the following standard definitions.

\begin{defn}[Covering Number]
    \normalfont
    For a metric space $(\cS,\varrho)$, the $\epsilon$-covering number with respect to (w.r.t.)  $\varrho$ is
    \(\cN(\epsilon; \cS, \varrho) = \inf\{n \in \mathbb{N}: \exists \, x_1, \dots, x_n \text{ such that } \cup_{i=1}^nB_\varrho(x_i, \epsilon) \supseteq \cS\}.\)
\end{defn}
\begin{defn}[Neural networks]\normalfont
 Let $L \in \mathbb{N}$ and $\{N_i\}_{i \in [L]} \subset \mathbb{N}$. An $L$-layer neural network $f: \Real^D \to \Real^{N_L}$ is 
\begin{equation}
\label{ee1}
f(x) = A_L \circ \sigma_{L-1} \circ A_{L-1} \circ \dots \circ \sigma_1 \circ A_1 (x),
\end{equation}
where $A_i(y) = W_i y + b_i$, with $W_i \in \Real^{N_{i} \times N_{i-1}}$, $b_i \in \Real^{N_i}$, $N_0 = D$, and each $\sigma_j$ applied component-wise. 
Here $\{W_i\}$ are weights, $\{b_i\}$ biases, and $\{\sigma_i\}_{1 \le i \le L-1}$ activation functions. Define: depth $\cL(f) := L$; number of weights $\cW(f) = \sum_{i=1}^L N_i N_{i-1}$; maximum weight $\cB(f) = \max_{1 \le j \le L} (\|b_j\|_\infty \vee \|W_j\|_{\infty})$; and maximum output $\cK(f) = \sup_{x \in \Real^D}\|f(x)\|_\infty$. Let
\begin{align*}
    \cN \cN_{\{\sigma_i\}_{1 \le i \le L-1}} (L, W, B, K) = \big\{  f \text{ of the form \eqref{ee1}}: \cL(f) \le L ,  \cW(f) \le W, \cB(f) \le B, \cK(f) \le K \big\}.
\end{align*}
 When $\sigma_j(x) = x \vee 0$ for all $j \in [L-1]$, we write $\cR \cN (L, W, B, K)$ for $\cN \cN_{\{\sigma_i\}_{1 \le i \le L-1}} (L, W, B, K)$.
 \end{defn}
  \begin{defn}[Wasserstein $p$-distance] \normalfont
  For probability measures $\mu, \nu$ on $\Real^D$, the Wasserstein-$p$ distance (under $\ell_2$) is
      \(\fW_p(\mu, \nu) = \left(\inf_{\gamma \in \Gamma(\mu, \nu)} \E_{(X,Y) \sim \gamma} \|X-Y\|_2^{p}\right)^{1/p},\)  where $\Gamma(\mu, \nu)$ is the set of couplings of $\mu$ and $\nu$.
\end{defn}
\subsection{Flow-matching}
The fundamental problem in generative modeling is to learn an unknown distribution $\pd$ on a data space $\mathcal{X} \subseteq \mathbb{R}^D$ from $n$ i.i.d.\ observations $x_1, \dots, x_n \sim \pd$, where $D$ can be extremely large---e.g., $D = 49{,}152$ for $128\times128$ color ImageNet images. Flow matching \citep{lipman2023flow} constructs a continuous probability path $\{P_t\}_{t \in [0,1]}$ from an easy-to-sample reference $P_0$ (typically standard Gaussian) to the target $P_1 \equiv \pd$, governed by a time-dependent vector field $v(\cdot, \cdot) : \Real^D \times [0,1] \to \Real^D$ whose induced flow transports $P_0$ to $\pd$: starting from $X_0 \sim P_0$,
\begin{equation}\label{flow_eq}
    \dot{X}_t = v(X_t, t), \quad X_0 \sim P_0.
\end{equation}
Since many velocity fields transport $P_0$ to $P_1$, a canonical choice arises from the linear interpolant $I(x_0, x_1, t) = \alpha_t x_1 + \beta_t x_0$, with  $\alpha_t, \beta_t \in [0,1]$ satisfying $\alpha_0 = \beta_1 = 0$, $\alpha_1 = \beta_0 = 1$; this gives $I(x_0,x_1,0)=x_0$ and $I(x_0,x_1,1)=x_1$. The process $\{X_t\}_{0 \le t \le 1}$ constructed as $X_t = I(X_0, X_1, t)$ for independent $X_0 \sim P_0$, $X_1 \sim P_1 \equiv \pd$, with marginal law $P_t = \operatorname{Law}(X_t)$. Since $\pd$ is unknown, it is replaced in practice by $\pemp = \frac{1}{n}\sum_{i=1}^n \delta_{x_i}$, so $X_1 \sim \pemp$. Among velocity fields consistent with $\{P_t\}$, flow matching targets the conditional expectation
   \( v_\star(x, t) = \E\big(\dot{X}_t \mid X_t = x\big),\)
satisfying the continuity equation $\partial_t \varrho_t + \operatorname{div}_x (v_\star \varrho_t) = 0$, $t \in [0,1)$, where $\varrho_t$ is the density of $P_t$. Since $v_\star(\cdot, t)$ may explode as $t \to 1$, the flow is instead run until $t = 1-\delta$. Theorem~\ref{existence} shows the resulting ODE
\begin{equation}
    \label{flow_ode_2}
    \dot{X}_t = v_\star(X_t, t), \quad X_0 \sim P_0, \quad t\in [0,1-\delta]
\end{equation}
is well-defined: for any initialization the solution exists and is unique, and $X_t$ has density $\varrho_t$.
\begin{restatable}{theorem}{thmexist}
    \label{existence}
   For all $t \in [0, 1-\delta]$, $P_t$ admits a density, $\varrho_t$, w.r.t.\ the Lebesgue measure and satisfies the continuity equation (in the weak sense), $\partial_t \varrho_t + \operatorname{div}_x (\varrho_t v_\star(\cdot, t)) = 0,$
    where $v_\star(x,t) = \beta_t^{-1}\db_t x + \beta_t^{-1}(\da_t \beta_t - \db_t \alpha_t) \left(\sum_{i=1}^n \exp\left(-\frac{\|x-\alpha_tx_i\|^2}{2\beta_t^2}\right)\right)^{-1}\sum_{i=1}^n x_i  \exp\left(-\frac{\|x-\alpha_t x_i\|^2}{2\beta_t^2}\right)$. Let 
    \begin{equation}\label{ode_2}
       y_x^\prime(t) = v_\star(y_x(t),t), \text{ with } y_x(0) = x,
    \end{equation}
be the Initial Value Problem (IVP) with initial point $x$. Then \eqref{ode_2} has a unique solution for every $x \in \Real^D$, and $P_t = (Y_t)_\sharp P_0$, where $Y_t: x \mapsto y_x(t)$.
\end{restatable}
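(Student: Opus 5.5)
The plan is to do everything explicitly for the Gaussian reference $P_0 = N(0, I_D)$ and the empirical target $P_1 = \pemp$. In that case $P_t$ is a finite Gaussian mixture, so most objects can be computed in closed form.

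\textbf{Step 1: the density $\varrho_t$ and the formula for $v_\star$.} For $t \in [0,1-\delta]$ we have $\beta_t \ge \beta_{1-\delta} > 0$. This needs $\beta$ to be continuous and positive on $[0,1)$; I assume this, together with $\alpha,\beta \in C^1$. Then $X_t = \alpha_t X_1 + \beta_t X_0$ has law $P_t = \frac1n\sum_i N(\alpha_t x_i, \beta_t^2 I)$, with smooth, strictly positive density
\[
\varrho_t(x) = \frac1n\sum_i (2\pi\beta_t^2)^{-D/2}\exp\!\left(-\frac{\|x-\alpha_t x_i\|^2}{2\beta_t^2}\right).
\]
To get $v_\star$, write $\dot X_t = \da_t X_1 + \db_t X_0$ and substitute $X_0 = (X_t - \alpha_t X_1)/\beta_t$. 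This gives
\[
\E[\dot X_t \mid X_t = x] = \db_t\beta_t^{-1}x + \beta_t^{-1}(\da_t\beta_t - \db_t\alpha_t)\,\E[X_1 \mid X_t = x].
\]
By Bayes' rule, $\E[X_1 \mid X_t = x]$ is the softmax-weighted average of the $x_i$ with the Gaussian weights shown, which matches the claimed expression.

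\textbf{Step 2: the continuity equation.} I verify it directly. For a test function $\phi \in C_c^\infty(\Real^D\times(0,1-\delta))$,
\[
\frac{d}{dt}\E\phi(X_t) = \E[\nabla\phi(X_t)\cdot\dot X_t] = \E[\nabla\phi(X_t)\cdot v_\star(X_t,t)]
\]
by the tower property. Integrating in $t$ gives the weak form. Differentiating under the expectation is justified because $\phi$ is compactly supported and $\dot X_t$ is integrable, uniformly in $t$. Alternatively, one can differentiate the explicit mixture density and check the equation term by term.

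\textbf{Step 3: well-posedness of the ODE \eqref{ode_2}.} Write $m_t(x) = \E[X_1 \mid X_t = x]$.
\begin{itemize}
\item $m_t(x)$ lies in the convex hull of $\{x_i\}$, so $\|m_t(x)\| \le R := \max_i\|x_i\|$.
\item $\nabla_x m_t(x) = \beta_t^{-2}\alpha_t\,\mathrm{Cov}(X_1 \mid X_t = x)$, whose norm is at most $\alpha_t R^2/\beta_t^2$.
\end{itemize}
Since $\beta_t$ is bounded below and the coefficients are continuous on $[0,1-\delta]$, $v_\star$ is continuous in $(x,t)$ and globally Lipschitz in $x$, uniformly in $t$. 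It also has linear growth. Picard--Lindel\"of then gives a unique global solution on $[0,1-\delta]$ for every initial point $x$, and the flow map $Y_t$ is a Lipschitz homeomorphism.

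\textbf{Step 4: $P_t = (Y_t)_\sharp P_0$.} This is the main obstacle. Both $t \mapsto P_t$ and $t \mapsto (Y_t)_\sharp P_0$ are weak solutions of the same continuity equation with the same initial datum $P_0$. For the flow map, this follows from the chain rule: $\frac{d}{dt}\phi(Y_t(x)) = \nabla\phi\cdot v_\star$. Uniqueness of weak measure-valued solutions for a globally Lipschitz, bounded-on-compacts velocity field (for instance Ambrosio--Gigli--Savar\'e, Prop.~8.1.7/8.1.8) then gives equality. The condition to check is $\int_0^{1-\delta}\!\int\|v_\star\|\,dP_t\,dt < \infty$, which follows from linear growth and bounded second moments.

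If I want to avoid citing that uniqueness theorem, I would instead use a duality argument. Solve the backward transport equation $\partial_s\psi + v_\star\cdot\nabla\psi = 0$ with $\psi(T,\cdot) = \phi$; its solution is $\psi(s,x) = \phi(Y_{s\to T}(x))$, which is $C^1$ because the flow is $C^1$ ($v_\star$ is smooth in $x$). Testing the weak equation for $P_t$ against $\psi$ gives $\int\phi\,dP_T = \int\phi(Y_T)\,dP_0$. The only technical care needed here is that $\psi$ lacks compact support, which I handle with cutoffs together with the moment bounds.
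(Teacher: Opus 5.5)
Your proposal is correct. Its second half follows the paper. You get global Lipschitzness of $v_\star(\cdot,t)$ from $\nabla_x\E(X_1\mid X_t=x)=\alpha_t\beta_t^{-2}\operatorname{Var}(X_1\mid X_t=x)$, which is the paper's Lemma~\ref{lemlip} and Corollary~\ref{cor14}. You then apply Picard--Lindel\"of and identify $P_t=(Y_t)_\sharp P_0$ by uniqueness for the continuity equation. The paper cites Santambrogio for the push-forward being a solution and simply asserts uniqueness. Your appeal to the Ambrosio--Gigli--Savar\'e representation result, with its hypothesis $\int_0^{1-\delta}\!\int\|v_\star\|\,\diff P_t\,\diff t<\infty$ checked, is a more careful version of the same step; so is your duality fallback.

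The first half is where you genuinely differ. The paper obtains $\varrho_t$ by Fourier inversion, after bounding $\int|\varphi_t(\kappa)|\,\diff\kappa$ (Lemma~\ref{lem_cond_1}). It then derives the continuity equation by differentiating the inverse-Fourier representation in $t$, turning the factor $\iota\kappa$ into $-\operatorname{div}_x$, and identifying $v_\star\varrho_t$ as the inverse transform of $\E[e^{\iota\kappa^\top X_t}\dot X_t]$. That gives a pointwise identity, but it requires justifying several interchanges of $\partial_t$, $\nabla_x$ and the $\kappa$-integral. These need integrability of $\|\kappa\|\,\|\E[\dot X_t e^{\iota\kappa^\top X_t}]\|$ in $\kappa$, which the Gaussian factor $e^{-\beta_t^2\|\kappa\|^2/2}$ supplies.

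You instead read the density off the explicit Gaussian mixture. You get the weak form in one line from $\frac{\diff}{\diff t}\E\phi(X_t)=\E[\nabla\phi(X_t)\cdot\dot X_t]$ and the tower property, needing only dominated convergence. This is shorter, produces exactly the weak formulation the uniqueness theorem uses, and does not rely on $P_1$ being empirical. You also verify the closed form of $v_\star$ via Bayes' rule, which the paper does not do inside this proof (cf.\ Lemma~\ref{lemlip}).

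Two small slips to fix:
\begin{enumerate}
\item $\beta_t\ge\beta_{1-\delta}$ presumes $\beta$ is monotone. All you need is $\min_{[0,1-\delta]}\beta_t>0$, which D~\ref{a2}(a) gives.
\item With a space--time test function $\phi(x,t)$, the derivative of $\E\phi(X_t,t)$ also contains $\E\,\partial_t\phi(X_t,t)$. Alternatively, test with $\phi\in C_c^\infty(\Real^D)$ and state the equation in the distributional-in-$t$ form.
\end{enumerate}
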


Flow matching trains a neural network $v : \mathbb{R}^D \times [0,1] \to \mathbb{R}^D$ to approximate $v_\star$ by minimizing 
   \( \mathcal{L}(v) = \int_0^1 \mathbb{E}\|v(X_t, t) - v_\star(X_t,t)\|^2 \diff t.\)
A key observation of \citet{lipman2023flow} is that minimizing $\mathcal{L}(v)$ is equivalent (Lemma~\ref{cftofm}) to minimizing the conditional flow matching objective
\begin{equation}
    \mathcal{L}_{\mathrm{CFM}}(v) = \int_0^1 \mathbb{E}\|v(X_t, t) - \dot{X_t}\|^2 \diff t = \int_0^1 \mathbb{E}\left\|v(X_t, t) - (\da_t X_1 + \db_t X_0)\right\|^2 \diff t,
\end{equation}
which can be estimated from data without knowledge of $v_\star$. Substituting $\pemp$ for $\pd$ gives the empirical objective, $\mathcal{L}_\text{CNF}(v) = \frac{1}{n}\sum_{i=1}^n \int_0^1 \mathbb{E}\left[\|v(X_{i,t}, t) - (\da_t x_i + \db_t X_0)\|^2\right] \diff t,$
where $X_{i,t} = \alpha_t x_i + \beta_t X_0$, $X_0 \sim P_0$. Since this integral is not computed exactly in practice, we study its numerical approximation 
\begin{equation}\label{obj_3}
    \hat{\cL}_n(v) = \frac{1}{n}\sum_{i=1}^n   \sum_{k = 0}^{N-1} (t_{k+1} - t_k)\mathbb{E}\|v(X_{i, t_k}, t_k) - (\da_{t_k} x_i + \db_{t_k} X_0)\|^2,
\end{equation}
where $0 = t_0 < t_1 < \dots < t_N = 1-\delta$ partitions $[0,1-\delta]$; we use a uniform step size, $t_{k+1} = t_k + h$ for $k \in [N-1]$. The expectation in \eqref{obj_3} is typically further approximated via Monte Carlo samples
   \( X_0^{(j)} \sim \cN(0,I_D), \, X_1^{(j)} \sim \operatorname{Unif}\left(\{X_1, \dots X_n\}\right), \, X^{(j)}_t = \alpha_t X_1^{(j)} + \beta_t X_0^{(j)},\)
giving the Monte Carlo estimate 
\begin{equation}\label{obj_4}
    \hat{\cL}^{\text{mc}}_{n,m}(v) = \frac{1}{m}\sum_{j=1}^m   \sum_{k = 0}^{N-1} (t_{k+1} - t_k) \left\|v(X_{t_k}^{(j)}, t_k) - (\da_{t_k}X_1^{(j)} + \db_{t_k} X_0^{(j)})\right\|^2.
\end{equation}
 The resulting velocity field estimates are 
\begin{equation}
    \label{estimates}
    \hv_n = \argmin_{v \in \cV} \hat{\cL}_n(v) \quad \text{and} \quad \hv_{n,m}^{\text{mc}} = \argmin_{v \in \cV} \hat{\cL}_{n,m}^{\text{mc}}(v).
\end{equation}
\paragraph{Stochastic Sampling}
After the velocity field is learned, one can generate new samples by following the ODE,
\(\diff X_t = \hv(X_t,t) \diff t, \; X_0 \sim \cN(0,I_D).\)
However, analyzing the above ODE is rather difficult, as stability of the flow relies critically on the regularity of the learned velocity field. In particular, standard perturbation arguments require $\hv(\cdot,t)$ to be uniformly globally Lipschitz, and the error between trajectories generated by the learned and optimal velocity fields typically grows according to Gr\"onwall's inequality, leading to an exponential dependence on the Lipschitz constant. Since this Lipschitz constant may potentially blow up as $t \to 1$, these bounds become increasingly loose near the terminal time, making it difficult to obtain meaningful control of the Wasserstein distance between the generated and target distributions. The existing literature artificially imposes a boundedness on the Lipschitz constant of the learned velocity field \citep{kumar2026flowmatchingadaptivemanifold,gao2024convergence}. Alternatively, one can perform the generation through an SDE. This is because the Fokker-Plank equation corresponding to \eqref{flow_ode_2},
\begin{align*}
    \partial_t \varrho_t = & -\operatorname{div}_x (\varrho_t v_\star) = - \operatorname{div}_x \left(\varrho_t \left( v_\star + \frac{1}{2} \sigma^2 \nabla \log \varrho_t \right) \right) + \frac{1}{2}\sigma^2 \Delta \varrho_t,\vspace{-5pt}
\end{align*}
which is satisfied by the SDE,
\(\diff X_t = \left(v_\star(X_t, t) + \frac{\sigma}{2} \nabla \log \varrho_t(X_t) \right) \diff t + \sigma \diff W_t.\)
A formal statement that $\varrho_t$ satisfies the above SDE appears in Appendix~\ref{sec:ode-sde-equivalence}. Here, $\sigma>0$. Since, \(v_\star(x,t)   =  \frac{\da_t}{ \alpha_t} x + \beta_t^2 \left(\frac{\da_t}{\alpha_t} - \frac{\db_t}{\beta_t}\right) \nabla \log \varrho_t(x),\)
One can generate samples by following the SDE,
\begin{equation}
    \diff X_t =  \left(a_{t} v_\star(X_{t}, {t}) - b_t X_t \right) \diff t+ \sigma \ \diff W_t, \quad 0 \le t \le 1-\delta, \vspace{-2pt} \label{eq:sde}
\end{equation}
where, let $a_t = 1+ \left(2\beta_t(\dot\alpha_t\beta_t-\dot\beta_t\alpha_t)\right)^{-1}\sigma^2\alpha_t$
and $b_t = \left(2\beta_t(\dot\alpha_t\beta_t-\dot\beta_t\alpha_t)\right)^{-1}\sigma^2\dot\alpha_t$.
After the velocity field is learned, one generates new samples through the Euler--Maruyama discretization of the SDE as, 
\begin{align}
    \diff \hX_t = \left(a_{t_k} \hv(\hX_{t_k}, {t_k}) - b_t \hX_t \right) \diff t + \sigma \ \diff W_t, \quad t_k \le t < t_{k+1}, 
    \label{exponential_integrator}
\end{align}
where $\hv$ is either $\hv_n$ or $\hv_{n,m}^{\text{mc}}$. The distribution of $\hX_{1-\delta}$ is denoted as $\hP_{1-\delta}(\hv) := \operatorname{Law}\left( \hX_{1-\delta}\right)$. Letting $B(s,t) = \int_s^t b_r \diff r$, the solution to \eqref{exponential_integrator} satisfies
\begin{align*}
    \hX_{t_{k+1}} = e^{-B(t_k, t_{k+1})} \hX_{t_k} + a_{t_k} \hv(\hX_{t_k}, {t_k})\int_{t_k}^{t_{k+1}} e^{-B(s,t_{k+1})} \diff s  + \sigma \ \xi_k \sqrt{\int_{t_k}^{t_{k+1}} e^{-2B(s,t_{k+1})} \diff s},
\end{align*}
where $\xi_k \sim \cN(0,I_D)$. Such a data generation procedure has been used by practitioners such as \citet{singh2024stochastic,ma2024sit}, who report empirical superiority compared to deterministic sampling in flow-matching. The goal of this paper is to analyze the distance of $\hP_{1-\delta}(\hv) = \operatorname{Law}\left(\hX_{1-\delta}\right)$, for both the estimators defined in \eqref{estimates}.
\subsection{Intrinsic Data Dimension}
\label{sec_intrin}
Several notions of the \emph{effective dimension} of a probability measure formalize the intuition of intrinsic dimension, most characterizing the growth rate of covering numbers on the logarithmic scale. Let $(\cS, \varrho)$ be a Polish space endowed with the $\ell_\infty$ metric and let $\mu$ be a probability measure on $\cS$. Recall the $(\epsilon, \tau)$-covering number of $\mu$ \citep{posner1967epsilon}, \(\sN_\epsilon(\mu, \tau) = \inf\{ \cN(\epsilon; S, \varrho) : S \subseteq \cS,\, \mu(S) \geq 1 - \tau \},\) the minimal number of $\epsilon$-balls needed to cover a set carrying at least $1 - \tau$ of the mass of $\mu$.

A natural measure of dimensionality is the \emph{upper Minkowski dimension} \[\overline{\mathrm{dim}}_M(\mu) = \limsup_{\epsilon \downarrow 0} \left(\log(1/\epsilon)\right)^{-1}\log \cN(\epsilon;\, \mathrm{supp}(\mu),\, \ell_\infty),\] which depends only on the metric entropy of the support and so requires no smoothness or manifold structure, accommodating irregular geometries such as fractals. Its statistical implications are well established: covering numbers of smooth function classes are governed by the Minkowski dimension of their domain \citep{kolmogorov1961}, and deep networks, WGANs and Wasserstein Autoencoders all attain rates reflecting it rather than the ambient dimension \citep{JMLR:v21:20-002,JMLR:v23:21-0732,chakraborty2024a}. Being a property of the support alone, however, it ignores how mass is distributed and can be as large as $D$ even when $\mu$ concentrates near a low-dimensional region \citep{JMLR:v26:24-0054}. \citet{weed2019sharp} therefore refined Dudley's \citep{dudley1969speed} entropic dimension into a notion sensitive to the mass distribution of $\mu$, and showed that it governs the convergence rate of the empirical Wasserstein distance.

\begin{defn}[Wasserstein Dimension~\citep{weed2019sharp}]\label{def1} \normalfont
For $p > 0$, the $p$-upper Wasserstein dimension of $\mu$ is
\(d^\ast_p(\mu) = \inf \left\{ s > 2p : \limsup_{\epsilon \downarrow 0} \left(\log(1/\epsilon)\right)^{-1}\log \sN_{\epsilon}\!\left(\mu,\, \epsilon^{\frac{sp}{s-2p}}\right) \leq s \right\},\) and the lower Wasserstein dimension is \(d_\ast(\mu) = \lim_{\tau \downarrow 0}\, \liminf_{\epsilon \downarrow 0} \frac{\log \sN_\epsilon(\mu, \tau)}{\log(1/\epsilon)}.\)
\end{defn}

\citet{weed2019sharp} established that these jointly control the convergence rate of the empirical Wasserstein distance, \(n^{-1/d_\ast(\mu)} \lesssim \E\, \fW_p(\hat{\mu}_n, \mu) \lesssim n^{-1/d^\ast_p(\mu)}\). The upper Wasserstein dimension has recently been used to bound error rates for GANs \citep{JMLR:v26:24-0054} and score-matching diffusion models \citep{chakraborty2026diffusion}. 
 \section{A Proof of Concept Experiment}
\begin{figure}[t]
    \centering
    \begin{subfigure}[t]{0.3\textwidth}
        \includegraphics[width=\textwidth]{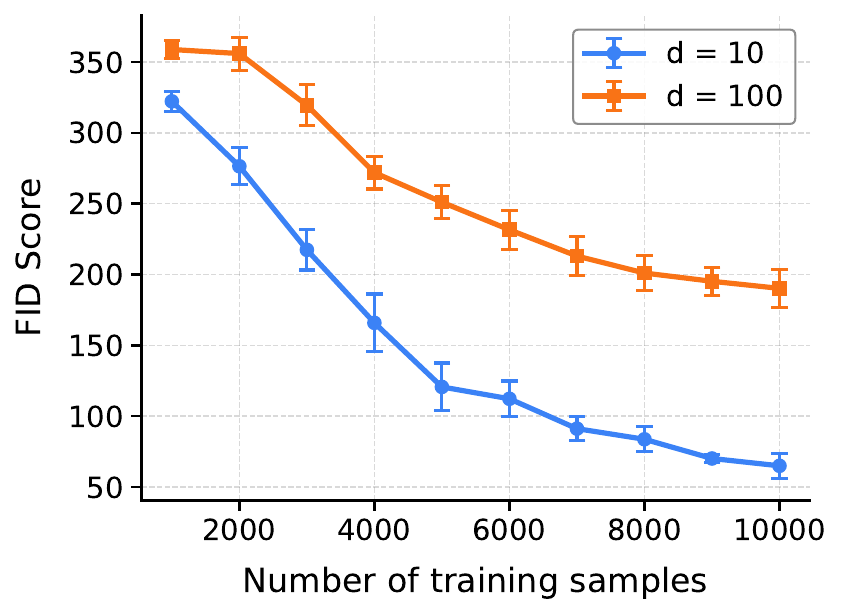}
        \caption{Results on the basenji class}
        \label{fig:sub1}
    \end{subfigure}
    \hfill 
    \begin{subfigure}[t]{0.3\textwidth}
        \includegraphics[width=\textwidth]{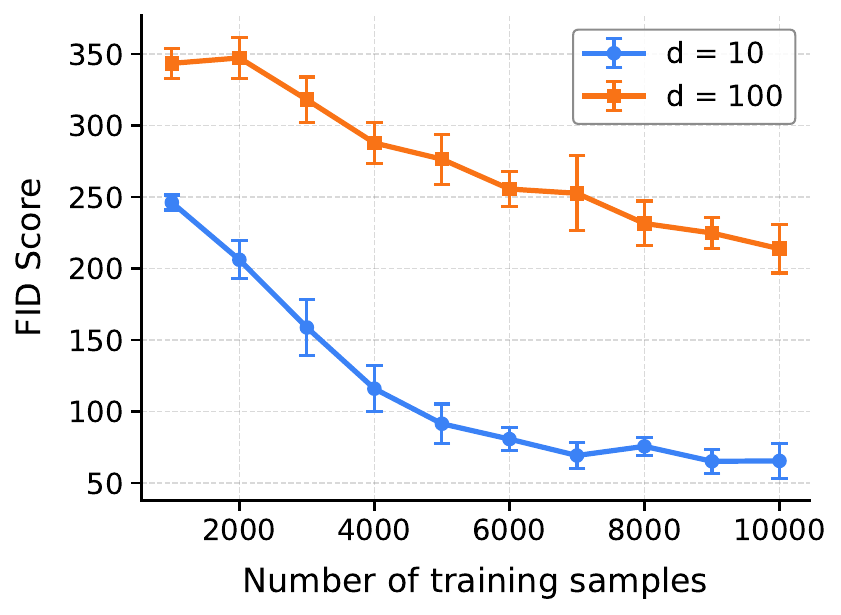}
        \caption{Results on the bubble class}
        \label{fig:sub2}
    \end{subfigure}
    \hfill 
    \begin{subfigure}[t]{0.3\textwidth}
        \includegraphics[width=\textwidth]{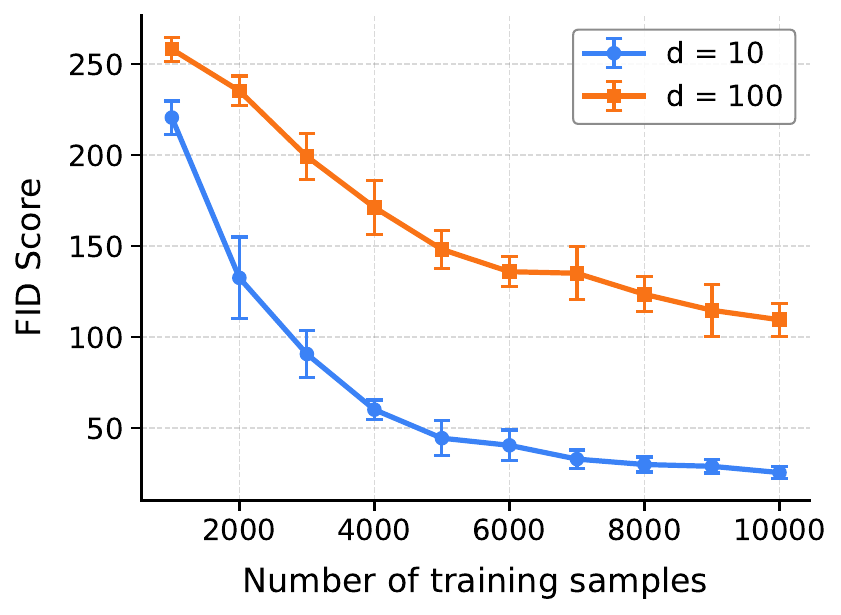}
        \caption{Results on the goldfish class}
        \label{fig:sub3}
    \end{subfigure}
    \caption{Average generalization error (in terms of FID scores) for different values of $n$ for flow-matching. The error bars denote the standard deviation out of $10$ replications. 
    }
    \label{fig:main}
\end{figure}
Before presenting our theoretical analysis, we give an experiment illustrating that Flow Matching (FM) error rates depend primarily on the \emph{intrinsic} dimension of the data, not the ambient dimension. We establish finite-sample generalization bounds using the Fokker-Planck equivalent SDE, bypassing artificial Lipschitz assumptions on the velocity field. Empirically, probability-flow ODE simulations confirm that our predicted intrinsic-dimensional error decay holds under standard deterministic sampling. Since intrinsic dimensionality is difficult to control in real image data, we follow the synthetic construction of \citet{pope2020intrinsic} and \citet{chakraborty2024a}: we use a pre-trained BigGAN \citep{brock2018large} (128-dim latent space, $128 \times 128 \times 3$ output, trained on ImageNet \citep{deng_imagenet}) to generate $11{,}000$ images each for the \texttt{basenji}, \texttt{bubble}, and \texttt{goldfish} classes, fixing all but $d \in \{10, 100\}$ latent coordinates to zero so that generated data lie on a $d$-dimensional manifold. Images are downsampled to $28 \times 28$ via bilinear interpolation. For each $d$, we train a standard UNet FM model with the usual objective $\mathbb{E}\| v_t - v_\theta(x_t, t \cdot 1000)\|_2^2$, varying the training set size over $\{1000, 2000, \ldots, 10000\}$ (Adam, lr $10^{-4}$, batch size $64$, $50$ epochs; NVIDIA RTX 6000 Blackwell GPUs). At evaluation, we simulate the learned probability-flow ODE ($100$ steps) to draw $1{,}000$ samples and compute FID \citep{heusel2017gans} against a held-out set of equal size (indices $10{,}000$--$11{,}000$), averaged over $10$ repetitions.  
 The empirical results, presented in Fig.~\ref{fig:main}, clearly validate our hypothesis. As the number of training samples increases from $1000$ to $10000$, the FM model trained on data with intrinsic dimension $d = 10$ consistently achieves substantially lower FID scores than the model trained on data with intrinsic dimension $d = 100$. The gap persists across large sample sizes, and in fact widens slightly as more data become available, indicating that lower intrinsic-dimensional structure yields faster error decay under flow-based generative learning. This behavior aligns precisely with our theoretical predictions: the intrinsic dimension, rather than the ambient pixel dimension, governs the sample complexity and achievable error rates of flow-matching generative models. All code pertaining to the experiment is publicly available through \url{https://github.com/saptarshic27/fmintrinsic}.
\section{Theoretical Analyses}
\subsection{Main Result and Inference}
To begin our theoretical analyses, we make the following standard assumption \citep{JMLR:v23:21-0732,oko2023diffusion,JMLR:v26:24-0054} that the data is independent and identically distributed and supported on the unit hypercube, $[0,1]^D$.
\begin{assumption}\label{a1}
    We assume that $x_1, \dots, x_n $ are independent and identically distributed according to the distribution $\pd$, such that $\pd\left([0,1]^D\right) =1$.
\end{assumption}
Our approach relaxes several restrictive assumptions found in recent literature. Unlike \citet{kumar2026flowmatchingadaptivemanifold}, we do not require the existence of densities, smoothness, or a support characterized as a smooth manifold with bounded reach. Additionally, we avoid the uniform score-Lipschitzness and log-concavity of the target measure imposed by \citet{gao2024convergence} and \citet{zhou2025an}. Further, unlike \citet{fukumizu2025flow}, we do not require that  $\pd$ to have a lower-bounded smooth density.

\paragraph{Design Choices} To ensure that the interpolating path is regular enough, we impose the following design choices on $\alpha_t$, and  $\beta_t$. Design choices D~\ref{a2}(a)--(c) are satisfied by the commonly used interpolation schedules, $(\alpha_t = t$, $\beta_t = 1-t)$, $(\alpha_t = t$, $\beta_t = \sqrt{1-t^2})$.
\begin{design} \label{a2}We assume that for any $t, s \in [0, 1-\delta]$, the following hold:
    \begin{itemize}
    \item[(a)] $ \left|\frac{\db_t}{\beta_t}\right|, \ \left|\frac{\da_t \beta_t - \db_t \alpha_t}{\beta_t}\right|, \ a_t^2, \ |b_t|,  \ \beta_t^{-1}$, $\left|\frac{\ddot \beta_t}{\beta_t} - \left(\frac{\dot{\beta}_t}{\beta_t}\right)^2\right| \; \left| \beta_t^{-4}\left(3\alpha_t\beta_t\dot\alpha_t\dot\beta_t \;-\; \beta_t^2\dot\alpha_t^2 \;-\; 2\alpha_t^2\dot\beta_t^2\right)\right| $,\\ $\left| \ddot\alpha_t - \alpha_t\ddot\beta_t\,\beta_t^{-1} - \dot\alpha_t\dot\beta_t\,\beta_t^{-1} + \alpha_t\dot\beta_t^2\,\beta_t^{-2} \right|$, $\left| \alpha_t\big(\dot\alpha_t\beta_t-\dot\beta_t\alpha_t\big)^2\,\beta_t^{-4}\right| \lesssim \delta^{-\kappa_1}$.
        \item[(b)] $|\alpha_t - \alpha_s|, \ |\beta_t - \beta_s|, \,  |a_t - a_s| \lesssim \delta^{-\kappa_2} |t-s| $. 
        \item[(c)] $1-\alpha_{1-\delta}, \ \beta_{1-\delta} \lesssim \delta^{\kappa_3}$.
        \end{itemize}
\end{design}
Note that for the linear interpolation, $(\alpha_t = t$, $\beta_t = 1-t)$, it suffices the above hold with $\kappa_1 = 4$, $\kappa_2 = 2$ and $\kappa_3 = 1$, where as for the Variance Preserving (VP) choice $(\alpha_t = t$, $\beta_t = \sqrt{1-t^2})$, the above hold with  $\kappa_1 = 3$, $\kappa_2 = 0.5$ and $\kappa_3 = 0.5$.

Under Assumption~\ref{a1}, we establish high-probability upper bounds on the Wasserstein-$p$ distance between the flow-matching estimators $\hP_{1-\delta}(\hv_n)$ and $\hP_{1-\delta}(\hv^{\text{mc}}_{n,m})$ and the target distribution $\pd$. These bounds depend on the training sample size through a rate exponent governed solely by the upper $p$-Wasserstein dimension of $\pd$, rather than the ambient dimension $D$. Consequently, with data-driven choices of the discretization step-size $h$, early stopping time $\delta$, and velocity-field function class $\cV$, minimizing the flow-matching MSE loss yields an estimator that automatically adapts to the low-dimensional structure of $\pd$. Theorem~\ref{main_hp} derives a high-probability bound in terms of the Wasserstein-$p$ distance. 
\begin{restatable}[Main Theorem]{theorem}{mainthm} 
    \label{main_hp}
    Suppose that $d > d^\ast_p\left(\pd\right)$ and Assumption~\ref{a1} holds. Then under Design Choices \ref{a2}, for any $p \ge 1$, there exists a constant $n_0 \in \fN$, such that if $n \ge n_0$, and $\cV \equiv \cR \cN\left(W, L, B, K\right)$ with $\delta \asymp n^{-\frac{1}{d \kappa_3}}$, $h \asymp n^{-\frac{2p(2p+1)}{d} - \frac{10\kappa_1+2\kappa_2}{d\kappa_3}}$, $M \asymp n^{\frac{2p}{d}}$, then, with probability at least $1-\xi$, 
    \begin{align}
        \fW_p\left(\pd, \sT_M\left( \hP_{1-\delta}(\hv_n)\right)\right) \lesssim n^{-1/d} + n^{-\frac{1}{2p}} \log^{\frac{1}{2p}}(1/\xi). \label{f1}
    \end{align}
    Here, $W \asymp n^{\frac{p(2p+1)(D+2)}{d} \,+\, \frac{(20+7D)\kappa_1+4\kappa_2}{2d\kappa_3}}\log^{D/2+1} n$, $L \asymp \log n$, $B \asymp n^{\left(\frac{2p(2p+1)}{d}+\frac{20\kappa_1+4\kappa_2}{2d\kappa_3}\right)\,\vee\,\frac{(3\kappa_1)\vee 1}{d\kappa_3}}\sqrt{\log n}$, and $K \asymp n^{\frac{(3\kappa_1) \vee 1}{d\kappa_3}} \sqrt{\log n}$. Further, if
    \(m \asymp n^{\frac{p(2p+1)(D+6)}{d} + \frac{(24+7D)\kappa_1+4\kappa_2+4\left((6\kappa_1)\vee 2\right)}{2d\kappa_3}}\log^2 n\left( \log^{D/2+3} n + \log(1/\xi)\right),
    \)
    with probability at least $1-\xi$,
    \begin{align}
           \fW_p\left(\pd, \sT_M \left(\hP_{1-\delta}(\hv^{\text{mc}}_{n,m})\right)\right) \lesssim n^{-1/d} + n^{-\frac{1}{2p}}\log^{\frac{1}{2p}}(1/\xi).
    \end{align}
\end{restatable}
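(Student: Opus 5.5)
The plan is to split the error by the triangle inequality into four pieces:
\[
\fW_p\big(\pd,\sT_M(\hP_{1-\delta}(\hv))\big)\le \fW_p(\pd,\pemp)+\fW_p(\pemp,P_{1-\delta})+\fW_p\big(P_{1-\delta},\sT_M(P_{1-\delta})\big)+\fW_p\big(\sT_M(P_{1-\delta}),\sT_M(\hP_{1-\delta}(\hv))\big).
\]
Here $P_{1-\delta}$ is the law at time $1-\delta$ of the exact flow built from $\pemp$; by Theorem~\ref{existence} it is the law of $\alpha_{1-\delta}x_I+\beta_{1-\delta}X_0$.

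\textbf{Statistical term.} For the first term I use the empirical result of \citet{weed2019sharp}, $\E\fW_p(\pemp,\pd)\lesssim n^{-1/d}$ for $d>d_p^\ast(\pd)$, since $n\ge n_0$. To upgrade this to high probability, I apply McDiarmid to $\fW_p^p(\pemp,\pd)$. Because the support lies in $[0,1]^D$, changing one sample moves this quantity by at most $D^{p/2}/n$. This yields the deviation term $n^{-1/(2p)}\log^{1/(2p)}(1/\xi)$ after taking $p$-th roots.

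\textbf{Deterministic terms.} The second term is handled by the explicit coupling, which gives a bound of order $(1-\alpha_{1-\delta})+\beta_{1-\delta}\sqrt D\lesssim\delta^{\kappa_3}\asymp n^{-1/d}$ by D~\ref{a2}(c). The truncation term is bounded by Gaussian tails, $(\E\|X\|^p\one\{\|X\|>M\})^{1/p}$, which is negligible for $M\asymp n^{2p/d}$.

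\textbf{Sampling term.} For the last term, note that $\sT_M$ is $1$-Lipschitz in the sense that it does not increase the relevant transport cost beyond a factor, so it suffices to control the sampler error. Because truncation makes both laws bounded by $M$, I bound
\[
\fW_p^p\le (2M)^{p}\,\mathrm{TV}\le (2M)^p\sqrt{\tfrac12\mathrm{KL}}.
\]
The KL term is controlled by Girsanov applied to the SDE \eqref{eq:sde} versus its exponential-integrator discretization \eqref{exponential_integrator}. Both start from $\cN(0,I_D)$, and the diffusion coefficient is $\sigma$. This gives
\[
\mathrm{KL}\lesssim \sigma^{-2}\sum_k\int_{t_k}^{t_{k+1}} a_{t_k}^2\,\E\|\hv(X_{t_k},t_k)-v_\star(X_t,t)\|^2\,dt .
\]
I split this into three parts: the estimation error $\sum_k h\E\|\hv-v_\star\|^2(t_k)$, the time-regularity error $\E\|v_\star(X_{t_k},t_k)-v_\star(X_t,t)\|^2$, and the $a_t$ variation. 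The time-regularity error is handled by an Itô/Lipschitz-in-time estimate on the explicit $v_\star$ from Theorem~\ref{existence}; the derivative quantities bounded in D~\ref{a2}(a),(b) are exactly what appears there. This gives a contribution $\lesssim h\,\delta^{-c\kappa_1-c'\kappa_2}\mathrm{poly}(D)$, and the choice of $h$ makes it $\lesssim n^{-2p(2p+1)/d}$. After multiplying by $M^{2p}$ and taking roots, this is $\lesssim n^{-1/d}$.

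\textbf{Estimation error.} By Lemma~\ref{cftofm}, $\hat\cL_n(v)-\hat\cL_n(v_\star)=\sum_k h\,\E\|v-v_\star\|^2(t_k)$. Here the expectation is only over $X_0$ and the empirical index, and $v_\star$ is the empirical-target field. Hence $\hv_n$ achieves estimation error equal to the best approximation error of $v_\star$ over $\cV$; there is no generalization gap, because the loss \eqref{obj_3} is exact with respect to $\pemp$.

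I would then approximate $v_\star(\cdot,t_k)$ by ReLU networks on a ball of radius $\asymp\sqrt{\log n}$, which carries all but a polynomially small Gaussian-mixture mass. On that ball $v_\star$ is a softmax of Gaussians with Lipschitz constant $\lesssim\delta^{-\kappa_1}$ and is bounded by $\delta^{-\kappa_1}\sqrt{\log n}$, which fixes $K$. Standard ReLU approximation of Lipschitz/smooth functions in $D+1$ variables (with the time input and clipping) to accuracy $\epsilon\asymp n^{-2p(2p+1)/d}\delta^{c\kappa_1}$ gives the stated $W,L,B$. Outside the ball, the error is controlled by the output bound $K$ and Gaussian tails.

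\textbf{Monte Carlo estimator.} For $\hvmc$, I add a uniform deviation bound $\sup_{v\in\cV}|\hat\cL^{\mathrm{mc}}_{n,m}(v)-\hat\cL_n(v)|$, conditional on the data. I would use a Bernstein/covering argument with ReLU covering numbers $\log\cN\lesssim WL\log(BWn)$ and losses truncated at the level $K^2+\delta^{-2\kappa_1}\log m$. This is where the polynomial requirement on $m$ comes from.

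\textbf{Main obstacle.} The hardest step is the time-regularity bound for $v_\star$ near $t=1-\delta$ with only polynomial $\delta$-dependence, without any Lipschitz assumption on $\hv$. I would compute $\partial_t v_\star$ and $\nabla v_\star$ explicitly as posterior means and covariances of $x_I$ given $X_t$. These are bounded because $x_i\in[0,1]^D$, giving factors of $\beta_t^{-2}$ that D~\ref{a2}(a) converts into powers of $\delta^{-\kappa_1}$.
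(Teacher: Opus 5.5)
Your proposal follows the paper's proof essentially step for step. It uses the same four-term split with $\fW_p \le 2M\,\mathrm{TV}^{1/p}$, TV contraction and Pinsker (Lemma~\ref{w2}), and McDiarmid on $\fW_p^p$ plus the Weed--Bach rate (Lemma~\ref{wcontrol}). It applies Girsanov with the drift difference split into discretization, $a_t$-variation and estimation terms (Lemmas~\ref{lem_oracle} and~\ref{lemdis}). It uses the exact identity $\hat\cL_n(v)-\hat\cL_n(v_\star)=\sum_k h\,\E\|v-v_\star\|^2$, so $\hv_n$ pays only approximation error. It relies on posterior-moment formulas for $\nabla v_\star$ and $\partial_t v_\star$ (Lemmas~\ref{lemlip} and~\ref{lip_general}), and on a cover-plus-concentration argument for $\hvmc$. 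Lemma~\ref{mc_1} uses sub-exponential Bernstein rather than your truncation, which is immaterial.

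The one real departure is your joint approximation in $(x,t)$. The paper instead builds $N+1$ spatial approximants at the grid times and gates them with ReLU bumps in $t$; that construction is where the factor $N=h^{-1}$ in $W$ and in $B$ comes from. Using $\|\partial_t v_\star\|\lesssim\delta^{-\kappa_1}(1+\|x\|)$, your route gives a smaller network that still lies in the stated class, provided the accuracy is chosen correctly (see below).

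The approximation bookkeeping needs correcting to reach the stated $W$ and $K$.

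First, your Lipschitz bound $\delta^{-\kappa_1}$ is not what D~\ref{a2}(a) delivers. The coefficient of $\operatorname{Var}(X_1\mid X_t=x)$ in $\nabla v_\star$ is $\alpha_t(\da_t\beta_t-\db_t\alpha_t)\beta_t^{-3}$. Bounding it through $\beta_t^{-2}\cdot|(\da_t\beta_t-\db_t\alpha_t)/\beta_t|$ gives $\operatorname{Lip}(v_\star(\cdot,t))\lesssim D\delta^{-3\kappa_1}$ (Corollary~\ref{cor14}). This is the source of $\delta^{-3\kappa_1 D}$ in $W$ and of $\delta^{-(3\kappa_1)\vee 1}$ in $K$ and $B$. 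Your own last paragraph, with its $\beta_t^{-2}$ factors, points the same way.

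Second, the estimation error enters the KL only as $a_{t_k}^2\,\E\|\hv-v_\star\|^2\lesssim\delta^{-\kappa_1}\epsilon^2$. So the sup-norm accuracy required is $\epsilon\asymp n^{-p(2p+1)/d}\delta^{\kappa_1/2}$, which is the paper's choice and roughly the square root of what you propose. The $p$-dependent part $p(2p+1)(D+2)/d$ of the exponent of $W$ is exactly the contribution of $N\epsilon^{-D}$ at this accuracy. With your $\epsilon$ it would become $2p(2p+1)(D+1)/d$, in either the per-time or the joint construction.

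Two smaller corrections:
\begin{itemize}
\item $\sT_M$ is not Lipschitz, since it is discontinuous on $\{\|z\|=M\}$. All you actually use is that TV contracts under it.
\item McDiarmid applied to $\fW_p^p$ needs $\E\fW_p^p(\pemp,\pd)\lesssim n^{-p/d}$, which is the form Weed and Bach prove, rather than $\E\fW_p\lesssim n^{-1/d}$.
\end{itemize}
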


\paragraph{Inference for Data Supported on a Manifold}
Recall that a set $\sA$ is $\tilde d$-regular w.r.t.\ the $\tilde d$-dimensional Hausdorff measure $\fH^{\tilde d}$ if $\fH^{\tilde d}(B_\varrho(x,r)) \asymp r^{\tilde d}$ for all $x \in \sA$ \citep[Definition~6]{weed2019sharp}. By \citet[Proposition~8]{weed2019sharp}, if $\operatorname{supp}(\pd)$ is $\tilde d$-regular and $\pd \ll \fH^{\tilde d}$, then $d_p^\ast(\pd) = \tilde d$ for every $p < \tilde d/2$; in particular this holds for every compact $\tilde d$-dimensional differentiable manifold, nonempty compact convex set of dimension $\tilde d$, its relative boundary, and self-similar sets of similarity dimension $\tilde d$. For such supports and any $1 \le p < \tilde d/2$, Theorem~\ref{main_hp} applies with every $d > \tilde d$, giving $\cO\big(n^{-1/d} + n^{-1/2p}\log^{1/2p}(1/\xi)\big)$; as $d \downarrow \tilde d$ this matches the minimax rate for distribution estimation on $\tilde d$-dimensional manifolds up to the slack $d - \tilde d$ \citep{chakraborty2025minimax}, and improves on \citet{kumar2026flowmatchingadaptivemanifold}, whose comparable guarantee is not minimax optimal and additionally requires bounded reach and H\"older-smooth densities.

\paragraph{Comparison with Existing Bounds}
Unlike \citet{gao2024convergence,zhou2025an}, whose $\cO(n^{-1/(D+5)})$-type rates depend on the log-concavity or Lipschitzness of the score/velocity field and scale with the ambient dimension $D$, and unlike \citet{fukumizu2025flow}, who require Besov smoothness of the density, our bound in Theorem~\ref{main_hp} holds for every $p \ge 1$, is non-asymptotic and high-probability rather than in-expectation, and imposes no density, smoothness, or Lipschitzness condition on $\pd$ or the velocity class. The claim concerns the exponent of $n$; the prescribed architecture still depends on $D$, and more than polynomially (see below). This places flow matching on the same footing as the dimension-adaptive guarantees recently established for GANs \citep{JMLR:v26:24-0054} and diffusion models \citep{chakraborty2026diffusion}, while relaxing the compact-support and manifold-structure assumptions common to both lines of work.

\paragraph{Choice of Early Stopping and Discretization}
Theorem~\ref{main_hp} sets the early-stopping offset $\delta \asymp n^{-1/(d\kappa_3)}$, which by D~\ref{a2}(c) keeps the truncation error $\fW_p(P_{1-\delta}, \pd) = \cO(\delta^{\kappa_3})$ on the same order as the sampling error, while preventing $v_\star$ from blowing up as $t \to 1^-$. Because the sampler is an SDE with a nondegenerate diffusion coefficient, its increments over a step are of mean square order $h$ rather than $h^2$, and the cumulative Euler--Maruyama error is $\cO(\delta^{-(10\kappa_1+2\kappa_2)}h)$ (Lemma~\ref{lemdis}); matching this to the sampling error gives the step size $h \asymp n^{-\frac{2p(2p+1)}{d} - \frac{10\kappa_1+2\kappa_2}{d\kappa_3}}$ above, and a number of partitions $N \asymp h^{-1}$ that still grows only polynomially in $n$. The proof also fixes internally the approximation accuracy $\epsilon$ and the excursion probability $\eta$, both polynomial in $n$; the latter sets the radius $R \asymp \sqrt{D\log(1/\eta)} \asymp \sqrt{D\log n}$ of the box on which $v_\star$ is approximated.

\paragraph{Network and Monte Carlo Scaling}
The velocity network width $W \asymp n^{\frac{p(2p+1)(D+2)}{d} + \frac{(20+7D)\kappa_1+4\kappa_2}{2d\kappa_3}} \log^{D/2+1} n$ is polynomial in $n$ for each fixed $D$, with an exponent that grows linearly in the ambient dimension $D$ and shrinks as the intrinsic dimension $d$ increases, while depth $L \asymp \log n$ suffices; the caps $B$ and $K$ carry an extra $D\sqrt{\log n}$, the magnitude $v_\star$ attains on the truncation box. The polylogarithmic factors reflect the cost of covering the box $[-R,R]^D$ on which $v_\star$ is approximated, and do not affect the exponent of $n$, which is what governs the rate. The Monte Carlo sample size \(m \asymp n^{\frac{p(2p+1)(D+6)}{d} + \frac{(24+7D)\kappa_1+4\kappa_2+4\left((6\kappa_1)\vee 2\right)}{2d\kappa_3}}\log^2 n\big(\log^{D/2+3} n + \log(1/\xi)\big)\) inherits this $(D,d)$-dependence, inflated by the covering number of $\cV$ and by $K^4$ from the sub-exponential concentration in Lemma~\ref{mc_1}; larger $D/d$ ratios thus demand proportionally more generated samples, mirroring the scaling seen for GANs \citep{JMLR:v26:24-0054,JMLR:v23:21-0732} and diffusion models \citep{chakraborty2026diffusion}.

\subsection{Proof Sketch}
\label{sec_pf_sketch}
 \paragraph{Error Decomposition} To derive an upper bound on the excess risk, we first decompose the KL estimation error into a statistical error due to velocity field estimation and a numerical error resulting from the forward Euler-Maruyama discretization of the SDE. This decomposotion is shown in Lemma~\ref{lem_oracle} with proof appearing in Appendix~\ref{error_decomposition}. Through out the remainder of the paper, all expectations are taken conditioned on the data i.e., $\E (\cdot \mid x_1, \dots, x_n)$.
\begin{restatable}{lemma}{lemoracle}\label{lem_oracle}
Under Assumption~\ref{a1} and Design Choices D~\ref{a2}, for any estimator $\hv \in \cV$ of $v_\star$,
\begin{align}
    & \operatorname{KL}\left(P_{1-\delta} \,\|\, \hP_{1-\delta}(\hv)\right)
    \le  \; \frac{2}{\sigma^2}\sum_{k=0}^{N-1}\int_{t_k}^{t_{k+1}} a_t^2\,
      \E\big\|v_\star(X_t,t) - v_\star(X_{t_k},t_k)\big\|^2 \diff t \nonumber\\
    & + \frac{2}{\sigma^2}\sum_{k=0}^{N-1}\int_{t_k}^{t_{k+1}} (a_t - a_{t_k})^2\,
      \E\big\|v_\star(X_{t_k},t_k)\big\|^2 \diff t  + \frac{1}{\sigma^2}\sum_{k=0}^{N-1} h_k\, a_{t_k}^2\,
      \E\big\|v_\star(X_{t_k},t_k) - \hv(X_{t_k},t_k)\big\|^2,\nonumber
\end{align}
where every expectation is taken under the law of the process $\{X_t\}_{t \in [0,1-\delta]}$ solving \eqref{eq:sde}.
\end{restatable}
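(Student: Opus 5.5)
The plan is to compare path measures via Girsanov and then use the data-processing inequality for the time-$(1-\delta)$ marginals. Let $\bP$ be the law on $C([0,1-\delta];\Real^D)$ of the process $\{X_t\}$ solving \eqref{eq:sde}. By the Fokker--Planck computation in Appendix~\ref{sec:ode-sde-equivalence} and Theorem~\ref{existence}, its marginals are $P_t$ whenever $X_0\sim P_0$. Let $\bQ$ be the law of the Euler--Maruyama process $\hX$ from \eqref{exponential_integrator}, started at the same $P_0$. Both processes have the same diffusion coefficient $\sigma I_D$ and the same linear term $-b_tX_t$. Their drifts therefore differ on $[t_k,t_{k+1})$ only by
\[
\Delta_t = a_t v_\star(X_t,t) - a_{t_k}\hv(X_{t_k},t_k).
\]
Note that the second drift depends on the path at time $t_k$, so it is non-Markovian but still adapted. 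Girsanov's theorem (the version for adapted drifts) then gives
\[
\operatorname{KL}(\bP\|\bQ) = \frac{1}{2\sigma^2}\sum_k\int_{t_k}^{t_{k+1}}\E\|\Delta_t\|^2\diff t .
\]
Because the marginal map is measurable, data processing yields $\operatorname{KL}(P_{1-\delta}\|\hP_{1-\delta}(\hv))\le \operatorname{KL}(\bP\|\bQ)$.

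Next, I decompose the drift difference as
\[
\Delta_t = a_t\big(v_\star(X_t,t)-v_\star(X_{t_k},t_k)\big) + (a_t-a_{t_k})v_\star(X_{t_k},t_k) + a_{t_k}\big(v_\star(X_{t_k},t_k)-\hv(X_{t_k},t_k)\big).
\]
I apply $\|u+v+w\|^2\le 4\|u\|^2+4\|v\|^2+2\|w\|^2$, which follows from $(u+v+w)^2\le 2(u+v)^2+2w^2$ followed by $(u+v)^2\le 2u^2+2v^2$. Multiplying by $1/(2\sigma^2)$ gives the constants $2/\sigma^2$, $2/\sigma^2$ and $1/\sigma^2$. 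The last term does not depend on $t$ within a step, so its integral becomes $h_k a_{t_k}^2\E\|v_\star(X_{t_k},t_k)-\hv(X_{t_k},t_k)\|^2$. All expectations are then under $\bP$, which is exactly the law of the process solving \eqref{eq:sde}, as the statement requires.

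The main obstacle is justifying Girsanov rigorously. That requires well-posedness of \eqref{eq:sde} together with a Novikov-type condition, or else a localization argument. On $[0,1-\delta]$ the mixture formula for $v_\star$ in Theorem~\ref{existence} shows that $v_\star$ is smooth and grows at most linearly, with constants controlled by D~\ref{a2}(a). In particular $\beta_t\gtrsim\delta^{\kappa_1}>0$ and $x_i\in[0,1]^D$ under Assumption~\ref{a1}, so strong solutions exist and have finite moments of all orders. Since $\hv\in\cV$ is a bounded ReLU network (with output bound $K$), the drift difference has finite second moment, but Novikov may still fail. To avoid needing it, I would use the standard route of \citet{pmlr-v202-chen23q}: localize with the stopping times $\tau_R=\inf\{t:\|X_t\|\ge R\}$, apply Girsanov to the stopped processes, and let $R\to\infty$ using lower semicontinuity of KL. This gives the KL inequality directly from finiteness of the right-hand side.
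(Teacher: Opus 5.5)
Your proposal is correct and follows the paper's proof essentially step for step. The shared steps are Girsanov on path space with drift difference $a_t v_\star(X_t,t) - a_{t_k}\hv(X_{t_k},t_k)$, data processing down to the time-$(1-\delta)$ marginals, and the same add-and-subtract splitting, which gives the constants $2/\sigma^2$, $2/\sigma^2$, $1/\sigma^2$. The one difference is how Girsanov is justified: the paper uses Bene\v{s}'s linear-growth criterion to make the stochastic exponential a true martingale (giving equality for the path KL), while your stopping-time localization with lower semicontinuity of KL yields the required inequality equally well.
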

 Using Lemma~\ref{lem_oracle}, we aim to bound the terms individually in the subsequent paragraphs. 
 \paragraph{Discretization Error}
The first in Lemma~\ref{lem_oracle},
\(\sum_{k=0}^{N-1} \int_{t_k}^{t_{k+1}}  \E  \|a_t v_\star(X_t,t) -  a_{t_k}v_\star(X_{t_k}, t_k)\|^2 \diff t\)
captures the \emph{discretization error} incurred when the continuous-time SDE generated by the velocity field $v_\star$ is approximated using the forward Euler-Maruyama scheme. This term measures the cumulative discrepancy between the velocity evaluated at the left endpoint of each interval and the true velocity along the continuous trajectory. Consequently, it quantifies the bias introduced by replacing the exact SDE dynamics with piecewise-constant updates over the partition $\{t_j\}_{j=0}^{N}$. The following lemma shows that this error decreases as the uniform discretization step size $h$ tends to zero.
\begin{restatable}{lemma}{lemdis}
    \label{lemdis}
Under Assumption~\ref{a1} and Design Choices D~\ref{a2}, with uniform step size $h \le 1$, the
following hold, all expectations being taken under the law of the process $\{X_t\}$ solving
\eqref{eq:sde}:
\begin{align}
   \quad & \sum_{k=0}^{N-1}\int_{t_k}^{t_{k+1}} a_t^2\,
    \E\big\|v_\star(X_t,t) - v_\star(X_{t_k},t_k)\big\|^2 \diff t
    \;\lesssim\; D^3\big(\sigma^2 \vee 1\big)\,\delta^{-10\kappa_1}\, h, \label{eq:dis1}\\[2pt]
  \quad & \sum_{k=0}^{N-1}\int_{t_k}^{t_{k+1}} (a_t - a_{t_k})^2\,
    \E\big\|v_\star(X_{t_k},t_k)\big\|^2 \diff t
    \;\lesssim\; D\,\delta^{-2(\kappa_1+\kappa_2)}\, h^2 . \label{eq:dis2}
\end{align}
\end{restatable}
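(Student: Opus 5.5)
Both estimates follow from It\^o's formula applied to $v_\star$ along the process $\{X_t\}$ solving \eqref{eq:sde}. The key fact used throughout is that, by Theorem~\ref{existence}, $X_t\sim P_t$ for every $t$, where $P_t$ is the law of $\alpha_t X_1+\beta_t X_0$ with $X_1\sim\pemp$ supported in $[0,1]^D$.

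\textbf{Bound \eqref{eq:dis2}.} This is the easy part. D~\ref{a2}(b) gives $(a_t-a_{t_k})^2\lesssim \delta^{-2\kappa_2}h^2$ for $t\in[t_k,t_{k+1}]$. It remains to show $\E\|v_\star(X_{t_k},t_k)\|^2\lesssim D\delta^{-2\kappa_1}$. Write $v_\star(x,t)=\beta_t^{-1}\db_t x+\beta_t^{-1}(\da_t\beta_t-\db_t\alpha_t)m_t(x)$, where $m_t(x)$ is a softmax-weighted average of the $x_i$, so $\|m_t(x)\|\le\sqrt D$. Using $\E\|X_t\|^2\le 2(D+\beta_t^2D)\lesssim D$ and D~\ref{a2}(a), we get $\E\|v_\star\|^2\lesssim \delta^{-2\kappa_1}D$. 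Summing over the $N\asymp h^{-1}$ intervals, each of length $h$, gives $\lesssim D\delta^{-2(\kappa_1+\kappa_2)}h^2$.

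\textbf{Bound \eqref{eq:dis1}.} Fix $k$ and $t\in[t_k,t_{k+1}]$, and apply It\^o's formula to $s\mapsto v_\star(X_s,s)$ on $[t_k,t]$:
\[
v_\star(X_t,t)-v_\star(X_{t_k},t_k)=\int_{t_k}^t\Big(\partial_s v_\star+\nabla v_\star\,(a_sv_\star-b_sX_s)+\tfrac{\sigma^2}{2}\Delta v_\star\Big)\diff s+\sigma\int_{t_k}^t\nabla v_\star\,\diff W_s .
\]
By Cauchy--Schwarz on the drift and the It\^o isometry on the martingale,
\[
\E\|\cdot\|^2\lesssim h\int_{t_k}^t\E\|\mathrm{drift}\|^2\diff s+\sigma^2\int_{t_k}^t\E\|\nabla v_\star(X_s,s)\|_F^2\diff s .
\]
The martingale term is the dominant one, of order $h$ rather than $h^2$. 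Integrating over $t$ and summing over $k$ then yields a bound of the form $h\cdot\sup_{s\le 1-\delta}(\dots)$. This requires moment bounds for $\nabla v_\star$, $\partial_s v_\star$ and $\Delta v_\star$.

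These derivatives are explicit. Since $m_t(x)=\E[X_1\mid X_t=x]$ under the Gaussian mixture, we have
\[
\nabla_x m_t=\beta_t^{-2}\alpha_t\operatorname{Cov}(X_1\mid X_t=x),
\]
whose Frobenius norm is at most $\alpha_t\beta_t^{-2}D$ because $X_1\in[0,1]^D$. Second derivatives involve third conditional cumulants, bounded by $\alpha_t^2\beta_t^{-4}D^{3/2}$. The time derivative $\partial_t m_t$ likewise equals a conditional covariance with $\partial_t$ of the exponent, which is quadratic in $(x,x_i)$ with coefficients built from $\dot\alpha_t,\dot\beta_t,\beta_t^{-1}$. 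Combining these with the coefficient combinations listed in D~\ref{a2}(a), which are precisely the terms arising from differentiating $\beta_t^{-1}\db_t$ and $\beta_t^{-1}(\da_t\beta_t-\db_t\alpha_t)$ in $t$, each derivative is bounded pointwise by a polynomial in $\|x\|$ and $D$ times $\delta^{-c\kappa_1}$. Taking expectations under $P_s$ with $\E\|X_s\|^4\lesssim D^2$ gives $\E\|\mathrm{drift}\|^2\lesssim D^3(\sigma^4\vee1)\delta^{-10\kappa_1}$ and $\E\|\nabla v_\star\|_F^2\lesssim D^2\delta^{-4\kappa_1}$. The drift term carries an extra factor $h\le1$, so $\sigma^4h$ is absorbed into $\sigma^2\vee1$ up to constants; alternatively the drift-term constant can simply be kept as written.

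\textbf{Main obstacle.} The main difficulty is this bookkeeping. One must track how many inverse powers of $\beta_t$ (equivalently $\delta^{-\kappa_1}$) each derivative contributes. The worst term is $a_s^2\|\nabla v_\star\|^2\|v_\star\|^2$, which has order $\delta^{-\kappa_1}\cdot\delta^{-4\kappa_1}\cdot\delta^{-2\kappa_1}$ and, together with the $\Delta v_\star$ term squared, produces the exponent $10\kappa_1$. A further technical point is justifying It\^o's formula: $v_\star$ is smooth with polynomial growth on $[0,1-\delta]$, so this follows by localization plus the moment bounds.
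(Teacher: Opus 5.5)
Your proof of \eqref{eq:dis2} is the paper's. Your proof of \eqref{eq:dis1}, however, has a gap in the final bookkeeping, and the cause is the It\^o correction $\tfrac{\sigma^2}{2}\Delta v_\star$ in the drift.

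By your own estimate, $\Delta v_\star$ involves third conditional cumulants with coefficient $\alpha_t^2(\da_t\beta_t-\db_t\alpha_t)\beta_t^{-5}\lesssim\delta^{-5\kappa_1}$ under D~\ref{a2}(a). So this term only gives $\|\tfrac{\sigma^2}{2}\Delta v_\star\|^2\lesssim \sigma^4 D^3\delta^{-10\kappa_1}$. After Cauchy--Schwarz it enters $\E\|v_\star(X_t,t)-v_\star(X_{t_k},t_k)\|^2$ with weight $(t-t_k)^2$. You must then still multiply by the outer factor $a_t^2\lesssim\delta^{-\kappa_1}$ in \eqref{eq:dis1}, which your accounting omits. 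Summing over $k$ leaves $\sigma^4D^3\delta^{-11\kappa_1}h^2$.

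This is dominated by $D^3(\sigma^2\vee 1)\delta^{-10\kappa_1}h$ only when $h\lesssim \delta^{\kappa_1}(\sigma^2\vee1)/(\sigma^4\vee 1)$, not for every $h\le 1$ as the lemma asserts. Your step ``$\sigma^4h$ is absorbed into $\sigma^2\vee1$'' fails once $\sigma^2>1/h$. The extra $\delta^{-\kappa_1}$ cannot be absorbed by $h\le 1$ at all.

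Two smaller slips, both harmless. First, $\E\|\nabla v_\star\|_F^2\lesssim D^2\delta^{-6\kappa_1}$, not $D^2\delta^{-4\kappa_1}$, because the coefficient of $\operatorname{Var}(X_1\mid X_t=x)$ in $\nabla v_\star$ is $\alpha_t(\da_t\beta_t-\db_t\alpha_t)\beta_t^{-3}$; this still yields $\sigma^2D^2\delta^{-7\kappa_1}h$. Second, $\operatorname{Law}(X_t)=\varrho_t$ for the SDE \eqref{eq:sde} is Theorem~\ref{thm:equivalence}, not Theorem~\ref{existence}.

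The paper avoids the problem by never differentiating $v_\star$ twice. It splits $v_\star(X_t,t)-v_\star(X_{t_k},t_k)$ into two parts:
\begin{itemize}
\item a spatial part, bounded by $\operatorname{Lip}(v_\star(\cdot,t))^2\,\E\|X_t-X_{t_k}\|^2\lesssim D^2\delta^{-6\kappa_1}\cdot D(\sigma^2+\delta^{-3\kappa_1}h)(t-t_k)$, using only the integrated SDE and $\E\|W_t-W_{t_k}\|^2=D(t-t_k)$;
\item a temporal part, controlled by $\|\partial_t v_\star(x,t)\|\lesssim\delta^{-\kappa_1}(1+\|x\|)$ (Lemma~\ref{lip_general}).
\end{itemize}
The noise enters only linearly through $\sigma^2(t-t_k)$, no second-order term appears, and the bound holds for all $h\le 1$.

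If you want to keep It\^o's formula, you need a sharper bound on $\sigma^2\Delta v_\star$. For instance, $\sigma^2\alpha_t^2(\da_t\beta_t-\db_t\alpha_t)\beta_t^{-5}=2(a_t-1)\,\alpha_t(\da_t\beta_t-\db_t\alpha_t)^2\beta_t^{-4}\lesssim\delta^{-3\kappa_1/2}$ by D~\ref{a2}(a). After this, that term contributes only $D^3\delta^{-4\kappa_1}h^2$. Alternatively, restricting to $h\lesssim\delta^{\kappa_1}$ would suffice for Theorem~\ref{main_hp}, where indeed $h\ll\delta^{\kappa_1}$, but it proves less than the lemma claims.
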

In particular, writing $c_1 := 10\kappa_1 + 2\kappa_2$, the sum of the two is
$\lesssim D^3(\sigma^2\vee 1)\,\delta^{-c_1} h$.

\paragraph{Approximation Error}
A key component in controlling the first term in Lemma~\ref{lem_oracle} is the approximation error of the function class $\cV$ used to approximate $v_\star$. Neural network approximation theory has been studied extensively: \cite{cybenko1989approximation} and \cite{hornik1991approximation} established universal approximation for sigmoid-type single-hidden-layer networks on compact domains, and more recent work has characterized the expressive power and approximation rates of deep networks \citep{yarotsky2017error,petersen2018optimal,shen2022optimal,JMLR:v21:20-002,schmidt2020nonparametric}.
\begin{restatable}{lemma}{approxlem} \label{approxlem}
For any $\epsilon \in (0,1)$ and any $R \ge 1$, there exists a network $v_0 \in \cR(W,L,B,K)$ with
$\sup_{x \in [-R,R]^D}\|v_0(x,t_i) - v_\star(x,t_i)\| \le \epsilon$ for all $i = 0,\dots,N$, and
\(\cW(v_0) \lesssim  N R^D \delta^{-3 \kappa_1 D} \epsilon^{-D}\big(\log(1/\delta) + \log(1/\epsilon) + \log R\big), \) 
  \(\cL(v_0) \lesssim  \log(1/\delta) + \log(1/\epsilon) + \log R\), \(\cB(v_0) \lesssim \; \delta^{-(3 \kappa_1) \vee 1} R\sqrt{D} + N, \) and \(\cK(v_0) \lesssim \delta^{-(3 \kappa_1) \vee 1} R\sqrt{D}.\)
Further, $v_0(x,t) = 0$ if $x \not\in [-(R+1),R+1]^D$, and $\operatorname{Lip}(v_0(\cdot, t)) \lesssim \delta^{-3 \kappa_1} R$ for all $t \in [0,1-\delta]$, where the implied constants depend only on $D$.
\end{restatable}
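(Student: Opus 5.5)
We approximate each time-slice $v_\star(\cdot,t_i)$, $i=0,\dots,N$, separately on the box $[-R,R]^D$ and then glue the $N+1$ spatial networks into one time-dependent ReLU network. The gluing is done with a piecewise-linear ``time selector'' that isolates $t=t_i$, which explains the additive $N$ in $\cB(v_0)$ and the factor $N$ in $\cW(v_0)$.

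\textbf{Step 1 (regularity of $v_\star(\cdot,t)$).} By Theorem~\ref{existence}, $v_\star(x,t)=\beta_t^{-1}\db_t x+\beta_t^{-1}(\da_t\beta_t-\db_t\alpha_t)\,m_t(x)$, where $m_t(x)=\sum_i w_i(x)x_i$ is a softmax average of the data points with weights $w_i\propto \exp(-\|x-\alpha_t x_i\|^2/(2\beta_t^2))$. Since $x_i\in[0,1]^D$, we have $\|m_t\|\le\sqrt D$. Hence D~\ref{a2}(a) gives
\[
\sup_{[-R,R]^D}\|v_\star\|\lesssim \delta^{-\kappa_1}R\sqrt D+\delta^{-\kappa_1}\sqrt D,
\]
which is well within the stated $\cK$. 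For the Lipschitz constant, the Jacobian of $m_t$ is a covariance of the $x_i$ under the weights $w$, scaled by $\alpha_t/\beta_t^2$. It is therefore bounded by $D\alpha_t\beta_t^{-2}\lesssim \delta^{-2\kappa_1}$, using $\beta_t^{-1}\lesssim\delta^{-\kappa_1}$. Together with the prefactor, $\operatorname{Lip}(v_\star(\cdot,t))\lesssim \delta^{-3\kappa_1}$ uniformly in $t\le 1-\delta$.

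\textbf{Step 2 (spatial approximation).} Rescale $[-R,R]^D$ to $[0,1]^D$. Each coordinate of $v_\star(\cdot,t_i)$ becomes a Lipschitz function with constant $\lesssim \delta^{-3\kappa_1}R$ and sup norm $\lesssim\delta^{-\kappa_1}R\sqrt D$. We apply a standard ReLU approximation theorem for Lipschitz functions (Yarotsky-type: a partition of unity on a grid of mesh $\epsilon/(\delta^{-3\kappa_1}R)$ plus local constant or linear pieces).

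\begin{itemize}
\item The grid has $(\delta^{-3\kappa_1}R/\epsilon)^D$ cells, giving a network of size $\lesssim R^D\delta^{-3\kappa_1 D}\epsilon^{-D}\log(\cdot)$ and depth $\lesssim\log(1/\epsilon)+\log(1/\delta)+\log R$.
\item Weights are bounded by the function magnitude, $\lesssim\delta^{-(3\kappa_1)\vee1}R\sqrt D$.
\item Using a piecewise-linear interpolant on a triangulation (rather than a product construction) keeps the approximant Lipschitz with constant $\lesssim\delta^{-3\kappa_1}R$. This is the cleanest route to the Lipschitz claim, since a linear interpolant inherits the Lipschitz constant of the target up to a $D$-dependent factor.
\end{itemize}

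\textbf{Step 3 (support cut-off and time gluing).} To force $v_0(x,t)=0$ outside $[-(R+1),R+1]^D$, multiply by the cutoff
\[
\chi(x)=\min_j \bigl(1\wedge(R+1-|x_j|)_+\bigr),
\]
which is computable exactly by ReLU with depth $O(\log D)$. The product is implemented through the clipping trick $\phi\cdot\chi\approx \relu(\phi-K(1-\chi))-\relu(-\phi-K(1-\chi))$. This trick is exact when $\chi\in\{0,1\}$ and Lipschitz with constant $\lesssim K$ on the annulus. It preserves the sup-approximation on $[-R,R]^D$ (where $\chi=1$) and keeps the Lipschitz bound $\lesssim\delta^{-3\kappa_1}R$ up to a $\sqrt D$ factor.

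For time, let $\psi_i(t)$ be the hat function that equals $1$ at $t_i$ and $0$ at $t_{i\pm1}$. Its slopes are $1/h\asymp N$, which accounts for the $+N$ in $\cB$. Set
\[
v_0(x,t)=\sum_i \bigl[\relu(\phi_i-K(1-\psi_i))-\relu(-\phi_i-K(1-\psi_i))\bigr],
\]
using the same clipping trick. At $t=t_i$ only the $i$th term survives and equals $\phi_i$ exactly. Summing $N+1$ subnetworks in parallel gives the factor $N$ in the width.

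\textbf{Main obstacle.} The delicate point is Step 1's uniform Lipschitz bound with the exact exponent $3\kappa_1$. It must be independent of $n$ and of the configuration of the $x_i$, using only $x_i\in[0,1]^D$ and the D~\ref{a2}(a) bounds. Closely tied to this is verifying that the gluing and cutoff operations do not inflate the spatial Lipschitz constant beyond $\delta^{-3\kappa_1}R$. If the clipping trick's Lipschitz constant $K$ on the transition region were too large, I would instead use exact multiplication by the piecewise-linear $\chi$ on the region where $\phi$ is bounded, at the cost of a slightly larger depth.
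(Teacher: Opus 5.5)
Your plan is correct and is essentially the paper's construction. Both use Lipschitz and sup bounds for $v_\star(\cdot,t)$ as in Corollary~\ref{cor14}, a per-slice Lipschitz approximant on the rescaled box, a spatial cutoff $\chi$, time selectors with slopes of order $N$, and $N+1$ parallel branches. The differences are in the gadgets: you gate exactly with $\relu(\phi-K(1-\psi))-\relu(-\phi-K(1-\psi))$ and use overlapping hats in $t$, whereas the paper uses disjoint bumps $\xi_i$, $\psi_i=\min(\chi,\xi_i)$, and an approximate multiplier $\Pi$ with $\Pi(u,0)=0$.

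Two details still need to be made explicit. The paper hides the first inside Lemma~\ref{approx_master} and covers half of the second with the clip $\operatorname{cl}_{K_0}$.

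\textbf{Weight bound.} A grid or triangulation interpolant has inner weights of the order of the inverse mesh, $\delta^{-3\kappa_1}R/\epsilon$. So the stated $\cB$ bound is not obtained by ``bounding weights by the function magnitude.'' That factor has to be built through $O(\log(1/\epsilon)+\log(1/\delta)+\log R)$ layers with bounded weights.

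\textbf{Global behaviour of $\phi_i$.} Each $\phi_i$ must satisfy $|\phi_i|\le K$ on all of $\Real^D$ and stay Lipschitz on the annulus where $0<\chi<1$. Otherwise your gate is not exact at $\chi=0$, and the boundary hats drop steeply. You can ensure both by precomposing with the coordinatewise clamp onto the box.
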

The proof (detailed in Appendix~\ref{mainpf}) then proceeds to control the first term in Lemma~\ref{lem_oracle}. For the estimator~$\hv_n$, this term can be upper bounded by a function of the approximation error,  
as shown in Lemma~\ref{approxlem}. For the estimator $\hv^{\text{mc}}_{n,m}$, controlling this term becomes more nuanced, as this involves controlling both an approximation error and a generalization error that accounts for estimating the expectation via Monte-Carlo in Equation~\eqref{obj_4} as shown in Lemma~\ref{mc_1}. Finally, we use the bound on $\operatorname{KL}\big(P_{1-\delta}, \hP_{1-\delta}(\hv) \big)$ to bound $\fW_p\big(P_{1-\delta}, \hP_{1-\delta}(\hv) \big)$ via truncation as shown in Lemma~\ref{w2}. 

\section{Conclusion and Discussions}
In this work, we developed a theoretical understanding of flow matching for data distributions whose complexity is substantially lower than their ambient dimension. We established the first intrinsic-dimension-adaptive, high-probability generalization bound for flow matching in Wasserstein-$p$ distance, for any $p \ge 1$: with appropriate choices of network architecture, early stopping, discretization scheme and truncation level, given $n$ i.i.d.\ samples from $\pd$, and for every $d > d^\ast_p(\pd)$, with probability at least $1-\xi$,
\(\fW_p\big(\pd, \sT_M(\hP_{1-\delta}(\hv_n))\big) \;\lesssim\; n^{-1/d} + n^{-\frac{1}{2p}} \log^{\frac{1}{2p}}(1/\xi),\)
where $d^\ast_p(\cdot)$ is the upper Wasserstein-$p$ dimension \citep{weed2019sharp} and $\sT_M$ truncates at level $M \asymp n^{2p/d}$. Since $d$ may be taken arbitrarily close to $d^\ast_p(\pd)$, the exponent is governed by the intrinsic dimension up to an arbitrarily small slack. Our analysis dispenses with the uniform Lipschitz assumption on the \emph{learned} velocity field standard in state-of-the-art analyses---the regularity of $v_\star$ itself is established rather than assumed---and prescribes network sizes polynomial in $n$ for each fixed $D$. The resulting rates avoid the ambient-dimensional curse of dimensionality and, for intrinsically low-dimensional distributions---including those on compact smooth manifolds---approach the corresponding minimax rates, giving a theoretical basis for the empirically observed effectiveness of flow matching in high dimensions with low-dimensional structure.

We also address several algorithmic considerations---\emph{discretization}, \emph{early stopping}, and \emph{truncation of the generated law}---with theoretically motivated prescriptions. Our bounds imply an early stopping time $\delta = \Theta(n^{-\frac{1}{d \kappa_3}})$, ensuring the learned velocity field stays close to the ground truth while avoiding blow-up as $t \to 1^-$, and a truncation level $M = \Theta(n^{2p/d})$, balancing the tail of $P_{1-\delta}$ against the transport cost it induces. For numerical integration on $[0,1-\delta]$, we recommend a uniform step size $h = \Theta\left(n^{-\frac{2p(2p+1)}{d} - \frac{10\kappa_1+2\kappa_2}{d\kappa_3}}\right)$. This is dictated by the Brownian component of the sampler: over a step of length $h$ the increments of the generating SDE have mean square of order $h$ rather than $h^2$, so the accumulated Euler--Maruyama error is $\cO(\delta^{-(10\kappa_1+2\kappa_2)}h)$, and the step size above is what places it on the same order as the velocity estimation error. For the velocity network, ReLU feed-forward architectures suffice, with depth $L \asymp \log n$ and $W \asymp n^{\,\frac{p(2p+1)(D+2)}{d} \,+\, \frac{(20+7D)\kappa_1+4\kappa_2}{2d\kappa_3}}\log^{D/2+1} n$ weights. The exponent of $n$ here is linear in $D$ and decreasing in $d$.

While our results give detailed statistical guarantees, the \emph{optimization error} from training nonconvex, time-dependent velocity networks remains hard to control; our bounds are optimization-agnostic and can be combined with future optimization-theoretic advances. Two further directions remain open. First, our approximation analysis uses only the Lipschitz regularity of $v_\star$; exploiting its Gaussian-mixture structure, or additional smoothness, should yield smaller networks. Second, we assume a ReLU velocity network---standard in the statistical learning literature and compatible with existing approximation results for H\"older and Besov-type classes---whereas practical models use structured architectures such as U-Nets or transformers. These often exhibit strong empirical smoothness and hierarchical approximation properties, but rigorous guarantees for them remain unavailable.

\section*{Acknowledgements}
We gratefully acknowledge the support of the NSF and the Simons Foundation for the Collaboration on the Theoretical Foundations of Deep Learning through awards DMS-2031883 and \#814639, the NSF's support of FODSI through grant DMS-2023505, and the ONR's support through MURI award N000142112431. 


\clearpage
\appendix
\section*{Appendices}
\startcontents[sections]
\printcontents[sections]{l}{1}{\setcounter{tocdepth}{2}}
\section{Omitted Proofs from Section~\ref{background}}
\subsection{Existence and Uniqueness of the True Vector Velocity Field (Proof of Theorem~\ref{existence})}
\thmexist*
\begin{proof}
    The characteristic function of $X_t$ is given by,
\begin{align*}
    \varphi_t(\kappa) = \E \exp(\iota \kappa^\top X_t) = \int \exp(\iota \kappa^\top (\alpha_t x_1 + \beta_t x_0 )) \diff P_0(x_0) \diff \pemp(x_1).
\end{align*}
For any $t \in [0,1-\delta]$, $\int |\varphi_t(\kappa)| \diff \kappa \lesssim \delta^{-2 \kappa_1}(2\pi)^{D/2} < \infty$, by Lemma~\ref{lem_cond_1}. Hence, applying the Fourier inversion theorem, $X_t$ has a density and is given by,
\begin{align*}
    \varrho_t(x) = (2\pi)^{-D} \int \exp(-\iota\kappa^\top x) \varphi_t(\kappa) \diff \kappa.
\end{align*}
By Lemma~\ref{lem_cond_1}, since,
\begin{align}
    & \int \left\|\int (x_1 - x_0) \exp\left(\iota \kappa^\top (\alpha_t x_1 + \beta_t x_0 )\right) \diff P_0(x_0) \diff \pemp(x_1)\right\| \diff \kappa \nonumber \\
    \lesssim & \frac{\delta^{\kappa_1}\sqrt{2}
   \Gamma\left(\frac{D+1}{2}\right)}{\Gamma\left(\frac{D}{2}\right)} \times \frac{\delta^{\kappa_1}(2\pi)^{D/2}}{\delta^{\kappa_1(D+1)}} + \frac{\sqrt{D}(2\pi)^{D/2}}{\delta^{\kappa_1 D}} < \infty, \label{eq6}
\end{align}
Applying the Fubini-Tonelli Theorem \citep[Corollary 1.7.23]{tao2011measure},
\begingroup
\allowdisplaybreaks
\begin{align}
     \partial_t \varrho_t(x) 
    = & (2\pi)^{-D} \int \exp(-\iota\kappa^\top x) \varphi_t^\prime(\kappa) \diff \kappa \nonumber\\
    = & (2\pi)^{-D} \int \iota \kappa^\top (\da_t x_1 + \db_t
    x_0) \exp(-\iota\kappa^\top x) \exp\left(\iota \kappa^\top (\alpha_t x_1 + \beta_t x_0)\right) \diff P_0(x_0) \diff \pemp(x_1) \diff \kappa \nonumber\\
    = & - \operatorname{div}_x \left( (2\pi)^{-D} \int  (\da_t x_1 + \db_t x_0) \exp(-\iota\kappa^\top x) \exp\left(\iota \kappa^\top (\alpha_t x_1 + \beta_t x_0)\right) \diff P_0(x_0) \diff \pemp(x_1) \diff \kappa \right) \nonumber\\
    = & - \operatorname{div}_x \bigg(  \int   (\da_t x_1 + \db_t x_0) \left( (2\pi)^{-D}\int \exp(-\iota\kappa^\top x) \exp\left(\iota \kappa^\top (\alpha_tx_1 + \beta_t x_0)\right) \diff \kappa \right) \nonumber\\
    & \hspace{2cm} \diff P_0(x_0) \diff \pemp(x_1) \bigg). \label{eq3}
\end{align}
\endgroup
We note that,
\begingroup
\allowdisplaybreaks
\begin{align}
 & \int \exp(\iota \kappa^\top x) v_\star(x,t) \varrho_t(x) \diff x \nonumber\\
= & \E \left(\exp(\iota \kappa^\top X_t) v_\star(X_t,t)\right) \nonumber\\
= & \E \left(\exp(\iota \kappa^\top X_t) \E\left( \dot{X}_t\mid X_t\right)\right) \nonumber\\
 = & \E \E\left(\exp(\iota \kappa^\top X_t) \dot{X}_t\mid X_t\right) \nonumber  \\
 = & \E \left(\exp(\iota \kappa^\top X_t) \dot{X}_t\right) \nonumber\\
 = & \E \left(\exp(\iota \kappa^\top (\alpha_t X_1 + \beta_t X_0)) (\da_t X_1 + \db_t X_0)\right) \nonumber\\
 = & \int \exp(\iota \kappa^\top (\alpha_t x_1 + \beta_t x_0)) (\da_t x_1 + \db_t x_0) \diff P_0(x_0) \diff \pemp(x_1) \label{eq4}
\end{align}
\endgroup
Taking an inverse Fourier transform on both sides, we get,
\begin{align}
   & v_\star(x,t) \varrho_t(x) \nonumber\\
   = & (2 \pi)^{-D}\int \exp(-\iota \kappa^\top x) \exp(\iota \kappa^\top (\alpha_tx_1 + \beta_t x_0)) (\da_t x_1 + \db_t x_0) \diff P_0(x_0) \diff \pemp(x_1) \diff \kappa. \label{eq5}
\end{align}
Note that we can apply the inverse Fourier transform since \eqref{eq6} is satisfied. Hence, from \eqref{eq3} and \eqref{eq5}, we get,
\begin{align}\label{fpk}
    \partial_t \varrho_t(x) +  \operatorname{div}_x  (v_\star(x,t) \varrho_t (x)) =0.
\end{align}
Note that since $v_\star(\cdot, t)$ is uniformly (in $t \in [0,1-\delta]$) $D\delta^{-3\kappa_1}$-Lipschitz, by Corollary~\ref{cor14}, The Picard–Lindel\"of theorem \citep[Theorem 1.3.1]{coddington1955theory} implies that \eqref{ode_2} has a unique solution for all initialization $x \in \Real^D$. From \citet[Theorem~4.4]{Santambrogio2015}, $(Y_t)_\sharp P_0$ satisfies the continuity equation \eqref{fpk}, which admits a unique solution.
\end{proof}
\begin{lemma}\label{lem_cond_1}
For any $0 \le t < 1 $, the following holds,
    \begin{align}
       & \int \left|\int \exp\left(\iota \kappa^\top (\alpha_t x_1 + \beta_t x_0 )\right) \diff P_0(x_0) \diff \pemp(x_1)\right| \diff \kappa \le \frac{(2\pi)^{D/2}}{\beta_t^D}, \label{eq1}\\
       &   \int \left\|\int (x_1 - z) \exp\left(\iota \kappa^\top (\alpha_t x_1 + \beta_t x_0 )\right) \diff P_0(x_0) \diff \pemp(x_1)\right\| \diff \kappa \nonumber\\
       \le & \frac{\db_t\sqrt{2}
   \Gamma\left(\frac{D+1}{2}\right)}{\Gamma\left(\frac{D}{2}\right)} \times \frac{(2\pi)^{D/2}}{\beta_t^{D+1}} + \frac{\da_t\sqrt{D}(2\pi)^{D/2}}{\beta_t^D} \label{eq2}.
    \end{align}
\end{lemma}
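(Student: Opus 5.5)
The plan is to exploit that, for fixed $t$, the joint law of $(X_0,X_1)$ is the product $P_0\otimes\pemp$ with $P_0=\cN(0,I_D)$. Then every inner integral factorizes into a Gaussian characteristic function, which is explicit and integrable in $\kappa$, times an empirical factor, which is only bounded. I read the vector weight in \eqref{eq2} as the velocity $\da_t x_1+\db_t x_0$ (this is how the lemma is used in \eqref{eq6}, and it matches the two terms on the right-hand side). I will bound the $\dot\beta_t$- and $\dot\alpha_t$-parts separately via the triangle inequality, writing $|\da_t|,|\db_t|$ where signs matter.

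\textbf{Step 1 (bound \eqref{eq1}).} By independence, the inner integral equals
\[
\Big(\tfrac1n\textstyle\sum_{i}e^{\iota\alpha_t\kappa^\top x_i}\Big)\,e^{-\beta_t^2\|\kappa\|^2/2}.
\]
The empirical factor has modulus at most $1$. Hence the whole expression is bounded by
\[
\int e^{-\beta_t^2\|\kappa\|^2/2}\,\diff\kappa=(2\pi)^{D/2}\beta_t^{-D},
\]
using the Gaussian integral and the substitution $u=\beta_t\kappa$, which is valid since $\beta_t>0$ for $t<1$.

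\textbf{Step 2 (the $x_1$-part of \eqref{eq2}).} Here the inner integral is
\[
\da_t\Big(\tfrac1n\textstyle\sum_i x_i e^{\iota\alpha_t\kappa^\top x_i}\Big)e^{-\beta_t^2\|\kappa\|^2/2}.
\]
Under Assumption~\ref{a1} each $x_i\in[0,1]^D$, so $\|x_i\|\le\sqrt D$ and the vector sum has norm at most $\sqrt D$. Integrating the Gaussian factor as in Step 1 gives $|\da_t|\sqrt D(2\pi)^{D/2}\beta_t^{-D}$, which is the second term.

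\textbf{Step 3 (the $x_0$-part of \eqref{eq2}).} Here I use the Gaussian identity
\[
\int x_0e^{\iota\beta_t\kappa^\top x_0}\,\diff P_0(x_0)=-\iota^{-1}\nabla_\kappa e^{-\beta_t^2\|\kappa\|^2/2}=\iota\beta_t\kappa\, e^{-\beta_t^2\|\kappa\|^2/2}.
\]
It follows by differentiating under the integral, which is justified by dominated convergence since $\|x_0\|$ is $P_0$-integrable. Multiplying by the empirical factor of modulus at most $1$, the norm of the inner integral is at most $|\db_t|\beta_t\|\kappa\|e^{-\beta_t^2\|\kappa\|^2/2}$. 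Substituting $u=\beta_t\kappa$ gives
\[
|\db_t|\,\beta_t^{-D}\int\|u\|e^{-\|u\|^2/2}\,\diff u=|\db_t|\,\beta_t^{-D}(2\pi)^{D/2}\,\E\|Z\|,\qquad Z\sim\cN(0,I_D),
\]
and $\E\|Z\|=\sqrt2\,\Gamma\big(\tfrac{D+1}{2}\big)/\Gamma\big(\tfrac D2\big)$ is the chi mean. This gives a factor $\beta_t^{-D}$, which is at most the stated $\beta_t^{-(D+1)}$ because $\beta_t\le1$. Adding Steps 2 and 3 via the triangle inequality yields \eqref{eq2}.

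The only delicate point is Step 3, namely computing the Fourier transform of $x_0\,\diff P_0$ and justifying differentiation under the integral. Both are standard, so I expect no real obstacle; the rest is bookkeeping of Gaussian integrals.
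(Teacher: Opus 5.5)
Your proposal is correct and follows the paper's proof essentially step for step: the explicit Gaussian characteristic function, the empirical factor of modulus at most $1$, the bound $\|x_i\|\le\sqrt D$, and the chi-mean integral. The differences are cosmetic: you factor out the empirical sum instead of first applying the triangle inequality over $i$; in Step~3 you keep the factor $\beta_t$ and only then relax $\beta_t^{-D}\le\beta_t^{-(D+1)}$, whereas the paper discards it immediately via $\beta_t\le 1$; and you write $|\da_t|,|\db_t|$ where the paper writes $\da_t,\db_t$. The one slip is the intermediate expression $-\iota^{-1}\nabla_\kappa e^{-\beta_t^2\|\kappa\|^2/2}$, which should read $(\iota\beta_t)^{-1}\nabla_\kappa e^{-\beta_t^2\|\kappa\|^2/2}$; your final identity $\iota\beta_t\kappa\, e^{-\beta_t^2\|\kappa\|^2/2}$ is correct, and only its norm is used.
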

\begin{proof}
\begingroup
\allowdisplaybreaks
    \begin{align*}
   & \int \left|\int \exp\left(\iota \kappa^\top (\alpha_t x_1 + \beta_t x_0 )\right) \diff P_0(x_0) \diff \pemp(x_1)\right| \diff \kappa\\
   \le & \frac{1}{n} \sum_{i=1}^n \int \left|\int \exp\left(\iota \kappa^\top (\alpha_t x_i + \beta_t x_0 )\right) \diff P_0(x_0) \right| \diff \kappa\\
   = & \frac{1}{n} \sum_{i=1}^n \int \left|\exp(\iota \alpha_t \kappa^\top x_i) \exp\left(-\frac{1}{2}\beta_t^2 \|\kappa\|^2\right) \right| \diff \kappa\\
   \le & \int \exp\left(-\frac{1}{2}\beta_t^2 \|\kappa\|^2\right)  \diff \kappa\\
   = & \left(\frac{2\pi}{\beta_t^2}\right)^{D/2}.
\end{align*}
\endgroup
\begingroup
\allowdisplaybreaks
\begin{align*}
  & \int \left\|\int (\da_t x_1 + \db_t x_0) \exp\left(\iota \kappa^\top (\alpha_t x_1 + \beta_t x_0 )\right) \diff P_0(x_0) \diff \pemp(x_1)\right\| \diff \kappa\\
   \le & \frac{1}{n}\sum_{i=1}^n \int \left\|\int (\da_t x_i + \db_t x_0) \exp\left(\iota \kappa^\top (\alpha_t x_i + \beta_t x_0)\right) \diff P_0(x_0) \right\| \diff \kappa\\
   \le & \frac{\db_t}{n}\sum_{i=1}^n \int \left\|\int  x_0 \exp\left(\iota \kappa^\top (\alpha_t x_i + \beta_t x_0)\right) \diff P_0(x_0) \right\| \diff \kappa \\
   & + \frac{\da_t}{n}\sum_{i=1}^n \int \left\|\int x_i  \exp\left(\iota \kappa^\top (\alpha_t x_i + \beta_t x_0)\right) \diff P_0(x_0) \right\| \diff \kappa\\
   \le & \frac{\db_t}{n}\sum_{i=1}^n \int \left\|\int  x_0 \exp\left(\iota \kappa^\top (\alpha_t x_i + \beta_t x_0)\right) \diff P_0(x_0) \right\| \diff \kappa \\
   & + \frac{\da_t}{n}\sum_{i=1}^n \int \left\|x_i \exp\left(\iota \alpha_t \kappa^\top  x_i \right) \int   \exp\left(\iota \kappa^\top  \beta_t x_0 \right) \diff P_0(x_0) \right\| \diff \kappa\\
   = & \frac{\db_t}{n}\sum_{i=1}^n \int \left\| \exp(\iota \alpha_t \kappa^\top x_i) \cdot \iota \beta_t \kappa \exp\left( -\frac{\beta_t^2 \|\kappa\|^2}{2}
   \right)\right\| \diff \kappa \\
   & + \frac{\da_t}{n}\sum_{i=1}^n \int \left\| x_i \exp(\iota \alpha_t \kappa^\top x_i)  \exp\left( -\frac{\beta_t^2 \|\kappa\|^2}{2}
   \right)\right\| \diff \kappa\\
   \le & \db_t \int \| \kappa\| \exp\left( -\frac{\beta_t^2 \|\kappa\|^2}{2}
   \right) \diff \kappa + \frac{\da_t}{n}\sum_{i=1}^n \|x_i\| \int  \exp\left( -\frac{\beta_t^2 \|\kappa\|^2}{2}
   \right) \diff \kappa\\
   = & \frac{\sqrt{2}
   \Gamma\left(\frac{D+1}{2}\right)}{\Gamma\left(\frac{D}{2}\right)} \times \frac{(2\pi)^{D/2}}{\beta_t^{D+1}} + \frac{\da_t}{n}\sum_{i=1}^n \|x_i\| \times \frac{(2\pi)^{D/2}}{\beta_t^D}\\
   \le & \frac{\db_t \sqrt{2}
   \Gamma\left(\frac{D+1}{2}\right)}{\Gamma\left(\frac{D}{2}\right)} \times \frac{(2\pi)^{D/2}}{\beta_t^{D+1}} + \frac{\da_t\sqrt{D}(2\pi)^{D/2}}{\beta_t^D}.
\end{align*}
\endgroup
\end{proof}
\subsection{Equivalence to the Conditional Flow-matching Objective}
\begin{lemma}\label{cftofm}
    $\mathcal{L}_{\mathrm{CFM}}(v) = \cL(v) + \int_0^1 \mathbb{E} \|v_\star(X_t,t) - \dot{X_t}\|^2 \diff t $.
\end{lemma}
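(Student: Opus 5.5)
The argument is the usual bias--variance (Pythagorean) decomposition, applied pointwise in $t$ and then integrated over $t$. Fix $t$ and write $\dot X_t = \da_t X_1 + \db_t X_0$. Inside the expectation we add and subtract $v_\star(X_t,t)$:
\[
\|v(X_t,t)-\dot X_t\|^2 = \|v(X_t,t)-v_\star(X_t,t)\|^2 + \|v_\star(X_t,t)-\dot X_t\|^2 + 2\big\langle v(X_t,t)-v_\star(X_t,t),\, v_\star(X_t,t)-\dot X_t\big\rangle .
\]
Taking expectations, the first two terms are exactly the integrands of $\cL(v)$ and of the extra term in the statement. So it suffices to show that the cross term has zero expectation for every $t$, and then integrate over $t\in[0,1]$.

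To kill the cross term we condition on $X_t$. The function $g(X_t) := v(X_t,t)-v_\star(X_t,t)$ is $\sigma(X_t)$-measurable. By definition $v_\star(X_t,t)=\E(\dot X_t\mid X_t)$, the tower property gives
\[
\E\langle g(X_t), v_\star(X_t,t)-\dot X_t\rangle = \E\big\langle g(X_t),\, v_\star(X_t,t)-\E(\dot X_t\mid X_t)\big\rangle = 0 .
\]
This is the same manipulation already used in \eqref{eq4} in the proof of Theorem~\ref{existence}. For $t<1$ the explicit formula in Theorem~\ref{existence} gives $v_\star(\cdot,t)$ as a Borel version of the conditional expectation, so measurability is not an issue.

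The main technical point is integrability, which is needed both to split the expectation into three finite pieces and to justify pulling $g(X_t)$ out of the conditional expectation. Here we use that $\dot X_t$ has finite second moment: $X_1\in[0,1]^D$ by Assumption~\ref{a1} and $X_0$ is Gaussian. Hence $v_\star(X_t,t)$ is square integrable, by the conditional Jensen inequality. If $\E\|v(X_t,t)\|^2<\infty$, Cauchy--Schwarz makes the cross term integrable and the argument goes through. If instead $\E\|v(X_t,t)\|^2=\infty$ on a set of $t$ of positive measure, then $\E\|v(X_t,t)-\dot X_t\|^2=\infty$ and $\E\|v(X_t,t)-v_\star(X_t,t)\|^2=\infty$ there too, because $\dot X_t$ and $v_\star(X_t,t)$ are in $L^2$. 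Both sides of the identity are then $+\infty$.

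Finally, integrate the pointwise identity over $t$. All integrands are nonnegative, so Tonelli's theorem lets us sum the integrals. The endpoint $t=1$ has measure zero and can be ignored, even where $v_\star$ degenerates. This yields $\mathcal{L}_{\mathrm{CFM}}(v)=\cL(v)+\int_0^1\E\|v_\star(X_t,t)-\dot X_t\|^2\diff t$. The second term does not depend on $v$, so the two objectives have the same minimizers.
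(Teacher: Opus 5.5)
Your proof is correct and follows the paper's argument: expand $\|v(X_t,t)-\dot X_t\|^2$ around $v_\star(X_t,t)$, remove the cross term by conditioning on $X_t$ with the tower property, and integrate over $t$. Your extra bookkeeping — square-integrability of $\dot X_t$ and $v_\star(X_t,t)$, and the case where both sides are $+\infty$ — is a useful refinement that the paper leaves implicit.
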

\begin{proof}
    Note that,
    \begingroup
    \allowdisplaybreaks
\begin{align*}
    & \mathcal{L}_{\mathrm{CFM}}(v) \\
    = & \int_0^1 \mathbb{E}\|v(X_t, t) - \dot{X_t}\|^2 \diff t\\
    = & \int_0^1 \mathbb{E}\left(\|v(X_t, t) - v_\star(X_t,t)\|^2 + \|v_\star(X_t,t) - \dot{X_t}\|^2 + 2 \langle v(X_t, t) - v_\star(X_t,t), v_\star(X_t,t) - \dot{X_t}\rangle \right)\diff t\\
    = & \int_0^1 \mathbb{E}\|v(X_t, t) - v_\star(X_t,t)\|^2 \diff t + \int_0^1 \mathbb{E} \|v_\star(X_t,t) - \dot{X_t}\|^2 \diff t \\
    & + 2 \int_0^1 \mathbb{E} \E \left(\langle v(X_t, t) - v_\star(X_t,t), v_\star(X_t,t) - \dot{X_t}\rangle\mid X_t\right) \diff t\\
    = & \cL(v) + \int_0^1 \mathbb{E} \|v_\star(X_t,t) - \dot{X_t}\|^2 \diff t .
\end{align*}
\endgroup
\end{proof}

\subsection{Equivalence of the probability-flow ODE and the SDE family}
\label{sec:ode-sde-equivalence}

Let $\{\varrho_t\}_{t\in[0,1]}$ be the marginal law of the stochastic interpolant
$X_t = \alpha_t X_1 + \beta_t X_0$, absolutely continuous with density $\varrho_t$
for $t\in(0,1)$, and let $v_\star(x,t)$ denote the interpolant velocity field. For a
constant $\sigma>0$ define
\begin{equation}
    \mathfrak{b}_t(x) \;:=\; v_\star(x,t) + \frac{\sigma^2}{2}\nabla \log \varrho_t(x).
    \label{eq:drift-def}
\end{equation}

By Theorem~\ref{existence}, $v_\star$ solves, in the sense of distributions on
$(0,1)\times\mathbb{R}^D$,
\begin{equation}
    \partial_t \varrho_t + \nabla\cdot(v_\star \varrho_t) = 0,
    \qquad \varrho_t \big|_{t\to 0^+} = \varrho_0 = \cN(0,I_D) \text{ weakly}.
    \label{eq:continuity}
\end{equation}

We first record the regularity of the score, which is what makes \eqref{eq:sde-sigma} below
well posed.

\begin{lemma}[Lipschitzness of the score]
\label{lem:score-lip}
Under Assumption~\ref{a1} and Design Choices D~\ref{a2}, for every $t\in[0,1-\delta]$ the map
$x\mapsto\nabla\log\varrho_t(x)$ is differentiable with
\[
  \nabla^2 \log\varrho_t(x)
  \;=\; \frac{\alpha_t^2}{\beta_t^4}\,\mathrm{Var}\big(X_1 \mid X_t = x\big) \;-\; \frac{1}{\beta_t^2}\,I_D ,
  \qquad
  \big\|\nabla^2\log\varrho_t(x)\big\| \;\lesssim\; D\,\delta^{-4\kappa_1},
\]
uniformly in $x\in\mathbb{R}^D$ and $t\in[0,1-\delta]$. In particular $\nabla\log\varrho_t(\cdot)$
is globally Lipschitz with constant $\cO(D\delta^{-4\kappa_1})$, uniformly in $t\in[0,1-\delta]$.
\end{lemma}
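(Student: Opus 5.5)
The plan is to use the explicit Gaussian-mixture form of $\varrho_t$ and to compute the Hessian of $\log\varrho_t$ directly. Since $X_1 \sim \pemp$ and $X_0 \sim \cN(0,I_D)$, the density is
\[
\varrho_t(x) = \frac{1}{n}\sum_{i=1}^n (2\pi\beta_t^2)^{-D/2}\exp\!\left(-\frac{\|x-\alpha_t x_i\|^2}{2\beta_t^2}\right).
\]
This is a smooth, strictly positive function for $t \in [0,1-\delta]$, because $\beta_t \ge \beta_{1-\delta}>0$ by D~\ref{a2}(a). So $\log\varrho_t$ is $C^\infty$, and differentiability of the score is immediate.

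Write $w_i(x) \propto \exp(-\|x-\alpha_t x_i\|^2/(2\beta_t^2))$, normalized so that $\sum_i w_i = 1$. These are exactly the posterior probabilities $\prob(X_1 = x_i \mid X_t = x)$. The computation then goes in three steps.

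First, $\nabla \log \varrho_t(x) = \sum_i w_i(x)\,\big(-(x-\alpha_t x_i)/\beta_t^2\big) = -x/\beta_t^2 + (\alpha_t/\beta_t^2)\,\E(X_1\mid X_t=x)$. This is Tweedie's formula.

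Second, differentiate again. The term $-x/\beta_t^2$ contributes $-\beta_t^{-2}I_D$. For the other term, $\nabla w_i = w_i\big(g_i - \sum_j w_j g_j\big)$, where $g_i = -(x-\alpha_t x_i)/\beta_t^2$. Since the $x$-parts of $g_i$ cancel in $g_i - \bar g$, this gives $\nabla w_i = (\alpha_t/\beta_t^2)\,w_i\,(x_i - \E(X_1\mid X_t=x))$. Hence the Jacobian of $\sum_i w_i x_i$ is $(\alpha_t/\beta_t^2)\sum_i w_i\,(x_i-\bar x)(x_i-\bar x)^\top$, with $\bar x = \E(X_1 \mid X_t = x)$, and this equals $(\alpha_t/\beta_t^2)\,\mathrm{Var}(X_1\mid X_t=x)$. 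Multiplying by $\alpha_t/\beta_t^2$ yields the claimed identity.

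Third, bound the operator norm. By Assumption~\ref{a1}, every $x_i \in [0,1]^D$, so the posterior of $X_1$ is supported in the unit cube. Therefore $\|\mathrm{Var}(X_1\mid X_t=x)\| \le \E\|X_1-\bar x\|^2 \le D$, uniformly in $x$. Since $\alpha_t \le 1$, we get
\[
\|\nabla^2\log\varrho_t(x)\| \le D\,\beta_t^{-4} + \beta_t^{-2}.
\]
Design Choice D~\ref{a2}(a) gives $\beta_t^{-1}\lesssim \delta^{-\kappa_1}$, so the right-hand side is $\lesssim D\delta^{-4\kappa_1}$ (using $\delta \le 1$ and $D \ge 1$, the $\beta_t^{-2}$ term is dominated).

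Global Lipschitzness then follows from the mean value theorem applied along segments, because the Hessian bound is uniform in $x$ and $t$.

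No step is a real obstacle here. The only point needing care is the cancellation in the derivative of the posterior weights, which is what turns the Jacobian into a covariance matrix rather than something growing with $\|x\|$. Once that is right, the uniform bound in $x$ comes from the bounded support, not from any growth estimate.
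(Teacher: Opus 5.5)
Your proposal is correct and follows essentially the same route as the paper, which cites Lemma~\ref{lemlip} for Tweedie's formula and for $\nabla_x \E(X_1\mid X_t=x) = \alpha_t\beta_t^{-2}\,\mathrm{Var}(X_1\mid X_t=x)$, exactly what you rederive from the posterior weights. It then bounds the conditional covariance by its trace using the support in $[0,1]^D$ and $\beta_t^{-1}\lesssim\delta^{-\kappa_1}$, as you do. One small fix: read off positivity of $\beta_t$ on $[0,1-\delta]$ directly from D~\ref{a2}(a), since $\beta_t \ge \beta_{1-\delta}$ presumes a monotonicity of $\beta_t$ that is not assumed.
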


\begin{proof}
By Lemma~\ref{lemlip}, $\nabla\log\varrho_t(x) = -\beta_t^{-2}\big(x - \alpha_t m(x,t)\big)$ with
$m(x,t) = \E[X_1\mid X_t=x]$, and $\nabla_x m(x,t) = \alpha_t\beta_t^{-2}\,\mathrm{Var}(X_1\mid X_t=x)$.
Differentiating the first identity in $x$ and substituting the second gives the display.
For the bound, $\mathrm{Var}(X_1\mid X_t=x) \succeq 0$ and, since $X_1\sim\pemp$ is supported in
$[0,1]^D$ by Assumption~\ref{a1}, each conditional coordinate variance is at most $1/4$, so
\[
  \big\|\mathrm{Var}(X_1\mid X_t=x)\big\| \;\le\; \operatorname{trace} \left(\mathrm{Var}(X_1\mid X_t=x) \right) \;\le\; \tfrac{D}{4}.
\]
With $\alpha_t \le 1$ and $\beta_t^{-1}\lesssim \delta^{-\kappa_1}$ by D~\ref{a2}(a),
$\|\nabla^2\log\varrho_t(x)\| \le \tfrac{D}{4}\alpha_t^2\beta_t^{-4} + \beta_t^{-2} \lesssim D\delta^{-4\kappa_1}$.
\end{proof}

\begin{theorem}[Marginal equivalence of the ODE and the SDE family]
\label{thm:equivalence}
Fix $\delta \in (0,1/2)$. Under Assumption~\ref{a1} and Design Choices D~\ref{a2}, there is a
pathwise unique strong solution $\{X_t\}_{t\in[0,1-\delta]}$ of
    \begin{equation}
        dX_t = \mathfrak{b}_t(X_t)\,dt + \sigma\,dW_t, \qquad X_0 \sim\cN(0,I_D),
        \label{eq:sde-sigma}
    \end{equation}
    with $\mathrm{Law}(X_t) = \varrho_t$ for every $t\in[0,1-\delta]$.
\end{theorem}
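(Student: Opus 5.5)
Our plan has two parts: well-posedness of \eqref{eq:sde-sigma}, and identification of the marginals through uniqueness for the Fokker--Planck equation.

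\textbf{Well-posedness.} By Corollary~\ref{cor14}, $v_\star(\cdot,t)$ is globally Lipschitz with constant $\cO(D\delta^{-3\kappa_1})$, uniformly in $t\in[0,1-\delta]$. By Lemma~\ref{lem:score-lip}, $\nabla\log\varrho_t$ is globally Lipschitz with constant $\cO(D\delta^{-4\kappa_1})$. Hence $\mathfrak{b}_t(\cdot)$ is globally Lipschitz, uniformly in $t$. For the linear growth condition we need $\sup_{t\le 1-\delta}\|\mathfrak{b}_t(0)\|<\infty$. This follows from the closed form in Theorem~\ref{existence} together with $\nabla\log\varrho_t(x)=-\beta_t^{-2}(x-\alpha_t m(x,t))$, where $\|m\|\le\sqrt D$. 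We also need $t\mapsto\mathfrak{b}_t(x)$ to be measurable, and in fact it is continuous by the same formulas. The classical Itô existence and uniqueness theorem then gives a pathwise unique strong solution on $[0,1-\delta]$, with finite second moments since $X_0\sim\cN(0,I_D)$.

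\textbf{Identifying the marginals.} Let $\mu_t=\operatorname{Law}(X_t)$. Apply Itô's formula to $\phi\in C_c^\infty(\Real^D)$ and take expectations; the martingale term vanishes because $\nabla\phi$ is bounded. This shows that $\mu_t$ is a weak solution of
\[
\partial_t\mu_t=-\nabla\cdot(\mathfrak{b}_t\mu_t)+\tfrac{\sigma^2}{2}\Delta\mu_t,\qquad \mu_0=\cN(0,I_D).
\]
Next we check that $\varrho_t$ solves the same equation. Starting from \eqref{eq:continuity}, write $\varrho_t\nabla\log\varrho_t=\nabla\varrho_t$. Then
\[
-\nabla\cdot(\mathfrak{b}_t\varrho_t)+\tfrac{\sigma^2}{2}\Delta\varrho_t=-\nabla\cdot(v_\star\varrho_t)-\tfrac{\sigma^2}{2}\Delta\varrho_t+\tfrac{\sigma^2}{2}\Delta\varrho_t=\partial_t\varrho_t .
\]
This computation is justified in the weak sense because $\varrho_t$ is a smooth Gaussian mixture, so $\nabla\varrho_t$ and $\Delta\varrho_t$ are integrable and the integrations by parts against $\phi$ are legitimate.

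\textbf{Uniqueness (main obstacle).} It remains to show that the linear Fokker--Planck equation with this drift has at most one solution in the class of weakly continuous probability curves with finite second moments. We plan to use the duality argument. Fix $s\in[0,1-\delta]$ and $\phi\in C_c^\infty$, and solve the backward Kolmogorov equation
\[
\partial_r u+\mathfrak{b}_r\cdot\nabla u+\tfrac{\sigma^2}{2}\Delta u=0,\qquad u(s,\cdot)=\phi .
\]
Its solution is $u(r,x)=\E\,\phi(X^{r,x}_s)$, and the Lipschitz flow gives $\nabla u$ bounded. For any solution $\nu$ of the forward equation, $r\mapsto\int u(r,\cdot)\,\diff\nu_r$ is constant, so $\int\phi\,\diff\nu_s=\int u(0,\cdot)\,\diff\mu_0$ is determined by $\mu_0$.

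The technical point is regularity of $u$. Either $u$ is $C^{1,2}$, which holds for instance by mollifying $\mathfrak{b}$ and passing to the limit using the uniform Lipschitz bounds, or we bypass this by citing a uniqueness theorem for Fokker--Planck equations with Lipschitz drift and nondegenerate constant diffusion (Bogachev--Krylov--Röckner--Shaposhnikov, or Figalli's superposition principle). For the integrability conditions these results require, we use $\int_0^{1-\delta}\!\int\|\mathfrak{b}_t\|\,\diff\varrho_t\,\diff t<\infty$, which holds by linear growth and Gaussian moments. Once uniqueness is in hand, $\mu_t=\varrho_t$ for all $t\in[0,1-\delta]$.
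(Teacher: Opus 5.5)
Your proposal is correct and follows essentially the same route as the paper. Both arguments use the uniform Lipschitz bounds from Corollary~\ref{cor14} and Lemma~\ref{lem:score-lip}, plus linear growth, to get a pathwise unique strong solution; both use the score identity $\varrho_t\nabla\log\varrho_t=\nabla\varrho_t$ to show that $\varrho_t$ solves the same Fokker--Planck equation as $\mathrm{Law}(X_t)$; and both identify the two curves via uniqueness of probability solutions, which the paper simply cites from Bogachev--Krylov--R\"ockner--Shaposhnikov, Ch.~9. Your duality and superposition sketch only spells out that cited uniqueness step, and it is more careful than the paper's informal alternative remark about ``two probability solutions of a well-posed SDE''.
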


\begin{proof}
Let $\mathcal{L}_t^\sigma f := \mathfrak{b}_t\cdot\nabla f + \tfrac{\sigma^2}{2}\Delta f$ be
the generator of \eqref{eq:sde-sigma}, with formal adjoint
$(\mathcal{L}_t^\sigma)^*$. Since $\varrho_t$ is a Gaussian smoothing of a compactly supported
measure it is smooth and strictly positive on $\mathbb{R}^D$ for every $t\in(0,1)$, so the
following manipulations are classical. Using the score identity
$\varrho_t \nabla\log\varrho_t = \nabla\varrho_t$,
\begin{align}
    (\mathcal{L}_t^\sigma)^*\varrho_t
    = & -\nabla\cdot(\mathfrak{b}_t \varrho_t) + \frac{\sigma^2}{2}\Delta\varrho_t \notag\\
    = & -\nabla\cdot\!\Big(v_\star \varrho_t + \frac{\sigma^2}{2}\nabla\varrho_t\Big)
       + \frac{\sigma^2}{2}\Delta\varrho_t \notag \\
    = & -\nabla\cdot(v_\star \varrho_t) \notag\\
    = &\; \partial_t \varrho_t.
    \label{eq:fpk-cancel}
\end{align}
In the above calculations, \eqref{eq:fpk-cancel} follows from \eqref{eq:continuity}. This holds \emph{for every} $\sigma\geq 0$ simultaneously: the diffusive term
$\tfrac{\sigma^2}{2}\Delta\varrho_t$ contributed by the noise is exactly
cancelled by the score correction in \eqref{eq:drift-def}, not merely
asymptotically, and \eqref{eq:fpk-cancel} does not depend on $\sigma$ at all.
Hence $t\mapsto\varrho_t\,dx$ is a probability solution of
\begin{equation}
    \partial_t \mu_t = (\mathcal{L}_t^\sigma)^*\mu_t, \qquad t\in[0,1-\delta],
    \qquad \mu_0 = \varrho_0 = \cN(0,I_D),
    \label{eq:fpk-sigma}
\end{equation}
for every admissible $\sigma$, the initial condition being the weak limit in
\eqref{eq:continuity}.

By Corollary~\ref{cor14}, $v_\star(\cdot,t)$ is $\cO(D\delta^{-3\kappa_1})$-Lipschitz uniformly
over $t\in[0,1-\delta]$, and by Lemma~\ref{lem:score-lip}, $\nabla\log\varrho_t(\cdot)$ is
$\cO(D\delta^{-4\kappa_1})$-Lipschitz uniformly over the same range. Hence
\[
    \mathfrak{b}_t(x) = v_\star(x,t) + \tfrac{\sigma^2}{2}\nabla\log\varrho_t(x)
\]
is globally Lipschitz in $x$, uniformly over $t\in[0,1-\delta]$, with Lipschitz constant
$\Lambda_\delta := \cO\big(D(1\vee\sigma^2)\delta^{-4\kappa_1}\big)$, and is of at most linear
growth, since $\|\mathfrak{b}_t(0)\| \lesssim \delta^{-\kappa_1}\sqrt D$ by Corollary~\ref{cor14}
and $\|\nabla\log\varrho_t(0)\| \le \beta_t^{-2}\alpha_t\E\|X_1\| \lesssim \delta^{-2\kappa_1}\sqrt{D}$.
Since $\sigma$ is a fixed positive constant, the classical It\^o
existence--uniqueness theorem for SDEs with globally Lipschitz coefficients of linear growth
\citep[Theorem~2.9, Chapter~5]{karatzas1991brownian} gives a pathwise unique strong solution
$\{X_t\}_{t\in[0,1-\delta]}$ of \eqref{eq:sde-sigma} for any initial law with
finite second moment, in particular for $\mathrm{Law}(X_0)=\cN(0,I_D)$.

The drift $\mathfrak{b}$ is globally Lipschitz in $x$ uniformly in $t\in [0,1-\delta]$ and of linear growth, and
the diffusion coefficient $\sigma I_D$ is constant and nondegenerate, so the Cauchy problem
\eqref{eq:fpk-sigma} admits at most one probability solution on $[0,1-\delta]$ with initial datum
$\varrho_0$ \citep[Ch.~9]{bogachev2015fokker}. Equivalently, and without appeal to that theory:
$t\mapsto\mathrm{Law}(X_t)$ for the pathwise-unique solution of \eqref{eq:sde-sigma} is itself a
probability solution of \eqref{eq:fpk-sigma} with the same initial datum, and two probability
solutions of a well-posed SDE with the same initial law have equal marginals. From the above arguments, the map,
$t\mapsto\varrho_t$ is also a probability solution of \eqref{eq:fpk-sigma} with
$\varrho_t|_{t=0}=\varrho_0=\cN(0,I_D)$. Uniqueness therefore forces
$\mathrm{Law}(X_t)=\varrho_t$ for every $t\in[0,1-\delta]$.
\end{proof}

\section{Omitted Proofs from Section~\ref{sec_pf_sketch}}
\subsection{Error Decomposition}
\label{error_decomposition}
\lemoracle*
\begin{proof}
Write $\Omega = C([0,1-\delta];\Real^D)$ for path space and let $\bP$ denote the law on $\Omega$ of the
solution $\{X_t\}$ of
\[
  \diff X_t = \big(a_t v_\star(X_t,t) - b_t X_t\big)\diff t + \sigma \diff W_t,
  \qquad X_0 \sim \cN(0, I_D),
\]
which exists and is pathwise unique by Theorem~\ref{thm:equivalence}. Let $\bQ$ denote the law on $\Omega$ of the
Euler--Maruyama process \eqref{exponential_integrator}, namely the solution of
\[
  \diff \hX_t = \big(a_{t_k}\hv(\hX_{t_k},t_k) - b_t \hX_t\big)\diff t + \sigma \diff W_t,
  \quad t \in [t_k, t_{k+1}), \qquad \hX_0 \sim \cN(0,I_D).
\]
Both are driven by the same diffusion coefficient $\sigma > 0$ and started from the same law, so
$\bP$ and $\bQ$ differ only through their drifts. Writing these as functionals of the path,
\[
  \mu^{\bP}_t(\omega) = a_t v_\star(\omega_t, t) - b_t\omega_t,
  \qquad
  \mu^{\bQ}_t(\omega) = a_{t_k}\hv(\omega_{t_k}, t_k) - b_t\omega_t
  \quad \text{for } t \in [t_k,t_{k+1}),
\]
both are adapted, and the linear terms $-b_t\omega_t$ coincide, so that
\begin{equation}
  \mu^{\bP}_t(\omega) - \mu^{\bQ}_t(\omega)
  = a_t v_\star(\omega_t,t) - a_{t_k}\hv(\omega_{t_k},t_k),
  \qquad t \in [t_k, t_{k+1}).
  \label{eq:driftdiff}
\end{equation}
By Corollary~\ref{cor14}, $\|v_\star(x,t)\| \lesssim \delta^{-\kappa_1}(\|x\| + \sqrt D)$ uniformly on
$[0,1-\delta]$; by D~\ref{a2}(a), $\sup_{t \le 1-\delta}|a_t| \lesssim \delta^{-\kappa_1/2}$; and
$\|\hv(\cdot,\cdot)\| \le \sqrt D\,\cK(\hv) \le \sqrt D K$. Hence the drift difference
\eqref{eq:driftdiff} satisfies
\[
  \big\|\mu^{\bP}_t(\omega) - \mu^{\bQ}_t(\omega)\big\|
  \;\le\; C_\delta\Big(1 + \sup_{s \le t}\|\omega_s\|\Big),
  \qquad t \in [0,1-\delta],
  \qquad C_\delta \;\lesssim\; \delta^{-\kappa_1/2}\sqrt D\,\big(\delta^{-\kappa_1} + K\big),
\]
which is finite for each fixed $\delta > 0$. Since Bene\v{s}'s condition requires only linear growth
of the drift difference in the sup-norm of the path, with no restriction on the constant, it holds here
\citep[Corollary~3.5.16]{karatzas1991brownian}, and the stochastic exponential below is a true
martingale. Girsanov's theorem \citep[Theorem~8.6.5]{oksendal2003stochastic}
therefore gives $\bQ \sim \bP$ with 
\[
  \frac{\diff \bP}{\diff \bQ}(\omega)
  = \exp\left(\frac{1}{\sigma}\int_0^{1-\delta}\big\langle \mu^{\bP}_t - \mu^{\bQ}_t, \diff W_t\big\rangle
    - \frac{1}{2\sigma^2}\int_0^{1-\delta}\big\|\mu^{\bP}_t - \mu^{\bQ}_t\big\|^2 \diff t\right),
\]
and taking $\E_{\bP}\log(\diff\bP/\diff\bQ)$, the stochastic integral being a true martingale under $\bP$,
\begin{equation}
  \operatorname{KL}(\bP \,\|\, \bQ)
  = \frac{1}{2\sigma^2}\sum_{k=0}^{N-1}\int_{t_k}^{t_{k+1}}
    \E\big\|a_t v_\star(X_t,t) - a_{t_k}\hv(X_{t_k},t_k)\big\|^2 \diff t .
  \label{eq:pathkl}
\end{equation}
The marginal laws at the terminal time are $P_{1-\delta} = \bP \circ \pi_{1-\delta}^{-1}$ and
$\hP_{1-\delta}(\hv) = \bQ \circ \pi_{1-\delta}^{-1}$, where $\pi_{1-\delta}(\omega) = \omega_{1-\delta}$,
the first identification being Theorem~\ref{thm:equivalence}. Since the Kullback--Leibler divergence contracts
under the pushforward by $\pi_{1-\delta}$,
\begin{equation}
  \operatorname{KL}\left(P_{1-\delta}\,\|\,\hP_{1-\delta}(\hv)\right) \le \operatorname{KL}(\bP\,\|\,\bQ).
  \label{eq:dpi}
\end{equation}
It remains to split \eqref{eq:pathkl}. Adding and subtracting $a_{t_k}v_\star(X_{t_k},t_k)$ and using
$\|u+v\|^2 \le 2\|u\|^2 + 2\|v\|^2$,
\begin{align*}
  \operatorname{KL}(\bP\,\|\,\bQ)
  \le & \; \frac{1}{\sigma^2}\sum_{k=0}^{N-1}\int_{t_k}^{t_{k+1}}
    \E\big\|a_t v_\star(X_t,t) - a_{t_k} v_\star(X_{t_k},t_k)\big\|^2 \diff t \\
  & + \frac{1}{\sigma^2}\sum_{k=0}^{N-1}\int_{t_k}^{t_{k+1}} a_{t_k}^2\,
    \E\big\|v_\star(X_{t_k},t_k) - \hv(X_{t_k},t_k)\big\|^2 \diff t .
\end{align*}
The second sum has an integrand constant in $t$ on each $[t_k,t_{k+1})$ and so equals
$\sigma^{-2}\sum_k h_k a_{t_k}^2\,\E\|v_\star(X_{t_k},t_k) - \hv(X_{t_k},t_k)\|^2$. For the first, adding and
subtracting $a_t v_\star(X_{t_k},t_k)$ and splitting once more,
\[
  \big\|a_t v_\star(X_t,t) - a_{t_k}v_\star(X_{t_k},t_k)\big\|^2
  \le 2a_t^2\big\|v_\star(X_t,t) - v_\star(X_{t_k},t_k)\big\|^2
    + 2(a_t - a_{t_k})^2\big\|v_\star(X_{t_k},t_k)\big\|^2 .
\]
Combining the three displays with \eqref{eq:dpi} gives the claim. \qedhere
\end{proof}
 \subsection{Discretization Error}

\lemdis*

\begin{proof}
Throughout, $t \in [t_k, t_{k+1})$ for the relevant $k$, and all expectations are under the law of
$\{X_t\}_{t\in[0,1-\delta]}$ solving \eqref{eq:sde}. By Theorem~\ref{thm:equivalence},
$\mathrm{Law}(X_s) = \varrho_s = \mathrm{Law}(\alpha_s X_1 + \beta_s X_0)$ for each fixed $s$. Since $X_1 \sim \pemp$ is supported in
$[0,1]^D$ by Assumption~\ref{a1} and $X_0 \sim \cN(0,I_D)$,
\begin{equation}
  \E\|X_s\|^2 \le 2\alpha_s^2\,\E\|X_1\|^2 + 2\beta_s^2\,\E\|X_0\|^2 \lesssim D,
  \qquad s \in [0,1-\delta].
  \label{eq:secondmoment}
\end{equation}
Combining \eqref{eq:secondmoment} with Corollary~\ref{cor14},
\begin{equation}
  \E\big\|v_\star(X_s,s)\big\|^2 \lesssim \delta^{-2\kappa_1}\Big(\E\|X_s\|^2 + D\Big)
  \lesssim D\,\delta^{-2\kappa_1}.
  \label{eq:vstarmoment}
\end{equation}

We write $\mu_s(x) := a_s v_\star(x,s) - b_s x$ for the drift
of \eqref{eq:sde}. By \eqref{eq:vstarmoment}, D~\ref{a2}(a) (which gives $a_s^2 \lesssim \delta^{-\kappa_1}$
and $|b_s| \lesssim \delta^{-\kappa_1}$) and \eqref{eq:secondmoment},
\begin{equation}
  \E\|\mu_s(X_s)\|^2 \le 2a_s^2\,\E\|v_\star(X_s,s)\|^2 + 2b_s^2\,\E\|X_s\|^2
  \lesssim D\,\delta^{-3\kappa_1}.
  \label{eq:driftmoment}
\end{equation}
Integrating \eqref{eq:sde} from $t_k$ to $t$,
\[
  X_t - X_{t_k} = \int_{t_k}^{t}\mu_s(X_s)\,\diff s + \sigma\big(W_t - W_{t_k}\big).
\]
By $\|u+v\|^2 \le 2\|u\|^2+2\|v\|^2$, the Cauchy--Schwarz inequality applied to the drift
integral, and $\E\|W_t - W_{t_k}\|^2 = D(t-t_k)$,
\begin{align}
  \E\big\|X_t - X_{t_k}\big\|^2
  &\le 2(t-t_k)\int_{t_k}^{t}\E\|\mu_s(X_s)\|^2\,\diff s + 2\sigma^2 D\,(t-t_k) \nonumber\\
  &\lesssim D\,\delta^{-3\kappa_1}(t-t_k)^2 + \sigma^2 D\,(t-t_k) \nonumber\\
  &\lesssim\; D\big(\sigma^2 + \delta^{-3\kappa_1}h\big)(t-t_k).
  \label{eq:increment}
\end{align}

Note that the $\sigma^2$ appearing here is exactly cancelled by the factor $\sigma^{-2}$ carried by
Lemma~\ref{lem_oracle}, so the resulting contribution to the KL bound is independent of $\sigma$; see
Remark~\ref{rem:sigma} below.

 Splitting in space and in time,
\begin{align}
  & \E\big\|v_\star(X_t,t) - v_\star(X_{t_k},t_k)\big\|^2 \nonumber\\
  \le\; & 2\,\E\big\|v_\star(X_t,t) - v_\star(X_{t_k},t)\big\|^2
    + 2\,\E\big\|v_\star(X_{t_k},t) - v_\star(X_{t_k},t_k)\big\|^2 .
  \label{eq:split}
\end{align}
For the first summand, Corollary~\ref{cor14} gives $\operatorname{Lip}(v_\star(\cdot,t)) \lesssim D\delta^{-3\kappa_1}$
uniformly in $t \in [0,1-\delta]$. Thus, from \eqref{eq:increment},
\begin{equation}
  \E\big\|v_\star(X_t,t) - v_\star(X_{t_k},t)\big\|^2
  \lesssim D^2\delta^{-6\kappa_1}\,\E\|X_t - X_{t_k}\|^2
  \lesssim D^3\delta^{-6\kappa_1}\big(\sigma^2 + \delta^{-3\kappa_1}h\big)(t-t_k).
  \label{eq:spatial}
\end{equation}
For the second summand, the two evaluations share the spatial argument $X_{t_k}$, so only the
one-time marginal of $X_{t_k}$ is involved. By Lemma~\ref{lip_general},
$\|\partial_t v_\star(x,t)\| \lesssim \delta^{-\kappa_1}(1+\|x\|)$ uniformly on $[0,1-\delta]$. Hence,
by the mean value theorem and \eqref{eq:secondmoment},
\begin{equation}
  \E\big\|v_\star(X_{t_k},t) - v_\star(X_{t_k},t_k)\big\|^2
  \lesssim \delta^{-2\kappa_1}\,\E\big(1 + \|X_{t_k}\|^2\big)(t-t_k)^2
  \lesssim D\,\delta^{-2\kappa_1}(t-t_k)^2 .
  \label{eq:temporal}
\end{equation}
 Using $a_t^2 \lesssim \delta^{-\kappa_1}$, then \eqref{eq:split},
\eqref{eq:spatial} and \eqref{eq:temporal}, and finally
$\int_{t_k}^{t_{k+1}}(t-t_k)\diff t = h^2/2$ and $\int_{t_k}^{t_{k+1}}(t-t_k)^2\diff t = h^3/3$,
\begin{align*}
   \sum_{k=0}^{N-1}\int_{t_k}^{t_{k+1}} a_t^2\,\E\big\|v_\star(X_t,t) - v_\star(X_{t_k},t_k)\big\|^2 \diff t 
  \lesssim\; & \delta^{-\kappa_1}\left[
      D^3\delta^{-6\kappa_1}\big(\sigma^2 + \delta^{-3\kappa_1}h\big)\sum_{k=0}^{N-1}\frac{h^2}{2}
      + D\,\delta^{-2\kappa_1}\sum_{k=0}^{N-1}\frac{h^3}{3}\right] \\
  \lesssim\; & D^3\delta^{-7\kappa_1}\big(\sigma^2 + \delta^{-3\kappa_1}h\big)\,h
      + D\,\delta^{-3\kappa_1}h^2 \\
  \lesssim\; & D^3\big(\sigma^2 \vee 1\big)\,\delta^{-10\kappa_1}\,h ,
\end{align*}
where the second line uses $Nh = 1-\delta \le 1$, so that $\sum_k h^2 \le h$ and $\sum_k h^3 \le h^2$,
and the last uses $h \le 1$ to absorb $h^2$ into $h$. This is \eqref{eq:dis1}.

To bound \eqref{eq:dis2}, we note that this term involves $v_\star$ at the single time $t_k$ only, so
\eqref{eq:vstarmoment} applies directly. By D~\ref{a2}(b), $|a_t - a_{t_k}| \lesssim \delta^{-\kappa_2}(t-t_k)$, whence
\[
  \sum_{k=0}^{N-1}\int_{t_k}^{t_{k+1}} (a_t - a_{t_k})^2\,\E\big\|v_\star(X_{t_k},t_k)\big\|^2 \diff t
  \lesssim D\,\delta^{-2\kappa_1 - 2\kappa_2}\sum_{k=0}^{N-1}\int_{t_k}^{t_{k+1}} (t-t_k)^2 \diff t
  \lesssim D\,\delta^{-2(\kappa_1+\kappa_2)}h^2,
\]
using $\sum_k h^3 \le h^2$. This is \eqref{eq:dis2}. Since $h \le 1$ and
$2(\kappa_1+\kappa_2) \le 10\kappa_1 + 2\kappa_2$, the two bounds combine as stated. \qedhere
\end{proof}

\begin{remark}\label{rem:sigma} \normalfont
The diffusion coefficient cannot be tuned to remove the $O(h)$ term. Lemma~\ref{lem_oracle} carries a
factor $\sigma^{-2}$, while the leading term of \eqref{eq:dis1} carries $\sigma^2$ through the Brownian
increment in \eqref{eq:increment}; the two cancel, and the discretization contribution to
$\operatorname{KL}(P_{1-\delta}\|\hP_{1-\delta}(\hv))$ is of order $D^3\delta^{-c_1}h$, uniformly in
$\sigma$.
\end{remark}

\subsection{Approximation Result}
\approxlem*

\begin{proof}
Fix $\epsilon \in (0,1)$ and $R \ge 1$. By Corollary~\ref{cor14}, for every $t \in [0,1-\delta]$,
\begin{equation}
  \operatorname{Lip}\big(v_\star(\cdot,t)\big) =: L_t \;\lesssim\; L_\delta \asymp \delta^{-3\kappa_1},
  \;
  \sup_{x \in [-R,R]^D}\|v_\star(x,t)\| \;\lesssim\; \delta^{-\kappa_1}\big(R\sqrt{D} + \sqrt{D}\big) \;\lesssim\; \delta^{-\kappa_1} R \sqrt{D}.
  \label{eq:vstarbox}
\end{equation}
Let,
\begin{equation}
  K_0 \;:=\; c\,\delta^{-(3\kappa_1)\vee 1} R \sqrt{D}
  \label{eq:K0}
\end{equation}
with $c$ a large enough absolute constant that $K_0 \ge 1 + \sup_{t \in [0,1-\delta]}\sup_{x \in [-R,R]^D}\|v_\star(x,t)\|_\infty$, which is possible by \eqref{eq:vstarbox}.
Fix $i \in \{0,\dots,N\}$ and $\ell \in [D]$.

Define $g_i^{(\ell)} : [0,1]^D \to \Real$ by
\[
  g_i^{(\ell)}(y) \;=\; \frac{1}{2RL_{t_i}}\Big(v_\star^{(\ell)}\big(2Ry - R\mathbf{1},\, t_i\big) - v_\star^{(\ell)}\big(-R\mathbf{1},\, t_i\big)\Big).
\]
Since $x \mapsto v_\star^{(\ell)}(x,t_i)$ is $L_{t_i}$-Lipschitz, $g_i^{(\ell)}$ is $1$-Lipschitz on $[0,1]^D$ and $g_i^{(\ell)}(0) = 0$. Let $h_i^{(\ell)}$ be the $\varepsilon$-approximating ReLU network for $g_i^{(\ell)}$ furnished by Lemma~\ref{approx_master}, and set
\[
  \tilde{v}^{(\ell)}_{t_i}(x) \;=\; 2RL_{t_i}\, h^{(\ell)}_i\!\left(\frac{x + R\mathbf{1}}{2R}\right) + v_\star^{(\ell)}\big(-R\mathbf{1}, t_i\big).
\]
By construction,
\begin{align}
    \sup_{x \in [-R,R]^D} \big|\tilde{v}^{(\ell)}_{t_i}(x) - v_\star^{(\ell)}(x, t_i)\big|
    = & \; 2 R L_{t_i} \sup_{y \in [0,1]^D} \Big|h^{(\ell)}_i(y) - g^{(\ell)}_i(y)\Big| \nonumber\\
    \le & \; 2R L_{t_i}\varepsilon \nonumber\\
    \lesssim & \; R\, L_\delta\, \varepsilon. \label{j3}
\end{align}
Let $\tilde v_{t_i}$ be obtained by stacking $\{\tilde v^{(\ell)}_{t_i}\}_{\ell=1}^D$. By Lemma~\ref{approx_master}, $\|h^{(\ell)}_i\|_\infty \le \sqrt D$, so on all of $\Real^D$
\(\cW(\tilde{v}_{t_i}) \lesssim D\big(\varepsilon^{-D} \log(1/\varepsilon) + D\big), \;
  \cL(\tilde{v}_{t_i}) \lesssim \log(1/\varepsilon) + D, \;
  \cB(\tilde{v}_{t_i}) \lesssim R L_\delta + \delta^{-\kappa_1}R\sqrt D \lesssim K_0,
\)
and, since $h^{(\ell)}_i$ is $10D$-Lipschitz in $\ell_1$ and the inner affine map has gain $1/(2R)$,
\begin{equation}
  \operatorname{Lip}(\tilde v_{t_i}) \;\le\; 2RL_{t_i} \cdot 10 D \cdot \tfrac{1}{2R}\cdot \sqrt D \;\lesssim\; D^{3/2} L_\delta .
  \label{eq:liptilde}
\end{equation}

We define the clipping function,  $\operatorname{cl}_{K_0}(u) = \relu(u + K_0) - \relu(u - K_0) - K_0$, so that $\operatorname{cl}_{K_0}(u) = u$ for $|u| \le K_0$, $|\operatorname{cl}_{K_0}| \le K_0$ everywhere, and $\operatorname{cl}_{K_0}$ is $1$-Lipschitz; write $\operatorname{cl}_{K_0}(\tilde v_{t_i})$ for its coordinatewise application. By \eqref{eq:vstarbox}, \eqref{eq:K0} and \eqref{j3}, clipping does not alter $\tilde v_{t_i}$ on $[-R,R]^D$ once $\varepsilon \le 1$. Next set
\[
  \chi(x) \;=\; \relu\!\left(1 - \sum_{\ell=1}^D \relu\big(\relu(x_\ell) + \relu(-x_\ell) - R\big)\right) ,
\]
a ReLU network of width $O(D)$ and depth $3$ with $\chi \equiv 1$ on $[-R,R]^D$, $\chi \equiv 0$ off $[-(R+1),R+1]^D$, $0 \le \chi \le 1$, and $\operatorname{Lip}(\chi) \lesssim D$. We define $t_{-1} = -1$ and $t_{N+1} = 1$, let $\delta_i = \frac{1}{2}\big( (t_i - t_{i-1}) \wedge (t_{i+1} - t_i)\big)$ and
\[\xi_{a, b}(x) = \relu\left(\frac{x+a}{a-b}\right) - \relu\left(\frac{x+b}{a-b}\right) - \relu\left(\frac{x-b}{a-b}\right) + \relu\left(\frac{x-a}{a-b}\right),\]
so that $\xi_i := \xi_{\delta_i/2,\,\delta_i/4}(\,\cdot\, - t_i)$ satisfies $0 \le \xi_i \le 1$, $\xi_i \equiv 1$ on $|t - t_i| \le \delta_i/4$, and $\operatorname{supp}\xi_i \subseteq \{|t-t_i| \le \delta_i/2\}$; its weights are of order $1/\delta_i \asymp N$. Since $\delta_i \le \frac12|t_i - t_j|$ for $j \ne i$, the supports of $\{\xi_i\}$ are pairwise disjoint and $\xi_j(t_i) = 0$ for $j \ne i$. Let
\[
  \psi_i(x,t) \;=\; \min\big(\chi(x), \xi_i(t)\big) \;=\; \chi(x) - \relu\big(\chi(x) - \xi_i(t)\big),
\]
which is again a ReLU network, takes values in $[0,1]$, equals $1$ on $[-R,R]^D \times \{|t-t_i|\le \delta_i/4\}$, and vanishes whenever $\chi(x) = 0$ or $\xi_i(t) = 0$.
 
Finally, let $\Pi$ denote the ReLU approximate-multiplication network on $[-K_0,K_0]\times[0,1]$ of accuracy $\epsilon/(4\sqrt D)$, of width and depth $\cO\big(\log(K_0\sqrt D/\epsilon)\big)$, and satisfying $\Pi(u,0) = 0$ identically. We define
\[
  v_0^{(\ell)}(x,t) \;=\; \sum_{i=0}^{N} \Pi\Big( \operatorname{cl}_{K_0}\big(\tilde{v}^{(\ell)}_{t_i}(x)\big),\; \psi_i(x,t) \Big),
  \qquad \ell \in [D].
\]

Fix $i \in \{0,\dots,N\}$ and $x \in [-R,R]^D$. Then $\psi_j(x,t_i) = 0$ for $j \ne i$, so those summands vanish exactly, while $\psi_i(x,t_i) = 1$. Hence, by \eqref{j3},
\[
  \big|v_0^{(\ell)}(x,t_i) - v^{(\ell)}_\star(x,t_i)\big| \;\le\; \frac{\epsilon}{4\sqrt D} + 2RL_{t_i}\varepsilon .
\]
Choosing $\varepsilon = \epsilon\big/\big(8\sqrt{D}\,R L_\delta\big)$ makes the second term at most $\epsilon/(4\sqrt D)$, so that the coordinatewise error is at most $\epsilon/(2\sqrt D)$ and
\[
  \sup_{x\in[-R,R]^D}\big\|v_0(x,t_i) - v_\star(x,t_i)\big\|
  \;\le\; \sqrt D \cdot \frac{\epsilon}{2\sqrt D} \;\le\; \epsilon
  \qquad \text{for every } i = 0,\dots,N.
\]
 
With this $\varepsilon$ we have $\varepsilon^{-D} \lesssim R^D \delta^{-3\kappa_1 D}\epsilon^{-D}$ and $\log(1/\varepsilon) \asymp \log(K_0\sqrt D/\epsilon) \asymp \log(1/\delta) + \log(1/\epsilon) + \log R$, whence, summing the $N+1$ parallel branches and recalling that $D$ is fixed,
\begingroup
\allowdisplaybreaks
\begin{align*}
    \cW(v_0) \lesssim & \; N\Big( \varepsilon^{-D}\log(1/\varepsilon) + \log(K_0\sqrt D/\epsilon) + D\Big)
      \;\lesssim\; N R^D \delta^{-3\kappa_1 D}\epsilon^{-D}\big(\log(1/\delta) + \log(1/\epsilon) + \log R\big),\\
    \cL(v_0) \lesssim & \; \log(1/\varepsilon) + \log(K_0\sqrt D/\epsilon) + D \;\lesssim\; \log(1/\delta) + \log(1/\epsilon) + \log R,\\
    \cB(v_0) \lesssim & \; \max\big\{K_0,\, 1/\delta_i,\, 1\big\} \;\lesssim\; \delta^{-(3\kappa_1)\vee 1} R\sqrt{D} + N,\\
    \cK(v_0) \lesssim & \; K_0 \;\lesssim\; \delta^{-(3\kappa_1)\vee 1} R \sqrt{D} .
\end{align*}
\endgroup
Moreover $v_0(x,t) = 0$ for $x \notin [-(R+1),R+1]^D$, since then $\chi(x) = 0$, hence $\psi_i(x,t) = 0$ for every $i$ and $\Pi(\cdot,0) = 0$. Finally, for fixed $t$ the map $x \mapsto \Pi(u(x),\psi(x))$ has $|\partial_u \Pi| \le 1$ and $|\partial_\psi \Pi| \lesssim K_0$, so by \eqref{eq:liptilde} and $\operatorname{Lip}(\chi) \lesssim D$,
\begin{equation}
  \operatorname{Lip}\big(v_0(\cdot,t)\big) \;\lesssim\; D^{3/2}L_\delta + D K_0 \;\lesssim\; D^{5/2}\, \delta^{-3\kappa_1} R ,
  \qquad t \in [0,1-\delta].
  \label{eq:lipfinal}
\end{equation}
\end{proof}
\section{Proof of the Main Result}
\subsection{Proof of Theorem~\ref{main_hp}}
\label{mainpf}
\mainthm*
\begin{proof}
Throughout, $R \asymp \sqrt{D \log(1/\eta)}$ denotes the truncation radius of
Lemma~\ref{approxlem}, and
$K$ denotes the output bound of the approximating network $v_0$ of that lemma, namely
\[
  K \;\lesssim\; \delta^{-(3\kappa_1) \vee 1} R \sqrt{D}
  \;\asymp\; \delta^{-(3\kappa_1) \vee 1} D \sqrt{\log(1/\eta)} .
\]
We abbreviate
\[
  c_1 \;:=\; 10\kappa_1 + 2\kappa_2 ,
\]
the exponent appearing in Lemma~\ref{lemdis}. The ambient dimension $D$ and the diffusion
coefficient $\sigma$ are held fixed, so factors of $D$ and $\sigma$ are absorbed by $\lesssim$ in the
final rate; we nonetheless track those of $D$ explicitly below. By Remark~\ref{rem:sigma}, the
factor $\sigma^2$ carried by Lemma~\ref{lemdis} cancels against the $\sigma^{-2}$ of
Lemma~\ref{lem_oracle}, so the discretization contribution below is in fact uniform in $\sigma$.
 
\medskip
\textbf{Proof for estimator $\hv_n$}:\\
Since conditioning on $X_{t_k}$ eliminates the cross term in
$\E\|v(X_{t_k},t_k) - \dot X_{t_k}\|^2$ exactly as in the proof of the conditional
flow-matching identity (Lemma~\ref{cftofm}), so
$\widehat{\cL}_n(v) = \sum_{k=0}^{N-1} h_k \E\|v(X_{t_k},t_k) - v_\star(X_{t_k},t_k)\|^2 +
\widehat{\cL}_n(v_\star)$, the second term being independent of $v$. Hence $\hv_n$, the
minimizer of $\widehat{\cL}_n$ over $\cV$, also minimizes the left-hand side below, and
\begingroup
\allowdisplaybreaks
    \begin{align}
     \sum_{k=0}^{N-1} h_k  \E \left\|\hv_n(X_{t_k}, t_k) - v_\star(X_{t_k},t_k) \right\|^2 
      = & \inf_{v \in \cV} \sum_{k=0}^{N-1} h_k  \E \left\|v(X_{t_k}, t_k) - v_\star(X_{t_k},t_k) \right\|^2 \nonumber\\
       \le & \sum_{k=0}^{N-1} h_k  \E \left\|v_0(X_{t_k}, t_k) - v_\star(X_{t_k},t_k) \right\|^2 \nonumber\\
       = & \sum_{k=0}^{N-1} h_k \E \left\|v_0(X_{t_k}, t_k) - v_\star(X_{t_k},t_k) \right\|^2 \one\left( \|X_{t_k}\| \le R\right) \nonumber\\
       & + \sum_{k=0}^{N-1} h_k \E \left\|v_0(X_{t_k}, t_k) - v_\star(X_{t_k},t_k) \right\|^2 \one\left( \|X_{t_k}\| > R\right) \nonumber\\
       \lesssim & \sum_{k=0}^{N-1} h_k \epsilon^2 + \sum_{k=0}^{N-1} h_k \left(D K^2 \eta + \delta^{-2\kappa_1} D^{2} \eta^{1/2} \right) \label{j110}\\
       \lesssim & \ \epsilon^2 + D^2 \delta^{-2\left((3 \kappa_1) \vee 1\right)} \eta^{1/2} \label{j5}.
    \end{align}
    \endgroup
Here \eqref{j110} follows from Lemmata~\ref{approxlem} and \ref{lem12}, bounding the second
summand by $2\E\|v_0\|^2 \one(\cdot) + 2\E\|v_\star\|^2 \one(\cdot)$; the indicator confines
$X_{t_k}$ to the Euclidean ball of radius $R$, which is contained in $[-R,R]^D$, so
Lemma~\ref{approxlem} applies on the first event. Inequality \eqref{j5} uses $\sum_k h_k \le 1$
together with $D K^2 \eta \lesssim D^2\delta^{-2((3\kappa_1)\vee 1)}\log(1/\eta)\,\eta$ and
$\log(1/\eta)\,\eta \le \eta^{1/2}$ for $\eta \le \eta_0$, and $2\kappa_1 \le 2\big((3\kappa_1)\vee 1\big)$.
 
From Lemmata~\ref{lem_oracle} and \ref{lemdis}, we observe that,
    \begingroup
    \allowdisplaybreaks
    \begin{align*}
         \operatorname{KL} \left(P_{1-\delta}\| \hP_{1-\delta}(\hv_n)\right) 
       \lesssim &  \sum_{k=0}^{N-1} \int_{t_k}^{t_{k+1}}  a_t^2\,\E  \|v_\star(X_t,t) -  v_\star(X_{t_k}, t_k)\|^2 \diff t +    \sum_{k=0}^{N-1} \int_{t_k}^{t_{k+1}} (a_t - a_{t_k})^2 \E \|v_\star(X_{t_k}, t_k)\|^2 \diff t \nonumber \\
    & +   \sum_{k=0}^{N-1} h_k a_{t_k}^2 \E  \| v_\star(X_{t_k},t_k) - \hv_n(X_{t_k}, t_k)\|^2 \nonumber\\
    \lesssim & \ D^3 \delta^{-c_1} h +  D\,\delta^{-2( \kappa_1+\kappa_2)} h^2 + \delta^{-\kappa_1} \left(\epsilon^2 + D^2 \delta^{-2\left((3 \kappa_1) \vee 1\right)} \eta^{1/2}\right)\\
     \lesssim & \ D^3 \delta^{-c_1} h + \delta^{-\kappa_1} \left(\epsilon^2 + D^2 \delta^{-2\left((3 \kappa_1) \vee 1\right)} \eta^{1/2}\right),
    \end{align*}
\endgroup
where the last step using $h \le 1$ and $2(\kappa_1+\kappa_2) \le c_1$, so that the second summand is
absorbed by the first.

Hence, from Lemma~\ref{w2},
 \begingroup
 \allowdisplaybreaks
\begin{align}
     \fW_p(\pd, \sT_M(\hP_{1-\delta}(\hv_n))) 
    \lesssim & \ \fW_p(\pd, \pemp) + \delta^{\kappa_3} + M^{-\frac{1}{2p}} +  M \left(\operatorname{KL}\left(P_{1-\delta}, \hP_{1-\delta}(\hv_n)\right)\right)^{\frac{1}{2p}} \nonumber \\
     \lesssim & \ \fW_p(\pd, \pemp) + \delta^{\kappa_3} + M^{-\frac{1}{2p}} \nonumber\\
     & +  M \left(D^3\delta^{-c_1} h + \delta^{-\kappa_1} \left(\epsilon^2 + D^2\delta^{-2\left((3 \kappa_1) \vee 1\right)} \eta^{1/2}\right)\right)^{\frac{1}{2p}} \nonumber \\
     \le & \ \fW_p(\pd, \pemp) + \delta^{\kappa_3} + M^{-\frac{1}{2p}} \nonumber\\
     & +  M \left(D^{\frac{3}{2p}}\delta^{-\frac{c_1}{2p}} h^{\frac{1}{2p}} + \delta^{-\frac{\kappa_1}{2p}} \epsilon^{1/p} + D^{\frac{1}{p}}\delta^{-\frac{\kappa_1}{2p}-\frac{(3 \kappa_1) \vee 1}{p}} \eta^{\frac{1}{4p}}\right)\label{i20},
\end{align}
\endgroup
by the sub-additivity of $u \mapsto u^{1/2p}$.

We choose
\(M = n^{\frac{2p}{d}}, \;
\delta = n^{-\frac{1}{d \kappa_3}}, \;
h \asymp n^{-2p\left( \frac{2p+1}{d} + \frac{c_1}{2dp\kappa_3}\right)}
  = n^{-\frac{2p(2p+1)}{d} - \frac{10\kappa_1+2\kappa_2}{d\kappa_3}}, \;
\epsilon = n^{-p\left( \frac{2p+1}{d} + \frac{\kappa_1}{2dp\kappa_3}\right)},\) and 
\(\eta = n^{-4p \left(\frac{2p+1}{d} + \frac{\kappa_1 + 2\left((3\kappa_1)\vee 1\right)}{2dp\kappa_3}\right)}.\)

Each of the remaining terms of \eqref{i20} is then $O(n^{-1/d})$. 
Hence,
\begin{align*}
    \fW_p(\pd, \sT_M(\hP_{1-\delta}(\hv_n)))  \lesssim \fW_p(\pd, \pemp) + n^{-1/d} \lesssim n^{-1/d}  + n^{-\frac{1}{2p}} \log^{\frac{1}{2p}}(1/\xi),
\end{align*}
with probability at least $1-\xi$, by Lemma~\ref{wcontrol}.
 
\medskip
\textbf{Proof for Estimator $\hvmc$}: \\
Write $\epsilon_{\mathrm{c}}$ for the radius of the $\ell_\infty$-cover of $\cV$ appearing in
Lemma~\ref{mc_1}, to distinguish it from the approximation accuracy $\epsilon$ of
Lemma~\ref{approxlem}. From Lemma~\ref{mc_1} we recall that, with probability at least
$1 - \beta$,
\begin{align*}
        & \sum_{k=0}^{N-1} h_k   \E \|\hvmc(X_{t_k}, t_k) - v_\star(X_{t_k}, t_k)\|^2 \nonumber\\
        \lesssim & \sum_{k=0}^{N-1} h_k \E\|v_0(X_{t_k}, t_k) - v_\star(X_{t_k}, t_k)\|^2
          + D K^2 \sqrt{\frac{\Lambda_\beta}{m}}+  D\,\epsilon_{\mathrm{c}} K,
    \end{align*}
where, $ \Lambda_\beta = \log\frac{N \left( 1+ \cN\left(\epsilon_{\mathrm{c}}; \cV, \ell_\infty\right) \right)}{\beta}.$ We take
\(m \asymp \epsilon_{\mathrm{c}}^{-2} K^{2}\, \Lambda_{\xi'},\)
which makes the middle term $O(D\,\epsilon_{\mathrm{c}} K)$. This ensures that with probability at least $1-\xi^\prime$,
\begingroup
\allowdisplaybreaks
    \begin{align}
        \sum_{j=0}^{N-1} h_j  \E \left\|\hvmc(X_{t_j}, t_j) - v_\star(X_{t_j},t_j) \right\|^2 
       \lesssim & \sum_{j=0}^{N-1} h_j  \E \left\|v_0(X_{t_j}, t_j) - v_\star(X_{t_j},t_j) \right\|^2 + D\,\epsilon_{\mathrm{c}} K \nonumber\\
       = & \sum_{j=0}^{N-1} h_j \E \left\|v_0(X_{t_j}, t_j) - v_\star(X_{t_j},t_j) \right\|^2 \one\left(\|X_{t_j}\| \le R\right) \nonumber\\
       & + \sum_{j=0}^{N-1} h_j \E \left\|v_0(X_{t_j}, t_j) - v_\star(X_{t_j},t_j) \right\|^2 \one\left( \|X_{t_j}\| > R\right)  + D\,\epsilon_{\mathrm{c}} K \nonumber\\
       \lesssim & \sum_{k=0}^{N-1} h_k \epsilon^2 + \sum_{k=0}^{N-1} h_k \left(D K^2 \eta + \delta^{-2\kappa_1} D^{2} \eta^{1/2} \right) + D\,\epsilon_{\mathrm{c}} K \label{j10}\\
       \lesssim & \ \epsilon^2 + D\,\epsilon_{\mathrm{c}} K + D^2\delta^{-2\left((3 \kappa_1) \vee 1\right)} \eta^{1/2}. \label{i19}
    \end{align}
    \endgroup
 From Lemmata~\ref{lem_oracle} and \ref{lemdis}, exactly as above,
    \begin{align*}
       \operatorname{KL} \left(P_{1-\delta}\| \hP_{1-\delta}(\hvmc)\right)
    \lesssim & \ D^3\delta^{-c_1} h  + \delta^{-\kappa_1} \left(\epsilon^2 + D\,\epsilon_{\mathrm{c}} K + D^2 \delta^{-2\left((3 \kappa_1) \vee 1\right)} \eta^{1/2}\right).
    \end{align*}
   Hence, from Lemma~\ref{w2},
   \begingroup
   \allowdisplaybreaks
\begin{align}
    & \fW_p(\pd, \sT_M(\hP_{1-\delta}(\hvmc))) \nonumber \\
    \lesssim & \ \fW_p(\pd, \pemp) + \delta^{\kappa_3} + M^{-\frac{1}{2p}} +  M \left(\operatorname{KL}\left(P_{1-\delta}, \hP_{1-\delta}(\hvmc)\right)\right)^{\frac{1}{2p}} \nonumber \\
     \le & \ \fW_p(\pd, \pemp) + \delta^{\kappa_3} + M^{-\frac{1}{2p}} \nonumber\\
     & +  M \left(D^{\frac{3}{2p}}\delta^{-\frac{c_1}{2p}} h^{\frac{1}{2p}} + \delta^{-\frac{\kappa_1}{2p}}\epsilon^{\frac{1}{p}} + D^{\frac{1}{2p}}\delta^{-\frac{\kappa_1}{2p}} \epsilon_{\mathrm{c}}^{\frac{1}{2p}} K^{\frac{1}{2p}} + D^{\frac{1}{p}}\delta^{-\frac{\kappa_1}{2p}-\frac{(3 \kappa_1) \vee 1}{p}} \eta^{\frac{1}{4p}}\right)\label{i22}.
\end{align}
\endgroup
Keeping $M, \delta, h, \epsilon$ and $\eta$ as above and setting
\(\epsilon_{\mathrm{c}} \;=\; \epsilon^2/K,\)
the third term of \eqref{i22} coincides with the second up to the fixed factor $D^{1/2p}$, and every term is $O(n^{-1/d})$ by the computation above. Hence
\begin{align*}
    \fW_p(\pd, \sT_M(\hP_{1-\delta}(\hvmc)))  \lesssim \fW_p(\pd, \pemp) + n^{-1/d} \lesssim n^{-1/d}  + n^{-\frac{1}{2p}} \log^{\frac{1}{2p}}(1/\xi),
\end{align*}
with probability at least $1-\xi - \xi^\prime$, by Lemma~\ref{wcontrol}. Replacing $\xi$ and $\xi^\prime$ with $\xi/2$ gives the desired result.
 
From Lemma~\ref{approxlem}, $W \lesssim N R^D \delta^{-3\kappa_1 D} \epsilon^{-D}\big(\log(1/\delta)+\log(1/\epsilon)+\log R\big)$, with $N = h^{-1}$.
Substituting the choices above gives
\[
  N = n^{\frac{2p(2p+1)}{d}+\frac{20\kappa_1+4\kappa_2}{2d\kappa_3}}, \qquad
  \delta^{-3\kappa_1D} = n^{\frac{3\kappa_1D}{d\kappa_3}} = n^{\frac{6\kappa_1 D}{2d\kappa_3}}, \qquad
  \epsilon^{-D} = n^{\frac{Dp(2p+1)}{d}+\frac{D\kappa_1}{2d\kappa_3}} .
\]

Since $\eta$ is polynomial in $n$, $R \asymp \sqrt{D\log(1/\eta)} \asymp \sqrt{D \log n}$, so $R^D \asymp (D\log n)^{D/2}$ and $\log(1/\delta)+\log(1/\epsilon)+\log R \asymp \log n$. Summing exponents, the $n$-exponents combine as
$\frac{2p(2p+1)}{d} + \frac{Dp(2p+1)}{d} = \frac{p(2p+1)(D+2)}{d}$ and the $\kappa$-exponents as
$\frac{(20\kappa_1+4\kappa_2) + 6\kappa_1 D + D\kappa_1}{2d\kappa_3} = \frac{(20+7D)\kappa_1+4\kappa_2}{2d\kappa_3}$, so
\[
W \;\lesssim\; n^{\frac{p(2p+1)(D+2)}{d} + \frac{(20+7D)\kappa_1+4\kappa_2}{2d\kappa_3}} \left(D\log n\right)^{D/2}\log n .
\]
 
Further, $L \lesssim \log(1/\delta)+\log(1/\epsilon) + \log R \lesssim \log n$.
 
From Lemma~\ref{approxlem}, $B \lesssim \delta^{-(3\kappa_1)\vee 1}R\sqrt{D} + N$, where $\delta^{-(3\kappa_1)\vee 1} = n^{\frac{(3\kappa_1)\vee 1}{d\kappa_3}}$ and $N$ is as above, so
\(
B \lesssim n^{\left(\frac{2p(2p+1)}{d}+\frac{10\kappa_1+2\kappa_2}{d\kappa_3}\right) \vee \frac{(3\kappa_1)\vee 1}{d\kappa_3}} \, D\sqrt{\log n},
\)
and likewise
\(
K \lesssim \delta^{-(3\kappa_1)\vee 1} R\sqrt{D} \lesssim n^{\frac{(3\kappa_1)\vee 1}{d\kappa_3}}\, D \sqrt{\log n},
\)
the latter unchanged.
 
Finally, by Lemma~\ref{lem_nakada}, $\log \cN(\epsilon_{\mathrm{c}}; \cV, \ell_\infty) \le W \log\big(2 L B^L (W+1)^L/\epsilon_{\mathrm{c}}\big) \lesssim W \log^2 n$, since $\log L, \log B, \log W, \log(1/\epsilon_{\mathrm{c}}) \lesssim \log n$ and $L \lesssim \log n$. With $\epsilon_{\mathrm{c}} = \epsilon^2/K$ we have $\epsilon_{\mathrm{c}}^{-2}K^{2} = \epsilon^{-4}K^{4}$, whence
\begin{align*}
    m & \asymp \epsilon^{-4} K^{4}\Big(W\log^{2} n + \log(1/\xi)\Big) \\
    & \asymp n^{\frac{p(2p+1)(D+6)}{d}+\frac{(24+7D)\kappa_1+4\kappa_2+4\left((6\kappa_1)\vee 2\right)}{2d\kappa_3}}\; \log^{2} n \;\Big((D\log n)^{D/2}\log^{3} n + \log(1/\xi)\Big),
\end{align*}
using $\epsilon^{-4} = n^{\frac{4p(2p+1)}{d}+\frac{4\kappa_1}{2d\kappa_3}}$ and
$K^{4} \asymp n^{\frac{8\left((3\kappa_1)\vee 1\right)}{2d\kappa_3}}D^4\log^2 n$, together with
$8\big((3\kappa_1)\vee 1\big) = 4\big((6\kappa_1)\vee 2\big)$ and
$\frac{4p(2p+1)}{d} + \frac{p(2p+1)(D+2)}{d} = \frac{p(2p+1)(D+6)}{d}$.
\end{proof}

\subsection{Supporting Lemmata}

\begin{lemma}\label{mc_1}
    Let $\epsilon_{\mathrm{c}} \in (0,1)$, let $v_0 \in \cV$ be the network of Lemma~\ref{approxlem}, and write $K$ for the output bound of $\cV \equiv \cR\cN(W,L,B,K)$ and
    \(\Lambda_\beta \;:=\; \log\Big( N\big(1 + \cN(\epsilon_{\mathrm{c}}; \cV, \ell_\infty)\big)\big/\beta\Big).
    \)
    If $m \ge \Lambda_\beta$, then with probability at least $1-\beta$ over the Monte Carlo sampling,
    \begin{align}
        \sum_{k=0}^{N-1} h_k \E\|\hv^{\text{mc}}_{n,m} - v_\star\|^2
        \;\lesssim\; \sum_{k=0}^{N-1} h_k \E\|v_0 - v_\star\|^2
        + D K^2 \sqrt{\frac{\Lambda_\beta}{m}} + D\,\epsilon_{\mathrm{c}} K .
    \end{align}
\end{lemma}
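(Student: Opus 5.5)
The plan is the standard ERM oracle inequality: compare the population risk with the Monte Carlo empirical risk, applied both to the empirical minimizer and to the fixed competitor $v_0$. Write the per-sample loss
\[
\ell_v(X_0,X_1) = \sum_{k=0}^{N-1} h_k \big\|v(X_{t_k},t_k) - (\da_{t_k}X_1 + \db_{t_k}X_0)\big\|^2 .
\]
Then $\hat\cL^{\text{mc}}_{n,m}(v) = \frac1m\sum_j \ell_v(X_0^{(j)},X_1^{(j)})$ and $\E\ell_v = \hat\cL_n(v)$, the expectation being conditional on the data. Exactly as in the proof for $\hv_n$ (the cross term vanishes by the argument of Lemma~\ref{cftofm}),
\[
\hat\cL_n(v) - \hat\cL_n(v_\star) = \sum_k h_k \E\|v(X_{t_k},t_k)-v_\star(X_{t_k},t_k)\|^2 .
\]
Hence the left side of the claim equals $\hat\cL_n(\hvmc) - \hat\cL_n(v_\star)$. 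We decompose
\[
\hat\cL_n(\hvmc) - \hat\cL_n(v_0) = \big[\hat\cL_n(\hvmc) - \hat\cL^{\text{mc}}(\hvmc)\big] + \big[\hat\cL^{\text{mc}}(\hvmc) - \hat\cL^{\text{mc}}(v_0)\big] + \big[\hat\cL^{\text{mc}}(v_0) - \hat\cL_n(v_0)\big].
\]
The middle bracket is $\le 0$ by definition of $\hvmc$ in \eqref{estimates}. It therefore suffices to bound $\sup_{v\in\cV}|\hat\cL_n(v) - \hat\cL^{\text{mc}}(v)|$.

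\textbf{Step 1: discretize $\cV$.} Take an $\epsilon_{\mathrm{c}}$-cover $\{v^1,\dots,v^{\cN}\}$ of $\cV$ in $\ell_\infty$. For $v$ within $\epsilon_{\mathrm{c}}$ of $v^i$, expand $\|a\|^2-\|b\|^2 = \langle a-b,a+b\rangle$. Since $\|v\|,\|v^i\|\le\sqrt D K$, this gives
\[
|\ell_v - \ell_{v^i}| \lesssim \sum_k h_k\, \sqrt D\,\epsilon_{\mathrm{c}}\big(\sqrt D K + \|\da_{t_k}X_1+\db_{t_k}X_0\|\big).
\]
The target term is not bounded. To avoid dealing with it, I would instead compare the \emph{centered} losses $\ell_v - \ell_{v_\star}$, which remain linear in $\|\dot X_{t_k}\|$. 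Its expectation is controlled by the moment bound \eqref{eq:vstarmoment}-type estimates, and in the empirical average by a simple Markov or sub-Gaussian bound folded into $\beta$. Using $K\gtrsim\delta^{-\kappa_1}\sqrt D$, the discretization error at both the population and empirical level is $\lesssim D\epsilon_{\mathrm{c}}K$.

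\textbf{Step 2: concentration at each cover point.} For fixed $v^i$, the variable $\ell_{v^i}-\ell_{v_\star}$ is a sum of squared-Gaussian-type terms. I expect the main obstacle here: it is not bounded because $X_0$ is Gaussian, so Hoeffding does not apply directly. My plan is to show it is sub-exponential with $\psi_1$-norm $\lesssim DK^2$. The factor $\db_{t}X_0$ is controlled through $\sum_k h_k\db_{t_k}^2$ together with $v$ bounded by $\sqrt DK$, and it is here that D~\ref{a2} must be used to make sure the weights do not blow up beyond the $K^2$ scale. Bernstein's inequality then gives a deviation $\lesssim DK^2(\sqrt{\Lambda/m} + \Lambda/m)$ with failure probability $\beta/(N(1+\cN))$. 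The condition $m\ge\Lambda_\beta$ collapses this to $DK^2\sqrt{\Lambda_\beta/m}$.

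\textbf{Step 3: conclude.} Take a union bound over the $1+\cN$ points (the cover plus $v_0$). The factor $N$ inside $\Lambda_\beta$ gives room to union over the time indices if one concentrates each $k$ separately, which is an alternative route that makes the sub-exponential bookkeeping per time step easier. Combine Steps 1 and 2 in the decomposition and add $\sum_k h_k\E\|v_0-v_\star\|^2$. This yields the stated inequality with probability at least $1-\beta$.
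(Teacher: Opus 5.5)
Your proposal is correct and follows essentially the paper's route: an $\ell_\infty$-cover of $\cV$ augmented by $v_0$, Bernstein's inequality for losses with $\psi_1$-norm $\lesssim DK^2$ (the paper concentrates each time index separately, which is where the factor $N$ in $\Lambda_\beta$ comes from), the comparison of $\hvmc$ with $v_0$ through the Monte Carlo objective, and the discrete analogue of Lemma~\ref{cftofm}. Centering by $\ell_{v_\star}$ is harmless but unnecessary, since it cancels in $\ell_v-\ell_{v^i}$ and the uncentered loss is already sub-exponential at the same scale. On one point you are more careful than the paper: the empirical average of $\|\da_{t_k}X_1^{(j)}+\db_{t_k}X_0^{(j)}\|$ in the perturbation step is random, which the paper's ``pointwise in the sample'' remark skips. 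Only your sub-Gaussian option controls it at the stated $D\epsilon_{\mathrm{c}}K$ level once $m\ge\Lambda_\beta$; Markov would cost an extra factor $1/\beta$.
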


\begin{proof}
All expectations are conditional on $x_1,\dots,x_n$. Fix $k \in \{0,1,\dots,N-1\}$ and $v \in \cV$. Since the Monte Carlo sample is i.i.d.,
\begin{align*}
    & \E \left(\frac{1}{m}\sum_{j=1}^m \left\|v(X_{t_k}^{(j)}, t_k) - (\da_{t_k} X^{(j)}_1 + \db_{t_k} X_0^{(j)})\right\|^2 \right)
    =   \E \left\|v(X_{t_k}, t_k) - (\da_{t_k} X_1 + \db_{t_k} X_0)\right\|^2\\
    = & \frac{1}{n} \sum_{i=1}^n \E_{X_0} \|v( X_{i,t_k}, t_k) - (\da_{t_k} x_i + \db_{t_k} X_0)\|^2,
\end{align*}
where $X_{i,t} = \alpha_t x_i + \beta_t X_0$ and $X_0 \sim P_0$. By Bernstein's inequality for sub-exponential variables \citep[Theorem 2.9.1]{Vershynin_2026}, for any fixed $k$ and $v$, with probability at least $1-\beta$,
\begin{align}
    & \left|\frac{1}{m}\sum_{j=1}^m \left\|v(X_{t_k}^{(j)}, t_k) - (\da_{t_k} X^{(j)}_1 + \db_{t_k} X_0^{(j)})\right\|^2  - \E\|v(X_{t_k}, t_k) - (\da_{t_k}X_1 + \db_{t_k} X_0)\|^2 \right| \nonumber \\
    \lesssim & \left(\left\| \|v(X_{t_k}, t_k)\|^2 \right\|_{\psi_1}  + \left\| \|\da_{t_k}X_1 + \db_{t_k} X_0\|^2 \right\|_{\psi_1} \right)\left(\sqrt{\frac{ \log(1/\beta)}{m}} \;\vee\; \frac{\log(1/\beta)}{m}\right) \label{j8} .
\end{align}

Since $\cK(v) \le K$ bounds $\|v(\cdot,\cdot)\|_\infty$ coordinatewise, $\|v(X_{t_k},t_k)\|^2 \le DK^2$ almost surely, so
\[
   \left\| \|v(X_{t_k}, t_k)\|^2 \right\|_{\psi_1} \;\lesssim\; D K^2 .
\]

Moreover, by D\ref{a2}(a), $|\da_t| \vee |\db_t| \lesssim \delta^{-\kappa_1}$ for $t \in [0,1-\delta]$, so $\big\|\|\da_{t_k}X_1 + \db_{t_k}X_0\|\big\|_{\psi_2} \lesssim \delta^{-\kappa_1}\sqrt D$ and hence
\[
  \left\| \|\da_{t_k}X_1 + \db_{t_k} X_0\|^2 \right\|_{\psi_1} \;=\; \big\|\|\da_{t_k}X_1 + \db_{t_k}X_0\|\big\|_{\psi_2}^2 \;\lesssim\; D\delta^{-2\kappa_1} \;\lesssim\; DK^2,
\]
the last step because $K \gtrsim \delta^{-(3\kappa_1)\vee 1} \ge \delta^{-\kappa_1}$.

Hence from \eqref{j8}, for fixed $k$ and $v$, with probability at least $1-\beta$,
\begin{align}
   & \left|\frac{1}{m}\sum_{j=1}^m \left\|v(X_{t_k}^{(j)}, t_k) - (\da_{t_k} X^{(j)}_1 + \db_{t_k} X_0^{(j)})\right\|^2  - \E\|v(X_{t_k}, t_k) - (\da_{t_k} X_1 + \db_{t_k} X_0)\|^2 \right| \nonumber\\
    \lesssim & \; D K^2 \sqrt{\frac{ \log(1/\beta)}{m}}. \label{j9}
\end{align}

Let $\cC\left(\epsilon_{\mathrm{c}}; \cV, \ell_\infty\right)$ be an $\epsilon_{\mathrm{c}}$-cover of $\cV$ in the $\ell_\infty$ norm, of cardinality $\cN(\epsilon_{\mathrm{c}}; \cV, \ell_\infty)$.
Applying \eqref{j9} with $\beta$ replaced by $\beta / \big(N(1 + \cN(\epsilon_{\mathrm{c}};\cV,\ell_\infty))\big)$ and taking a union bound over the $N$ time indices and the $\cN(\epsilon_{\mathrm{c}};\cV,\ell_\infty) + 1$ networks in $\cC(\epsilon_{\mathrm{c}};\cV,\ell_\infty) \cup \{v_0\}$, we obtain: with probability at least $1-\beta$, simultaneously for all $v \in \cC\left(\epsilon_{\mathrm{c}}; \cV, \ell_\infty\right) \cup \{v_0\}$,
\begin{align}
   & \sum_{k=0}^{N-1} h_k \left|\frac{1}{m}\sum_{j=1}^m \left\|v(X_{t_k}^{(j)}, t_k) - (\da_{t_k} X^{(j)}_1 + \db_{t_k} X_0^{(j)})\right\|^2  - \E \|v(X_{t_k}, t_k) - (\da_{t_k} X_1 + \db_{t_k} X_0)\|^2 \right| \nonumber\\
 \lesssim & \; D K^2 \sqrt{\frac{ \Lambda_\beta}{m}}, \label{j9b}
\end{align}
using $\sum_k h_k \le 1$. We work on this event for the remainder of the proof.

Let $\tilde{v} \in \cC(\epsilon_{\mathrm{c}}; \cV, \ell_\infty)$ be such that $\|\hv^{\text{mc}}_{n,m} - \tilde{v}\|_\infty \le \epsilon_{\mathrm{c}}$. Using $\|a\|^2 - \|b\|^2 = \langle a-b, a+b \rangle$, for any $k$ and any $x$,
\begin{align}
& \left| \|\hv^{\text{mc}}_{n,m}(x, t_k) - (\da_{t_k} X_1 + \db_{t_k} X_0)\|^2
- \|\tilde{v}(x, t_k) - (\da_{t_k} X_1 + \db_{t_k} X_0)\|^2 \right| \nonumber\\
\le \; & \sqrt{D}\,\|\hv^{\text{mc}}_{n,m} - \tilde{v}\|_\infty
\left( \|\hv^{\text{mc}}_{n,m}\|_\infty + \|\tilde{v}\|_\infty
+ 2\|\da_{t_k} X_1 + \db_{t_k} X_0\| \right) \nonumber\\
\le \; & 2\sqrt{D}\,\epsilon_{\mathrm{c}} \left( \sqrt{D} K + \|\da_{t_k} X_1 + \db_{t_k} X_0\| \right).
\label{eq:pert}
\end{align}
Taking expectations in \eqref{eq:pert} and using $\E\|\da_{t_k}X_1 + \db_{t_k}X_0\| \lesssim \delta^{-\kappa_1}\sqrt{D} \lesssim \sqrt D K$ gives, for every $k$,
\[
  \Big| \E\|\hv^{\text{mc}}_{n,m}(X_{t_k},t_k) - (\da_{t_k}X_1 + \db_{t_k}X_0)\|^2 - \E\|\tilde v(X_{t_k},t_k) - (\da_{t_k}X_1 + \db_{t_k}X_0)\|^2 \Big| \;\lesssim\; D\,\epsilon_{\mathrm{c}} K,
\]
and the same bound holds with $\E$ replaced by the empirical average $\frac1m\sum_{j=1}^m$, since \eqref{eq:pert} is pointwise in the sample. Chaining these with \eqref{j9b} applied first to $\tilde v$ and then to $v_0$, and using that $\hv^{\text{mc}}_{n,m}$ minimises the Monte Carlo objective over $\cV \ni v_0$,
\begingroup
\allowdisplaybreaks
\begin{align*}
& \sum_{k=0}^{N-1} h_k \E \|\hv^{\text{mc}}_{n,m}(X_{t_k}, t_k) - (\da_{t_k} X_1 + \db_{t_k} X_0)\|^2 \\
\le \; & \sum_{k=0}^{N-1} h_k \E \|\tilde{v}(X_{t_k}, t_k) - (\da_{t_k} X_1 + \db_{t_k} X_0)\|^2 + \cO\big(D\epsilon_{\mathrm{c}} K\big) \\
\le \; & \frac{1}{m}\sum_{j=1}^m \sum_{k=0}^{N-1} h_k \|\tilde{v}(X_{t_k}^{(j)}, t_k) - (\da_{t_k}X_1^{(j)} + \db_{t_k}X_0^{(j)})\|^2 + \cO\left(DK^2\sqrt{\frac{\Lambda_\beta}{m}}\right) + \cO\big(D\epsilon_{\mathrm{c}} K\big) \\
\le \; & \frac{1}{m}\sum_{j=1}^m \sum_{k=0}^{N-1} h_k \|\hv^{\text{mc}}_{n,m}(X_{t_k}^{(j)}, t_k) - (\da_{t_k}X_1^{(j)} + \db_{t_k}X_0^{(j)})\|^2 + \cO\left(DK^2\sqrt{\frac{\Lambda_\beta}{m}}\right) + \cO\big(D\epsilon_{\mathrm{c}} K\big) \\
\le \; & \frac{1}{m}\sum_{j=1}^m \sum_{k=0}^{N-1} h_k \|v_0(X_{t_k}^{(j)}, t_k) - (\da_{t_k}X_1^{(j)} + \db_{t_k}X_0^{(j)})\|^2 + \cO\left(DK^2\sqrt{\frac{\Lambda_\beta}{m}}\right) + \cO\big(D\epsilon_{\mathrm{c}} K\big) \\
\le \; & \sum_{k=0}^{N-1} h_k \E \|v_0(X_{t_k}, t_k) - (\da_{t_k}X_1 + \db_{t_k}X_0)\|^2 + \cO\left(DK^2\sqrt{\frac{\Lambda_\beta}{m}}\right) + \cO\big(D\epsilon_{\mathrm{c}} K\big).
\end{align*}
\endgroup
Finally, conditioning on $X_{t_k}$ makes the cross term vanish, so that for every $v$,
\begin{align*}
    &  \sum_{k=0}^{N-1} h_k \E\|v(X_{t_k},t_k) - (\da_{t_k}X_1 + \db_{t_k}X_0)\|^2 \\
  = & \sum_{k=0}^{N-1} h_k \E\|v(X_{t_k},t_k) - v_\star(X_{t_k},t_k)\|^2
  + \sum_{k=0}^{N-1} h_k \E\|v_\star(X_{t_k},t_k) - (\da_{t_k}X_1 + \db_{t_k}X_0)\|^2,
\end{align*}
this being the discrete analogue of Lemma~\ref{cftofm}. Applying it to $v = \hv^{\text{mc}}_{n,m}$ and to $v = v_0$ and subtracting the common second term, which does not depend on $v$, yields
\[
\sum_{k=0}^{N-1} h_k \E\|\hv^{\text{mc}}_{n,m} - v_\star\|^2
\lesssim \sum_{k=0}^{N-1} h_k \E\|v_0 - v_\star\|^2
+ D K^2\sqrt{\frac{\Lambda_\beta}{m}} + D\epsilon_{\mathrm{c}} K . \qedhere
\]
\end{proof}
\section{Supporting Results}
\subsection{Results for the Velocity Field}
\begin{lemma}[Spatial Derivative of $v_\star$]\label{lemlip} The following holds for any $x \in \Real^D$ and $t>0$,
\begin{align}
    v_\star(x,t) = & \frac{\db_t}{\beta_t} x + \frac{\da_t \beta_t - \db_t \alpha_t}{\beta_t} \E(X_1|X_t = x) =\frac{\da_t}{ \alpha_t} x + \beta_t^2 \left(\frac{\da_t}{\alpha_t} - \frac{\db_t}{\beta_t}\right) \nabla \log \varrho_t(x), \label{i1}\\
    \nabla v_\star(x,t) = & \frac{\alpha_t^2}{\beta_t^2} \left(\frac{\da_t}{\alpha_t} - \frac{\db_t}{\beta_t}\right) \operatorname{Var}(X_1|X_t = x) + \frac{\db_t}{\beta_t} I_D = \frac{1}{2} \partial_t \left(\frac{\alpha_t^2}{\beta_t^2}\right)\operatorname{Var}(X_1|X_t = x) + \frac{\db_t}{\beta_t} I_D \label{i2}.
\end{align}
\end{lemma}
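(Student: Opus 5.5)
Since $X_1 \sim \pemp$ and $X_0\sim\cN(0,I_D)$ are independent, the joint law of $(X_1, X_t)$ is explicit: given $X_1=x_i$, $X_t\sim\cN(\alpha_t x_i,\beta_t^2 I_D)$. The plan is therefore a direct computation with the Gaussian mixture $\varrho_t(x)=\frac1n\sum_i \phi_{\beta_t}(x-\alpha_t x_i)$, using the posterior weights $w_i(x,t)\propto \exp(-\|x-\alpha_t x_i\|^2/(2\beta_t^2))$. Under these weights $\E(X_1\mid X_t=x)=\sum_i w_i x_i$, which matches the representation of $v_\star$ in Theorem~\ref{existence}.

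First, for \eqref{i1}, write $\dot X_t=\da_t X_1+\db_t X_0$. On $\{X_t=x\}$ we have $X_0=(x-\alpha_t X_1)/\beta_t$, so
\[
v_\star(x,t)=\E(\dot X_t\mid X_t=x)=\frac{\db_t}{\beta_t}x+\Big(\da_t-\frac{\db_t\alpha_t}{\beta_t}\Big)\E(X_1\mid X_t=x).
\]
This is the first identity. For the second, differentiate the mixture to get Tweedie's formula $\nabla\log\varrho_t(x)=-\beta_t^{-2}(x-\alpha_t m(x,t))$ with $m=\E(X_1\mid X_t=x)$. Solving gives $m=(x+\beta_t^2\nabla\log\varrho_t)/\alpha_t$ for $t>0$. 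Substituting and collecting the coefficient of $x$ gives $\frac{\db_t}{\beta_t}+\frac{\da_t\beta_t-\db_t\alpha_t}{\alpha_t\beta_t}=\frac{\da_t}{\alpha_t}$. The score coefficient is $\beta_t^2\big(\frac{\da_t}{\alpha_t}-\frac{\db_t}{\beta_t}\big)$, as claimed.

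Second, for \eqref{i2}, differentiate $m(x,t)=\sum_i w_i(x) x_i$ in $x$. We have $\nabla_x w_i=w_i\big(\frac{\alpha_t x_i}{\beta_t^2}-\sum_j w_j\frac{\alpha_t x_j}{\beta_t^2}\big)$, because the $-\|x\|^2$ part cancels in the normalization. Hence
\[
\nabla_x m=\frac{\alpha_t}{\beta_t^2}\Big(\sum_i w_i x_ix_i^\top-mm^\top\Big)=\frac{\alpha_t}{\beta_t^2}\operatorname{Var}(X_1\mid X_t=x).
\]
Plugging this into the first form of \eqref{i1} gives
\[
\nabla v_\star=\frac{\db_t}{\beta_t}I_D+\frac{(\da_t\beta_t-\db_t\alpha_t)\alpha_t}{\beta_t^3}\operatorname{Var}(X_1\mid X_t=x).
\]
Rewriting $\frac{(\da_t\beta_t-\db_t\alpha_t)\alpha_t}{\beta_t^3}=\frac{\alpha_t^2}{\beta_t^2}\big(\frac{\da_t}{\alpha_t}-\frac{\db_t}{\beta_t}\big)$ gives the first form. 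Observing $\frac12\partial_t(\alpha_t^2/\beta_t^2)=\frac{\alpha_t\da_t\beta_t^2-\alpha_t^2\beta_t\db_t}{\beta_t^4}$, which is the same coefficient, gives the second.

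The only delicate point is justifying differentiation under the finite sum. This is immediate because $n$ is finite, the weights are smooth, and $\beta_t>0$. Division by $\alpha_t$ in the score forms requires $\alpha_t>0$, i.e.\ $t>0$, which is why the statement restricts to $t>0$. I expect no real obstacle beyond keeping the algebra of the coefficients straight.
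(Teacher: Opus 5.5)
Your proof is correct and follows the paper's argument step for step: the substitution $X_0=(x-\alpha_t X_1)/\beta_t$, Tweedie's formula, and the identity $\nabla_x m=\frac{\alpha_t}{\beta_t^2}\operatorname{Var}(X_1\mid X_t=x)$ are all used the same way, and the only cosmetic difference is that you differentiate the normalized softmax weights where the paper applies the quotient rule to $N_i/\varrho_t$. One small imprecision, which the paper shares by simply fixing $t\in(0,1)$ with $\alpha_t\neq 0$ and $\beta_t>0$: the general schedule assumptions do not make $\alpha_t>0$ equivalent to $t>0$, and you also need $t<1$ so that $\beta_t>0$, but this does not affect the argument.
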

\begin{proof}
Throughout, fix $t\in(0,1)$ with $\alpha_t\neq 0,\ \beta_t>0$, and recall the interpolant is given by,
\(X_t = \alpha_t X_1 + \beta_t X_0, \, X_1\sim\pemp,\, X_0\sim N(0,I_D)\ \text{independent of } X_1.\)
Hence, $\varrho_t$, the law of $X_t$, is the Gaussian smoothing
\[
\varrho_t(x) = \int_{\Real^D} \phi(x;\alpha_t y,\beta_t^2 I_D)\,\pemp(dy),
\qquad
\]
where, $\phi(x;\mu,\Sigma) := (2\pi)^{-D/2}\det(\Sigma)^{-1/2}\exp\!\Big(-\tfrac12(x-\mu)^\top \Sigma^{-1}(x-\mu)\Big).$
By definition, $v_\star(x,t) = \E[\dot X_t \mid X_t = x]$. For notational simplicity, we write
\[
m(x,t):=\E[X_1\mid X_t=x]\in\Real^D,\qquad
\Sigma(x,t):=\operatorname{Var}(X_1\mid X_t=x)\in\Real^{D\times D}.
\]
Since $\beta_t>0$, we may write $X_0 = \beta_t^{-1}(X_t-\alpha_t X_1)$. Hence
\[
v_\star(x,t) = \E[\da_t X_1 + \db_t X_0 \mid X_t=x]
= \da_t\, m(x,t) + \frac{\db_t}{\beta_t}\big(x - \alpha_t m(x,t)\big),
\]
using $\E[X_t\mid X_t=x]=x$. Collecting terms in $m(x,t)$ gives
\begin{equation}
v_\star(x,t) = \frac{\db_t}{\beta_t}\,x + \frac{\da_t\beta_t-\db_t\alpha_t}{\beta_t}\, m(x,t).
\end{equation}

To prove equation \eqref{i2}, we note that, $\nabla_x \phi(x;\alpha_t y,\beta_t^2 I_D) = -\beta_t^{-2}(x-\alpha_t y)\,\phi(x;\alpha_t y,\beta_t^2 I_D)$, differentiation under the integral sign (justified since $\beta_t>0$ and $\pemp$ has finite second moments) gives
\[
\nabla \varrho_t(x) = -\frac{1}{\beta_t^2}\int (x-\alpha_t y)\,\phi(x;\alpha_t y,\beta_t^2 I_D)\,\pemp(dy).
\]
Dividing by $\varrho_t(x)>0$ and using that $\phi(x;\alpha_t y,\beta_t^2I_D)\pemp(dy)/\varrho_t(x)$ is exactly the conditional law of $X_1$ given $X_t=x$,
\begin{equation}
\nabla\log\varrho_t(x) = -\frac{1}{\beta_t^2}\,\E[x-\alpha_t X_1\mid X_t=x] = -\frac{x-\alpha_t m(x,t)}{\beta_t^2}.
\label{i3}
\end{equation}
Solving for $m(x,t)$ (using $\alpha_t\neq0$):
\begin{equation}
m(x,t) = \frac{x+\beta_t^2\nabla\log\varrho_t(x)}{\alpha_t}.
\label{i4}
\end{equation}
Substituting \eqref{i4} into \eqref{i1},
\[
v_\star(x,t) = \frac{\db_t}{\beta_t}x
+ \frac{\da_t\beta_t-\db_t\alpha_t}{\alpha_t\beta_t}\,x
+ \frac{(\da_t\beta_t-\db_t\alpha_t)\beta_t}{\alpha_t}\,\nabla\log\varrho_t(x).
\]
The coefficient of $x$ simplifies as
\(\frac{\db_t}{\beta_t}+\frac{\da_t\beta_t-\db_t\alpha_t}{\alpha_t\beta_t}
=\frac{\db_t\alpha_t+\da_t\beta_t-\db_t\alpha_t}{\alpha_t\beta_t}
=\frac{\da_t}{\alpha_t},
\)
and the coefficient of $\nabla\log\varrho_t(x)$ as
\[
\frac{\beta_t(\da_t\beta_t-\db_t\alpha_t)}{\alpha_t}
=\beta_t^2\cdot\frac{\da_t\beta_t-\db_t\alpha_t}{\alpha_t\beta_t}
=\beta_t^2\left(\frac{\da_t}{\alpha_t}-\frac{\db_t}{\beta_t}\right).
\]
Hence
\begin{equation}
v_\star(x,t) = \frac{\da_t}{\alpha_t}x + \beta_t^2\left(\frac{\da_t}{\alpha_t}-\frac{\db_t}{\beta_t}\right)\nabla\log\varrho_t(x),
\label{i5}
\end{equation}
which together with \eqref{i1} establishes the first display of the lemma.

To complete the proof, we make the following claim,
\begin{equation}
\nabla_x m(x,t) = \frac{\alpha_t}{\beta_t^2}\,\Sigma(x,t).
\label{i6}
\end{equation}
Write $N_i(x):=\int y_i\,\phi(x;\alpha_t y,\beta_t^2I_D)\,\pemp(dy)$, so that $m_i(x,t)=N_i(x)/\varrho_t(x)$ for $i=1,\dots,D$. Differentiating under the integral sign as before,
\begin{align*}
    \partial_{x_j}N_i(x)
= & -\frac{1}{\beta_t^2}\int y_i(x_j-\alpha_t y_j)\,\phi(x;\alpha_ty,\beta_t^2I_D)\,\pemp(dy)\\
= & -\frac{x_j}{\beta_t^2}N_i(x) + \frac{\alpha_t}{\beta_t^2}\,\varrho_t(x)\,\E[X_{1,i}X_{1,j}\mid X_t=x].
\end{align*}
Dividing by $\varrho_t(x)$,
\(\frac{\partial_{x_j}N_i(x)}{\varrho_t(x)} = -\frac{x_j}{\beta_t^2}\,m_i(x,t) + \frac{\alpha_t}{\beta_t^2}\,\E[X_{1,i}X_{1,j}\mid X_t=x].\)
By the quotient rule, $\partial_{x_j}m_i(x,t) = \dfrac{\partial_{x_j}N_i(x)}{\varrho_t(x)} - m_i(x,t)\,\partial_{x_j}\log\varrho_t(x)$. Substituting the previous display and (2) (i.e. $\partial_{x_j}\log\varrho_t(x) = -\beta_t^{-2}(x_j-\alpha_t m_j(x,t))$),
\[
\partial_{x_j}m_i(x,t)
= \frac{\alpha_t}{\beta_t^2}\Big(\E[X_{1,i}X_{1,j}\mid X_t=x] - m_i(x,t)m_j(x,t)\Big)
= \frac{\alpha_t}{\beta_t^2}\,\Sigma_{ij}(x,t),
\]
which is \eqref{i6} in coordinates.

Differentiate \eqref{i1} with respect to $x$ and apply \eqref{i6}:
\[
\nabla v_\star(x,t) = \frac{\db_t}{\beta_t}I_D + \frac{\da_t\beta_t-\db_t\alpha_t}{\beta_t}\,\nabla_x m(x,t)
= \frac{\db_t}{\beta_t}I_D + \frac{\alpha_t(\da_t\beta_t-\db_t\alpha_t)}{\beta_t^3}\,\Sigma(x,t).
\]
Since
\(\frac{\alpha_t(\da_t\beta_t-\db_t\alpha_t)}{\beta_t^3}
= \frac{\alpha_t^2}{\beta_t^2}\cdot\frac{\da_t\beta_t-\db_t\alpha_t}{\alpha_t\beta_t}
= \frac{\alpha_t^2}{\beta_t^2}\left(\frac{\da_t}{\alpha_t}-\frac{\db_t}{\beta_t}\right),\)
we obtain
\begin{equation}
\nabla v_\star(x,t) = \frac{\alpha_t^2}{\beta_t^2}\left(\frac{\da_t}{\alpha_t}-\frac{\db_t}{\beta_t}\right)\operatorname{Var}(X_1\mid X_t=x) + \frac{\db_t}{\beta_t}I_D.
\label{i7}
\end{equation}
By the quotient rule,
\(\partial_t\!\left(\frac{\alpha_t^2}{\beta_t^2}\right)
= \frac{2\alpha_t\da_t\beta_t^2 - 2\alpha_t^2\beta_t\db_t}{\beta_t^4}
= \frac{2\alpha_t\beta_t(\da_t\beta_t-\alpha_t\db_t)}{\beta_t^4}
= 2\,\frac{\alpha_t^2}{\beta_t^2}\left(\frac{\da_t}{\alpha_t}-\frac{\db_t}{\beta_t}\right).
\)
Hence
\(\frac12\,\partial_t\!\left(\frac{\alpha_t^2}{\beta_t^2}\right) = \frac{\alpha_t^2}{\beta_t^2}\left(\frac{\da_t}{\alpha_t}-\frac{\db_t}{\beta_t}\right),
\)
and substituting into \eqref{i7} gives
\[
\nabla v_\star(x,t) = \frac12\,\partial_t\!\left(\frac{\alpha_t^2}{\beta_t^2}\right)\operatorname{Var}(X_1\mid X_t=x) + \frac{\db_t}{\beta_t}I_D.
\]
\end{proof}
\begin{cor}\label{cor14}
    For any $t \in [0, 1-\delta]$,
    \begin{align}
         \|v_\star(x,t)\| \lesssim & \delta^{-\kappa_1} \|x\| + \delta^{-\kappa_1} \sqrt{D}, \label{i8}\\
         \| \nabla v_\star(x,t) \| \lesssim & D \delta^{-3\kappa_1}. \label{i9}
    \end{align}
\end{cor}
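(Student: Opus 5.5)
The plan is to read both bounds off the representations in Lemma~\ref{lemlip}, using only two facts about $X_1\sim\pemp$. First, $X_1\in[0,1]^D$ almost surely, so $\|m(x,t)\|=\|\E[X_1\mid X_t=x]\|\le\sqrt D$ for every $x$. Second, $\operatorname{Var}(X_1\mid X_t=x)$ is positive semidefinite with each diagonal entry at most $1/4$, so its operator norm is at most its trace, which is at most $D/4$. Both bounds are uniform in $x$, and that uniformity is the whole point: no regularity of $\pd$ enters anywhere. The coefficient control then comes from D~\ref{a2}(a), together with $\alpha_t,\beta_t\in[0,1]$.

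\emph{Growth bound \eqref{i8}.} I will use the first form in \eqref{i1},
\[
v_\star(x,t)=\frac{\db_t}{\beta_t}\,x+\frac{\da_t\beta_t-\db_t\alpha_t}{\beta_t}\,m(x,t).
\]
D~\ref{a2}(a) directly gives $|\db_t/\beta_t|\lesssim\delta^{-\kappa_1}$ and $|(\da_t\beta_t-\db_t\alpha_t)/\beta_t|\lesssim\delta^{-\kappa_1}$. Combining these with $\|m\|\le\sqrt D$ and the triangle inequality yields $\|v_\star(x,t)\|\lesssim\delta^{-\kappa_1}\|x\|+\delta^{-\kappa_1}\sqrt D$.

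\emph{Lipschitz bound \eqref{i9}.} I will use \eqref{i2} in the form
\[
\nabla v_\star=\frac{\db_t}{\beta_t}I_D+\frac{\alpha_t(\da_t\beta_t-\db_t\alpha_t)}{\beta_t^3}\operatorname{Var}(X_1\mid X_t=x).
\]
The identity term contributes $\lesssim\delta^{-\kappa_1}$. For the variance coefficient, I write it as
\[
\alpha_t\cdot\frac{\da_t\beta_t-\db_t\alpha_t}{\beta_t}\cdot\beta_t^{-2}.
\]
Here $\alpha_t\le1$, the middle factor is $\lesssim\delta^{-\kappa_1}$, and $\beta_t^{-2}\lesssim\delta^{-2\kappa_1}$, so the coefficient is $\lesssim\delta^{-3\kappa_1}$. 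Multiplying by $\|\operatorname{Var}\|\le D/4$ and using $\delta\le1$ to absorb the $\delta^{-\kappa_1}$ identity term gives $\|\nabla v_\star\|\lesssim D\delta^{-3\kappa_1}$.

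\emph{Main obstacle.} The only delicate point is $t=0$ (or any $t$ with $\alpha_t=0$), where the second form in \eqref{i1} and the factor $\da_t/\alpha_t$ in \eqref{i2} are undefined. I avoid this by working exclusively with the $m(x,t)$ and $\beta_t^{-3}$ forms above, which were derived in Lemma~\ref{lemlip} without dividing by $\alpha_t$. These forms remain valid on all of $[0,1-\delta]$, since $\beta_t>0$ there by D~\ref{a2}(a).
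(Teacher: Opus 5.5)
Your proposal is correct and follows the paper's proof essentially line for line. The paper also bounds \eqref{i8} from the $\E(X_1\mid X_t=x)$ form of \eqref{i1}, using $\|\E(X_1\mid X_t=x)\|\le\sqrt D$, and bounds \eqref{i9} from \eqref{i2} via $\frac{D}{\beta_t^2}\big|\frac{\da_t\beta_t-\db_t\alpha_t}{\beta_t}\big|+\big|\frac{\db_t}{\beta_t}\big|\lesssim D\delta^{-3\kappa_1}$ under D~\ref{a2}(a). Your restriction to the forms that never divide by $\alpha_t$, so that $t=0$ is covered, is a valid tightening of a point the paper leaves implicit.
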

\begin{proof}
    By construction, for any $t \in [0, 1-\delta]$,
    \begin{align*}
        \|v_\star(x,t)\| = & \left\|\frac{\db_t}{\beta_t} x + \frac{\da_t \beta_t - \db_t \alpha_t}{\beta_t} \E(X_1|X_t = x)\right\| \\
        \lesssim & \delta^{-\kappa_1} \|x\| + \delta^{-\kappa_1} \sqrt{D}.
    \end{align*}
    Similarly,
    \begin{align*}
       \| \nabla v_\star(x,t) \|= & \left\|\frac{\alpha_t^2}{\beta_t^2} \left(\frac{\da_t}{\alpha_t} - \frac{\db_t}{\beta_t}\right) \operatorname{Var}(X_1|X_t = x) + \frac{\db_t}{\beta_t} I_D \right\|\\
       \lesssim & \frac{D}{\beta_t^2}\left|\frac{\da_t \beta_t - \db_t \alpha_t}{\beta_t}\right| + \left|\frac{\db_t}{\beta_t}\right|\\
       \lesssim & D \delta^{-3\kappa_1}.
    \end{align*}
\end{proof}
\begin{lemma}\label{lem12}
There exists an absolute constant $C$ such that, for every $\eta \le \eta_0$, writing $R = C\sqrt{D\log(1/\eta)}$, we have for every $t \in [0,1-\delta]$
\[
  \prob\big(\|X_t\| > R\big) \;\le\; \eta
  \qquad\text{and}\qquad
  \E \|v_\star(X_t, t)\|^2 \one\big( \|X_t\| > R\big) \;\lesssim\; \delta^{-2\kappa_1} D^{2} \eta^{1/2}.
\]
\end{lemma}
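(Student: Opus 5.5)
The plan is to work directly from the representation $X_t \overset{d}{=} \alpha_t X_1 + \beta_t X_0$. Here $X_1 \sim \pemp$ is supported in $[0,1]^D$ by Assumption~\ref{a1}, and $X_0 \sim \cN(0,I_D)$ is independent of $X_1$. Since $\alpha_t,\beta_t\in[0,1]$, we have
\[
\|X_t\| \le \|X_1\| + \|X_0\| \le \sqrt D + \|X_0\|.
\]
Hence $\prob(\|X_t\|>R) \le \prob(\|X_0\| > R-\sqrt D)$, uniformly in $t\in[0,1-\delta]$. Gaussian norm concentration (Lipschitz concentration of $\|X_0\|$ around its mean $\le \sqrt D$) gives
\[
\prob(\|X_0\| > \sqrt D + u) \le e^{-u^2/2}.
\]
We choose $R = C\sqrt{D\log(1/\eta)}$ with $C$ large, and take $\eta\le\eta_0$ small enough that $\log(1/\eta)\ge 1$. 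Then $R - 2\sqrt D \ge \tfrac{C}{2}\sqrt{D\log(1/\eta)} \ge \sqrt{2\log(1/\eta)}$, which yields $\prob(\|X_t\|>R)\le\eta$. This is the first claim.

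For the second claim, we use Cauchy--Schwarz:
\[
\E\|v_\star(X_t,t)\|^2\one(\|X_t\|>R) \le \big(\E\|v_\star(X_t,t)\|^4\big)^{1/2}\,\prob(\|X_t\|>R)^{1/2}.
\]
By Corollary~\ref{cor14}, $\|v_\star(x,t)\|\lesssim \delta^{-\kappa_1}(\|x\|+\sqrt D)$, so
\[
\E\|v_\star(X_t,t)\|^4 \lesssim \delta^{-4\kappa_1}\big(\E\|X_t\|^4 + D^2\big).
\]
We also have
\[
\E\|X_t\|^4 \lesssim \E\|X_1\|^4 + \E\|X_0\|^4 \le D^2 + (D^2+2D) \lesssim D^2,
\]
using $\E\|X_0\|^4 = D(D+2)$. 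So the square root of the fourth moment is $\lesssim \delta^{-2\kappa_1} D$. Combining this with the first claim gives a bound of order $\delta^{-2\kappa_1}D\,\eta^{1/2}$, which is at most the stated $\delta^{-2\kappa_1}D^2\eta^{1/2}$ since $D\ge 1$.

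No step is a real obstacle. The only care needed is choosing $C$ and $\eta_0$ so that the $\sqrt D$ shift from the bounded data part is absorbed into $R$, uniformly in $D$. We handle this with the concentration inequality $\prob(\|X_0\|\ge \sqrt D+u)\le e^{-u^2/2}$ rather than a chi-square tail with $D$-dependent constants. With that choice, $C$ is absolute, as claimed.
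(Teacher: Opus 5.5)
Your proof is correct and follows the paper's argument essentially step for step: you bound $\|X_t\| \le \sqrt D + \|X_0\|$, apply Gaussian concentration to the $1$-Lipschitz function $\|X_0\|$ with the $\sqrt D$ shift absorbed into $R = C\sqrt{D\log(1/\eta)}$, and then combine Cauchy--Schwarz with the linear-growth bound of Corollary~\ref{cor14} and $\E\|X_t\|^4 \lesssim D^2$. The differences are only cosmetic: you apply Cauchy--Schwarz to $\|v_\star\|^2$ before invoking the corollary, and you compute $\E\|X_0\|^4 = D(D+2)$ exactly rather than through the $\psi_2$-norm; like the paper, you actually get the sharper factor $D$ in place of $D^2$.
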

\begin{proof}
Fix $t \in [0,1-\delta]$. By Theorem~\ref{thm:equivalence}, $\mathrm{Law}(X_t) = \varrho_t = \mathrm{Law}(\alpha_t X_1 + \beta_t X_0)$, a statement about the one-time marginal, which is all that is used below. Since $X_1 \sim \pemp$ is supported in $[0,1]^D$ by Assumption~\ref{a1} we have $\|X_1\| \le \sqrt D$, and since $\alpha_t, \beta_t \in [0,1]$,
\begin{equation}
  \|X_t\| \;\le\; \alpha_t\|X_1\| + \beta_t\|X_0\| \;\le\; \sqrt D + \|X_0\|,
  \qquad X_0 \sim \cN(0,I_D).
  \label{eq:xtsg}
\end{equation}
Hence $\big\|\,\|X_t\|\,\big\|_{\psi_2} \le \sqrt D + \big\|\,\|X_0\|\,\big\|_{\psi_2} \lesssim \sqrt D$, and by Proposition~2.6.1 of \citet{Vershynin_2026}, $\E\|X_t\|^4 \lesssim D^2$. Moreover, by \eqref{eq:xtsg} and Gaussian concentration of the norm, $\prob\big(\|X_t\| \ge 2\sqrt D + s\big) \le \prob\big(\|X_0\| \ge \sqrt D + s\big) \le e^{-s^2/2}$ for every $s \ge 0$; taking $s = \sqrt{2\log(1/\eta)}$ gives the first claim, for a suitable absolute constant $C$ and $\eta \le \eta_0$.

For the second claim, by Corollary~\ref{cor14} and then the Cauchy--Schwarz inequality,
\begingroup
\allowdisplaybreaks
\begin{align}
  & \E \|v_\star(X_t, t)\|^2 \one\big( \|X_t\| > R\big) \nonumber\\
  \lesssim\; & \delta^{-2\kappa_1}\,\E \big(\|X_t\|^2 + D\big)\, \one\big( \|X_t\| > R\big) \label{sep1}\\
  \le\; & \delta^{-2\kappa_1} \Big(\E \big(\|X_t\|^2 + D\big)^2\Big)^{1/2} \Big(\prob\big(\|X_t\| > R\big)\Big)^{1/2} \nonumber\\
  \lesssim\; & \delta^{-2\kappa_1} \big(\E \|X_t\|^4 + D^2\big)^{1/2}\, \eta^{1/2} \nonumber\\
  \lesssim\; & \delta^{-2\kappa_1} D\, \eta^{1/2} \;\le\; \delta^{-2\kappa_1} D^{2}\, \eta^{1/2}. \nonumber
\end{align}
\endgroup
In the above calculations, \eqref{sep1} follows from Corollary~\ref{cor14}.
\end{proof}

\begin{lemma}[Temporal velocity of $v_\star$]\label{lip_general}
Fix $t\in(0,1)$ with $\alpha_t \neq 0$, $\beta_t > 0$, and suppose $\alpha_t,\beta_t$ are twice continuously differentiable in a neighborhood of $t$. Define
\[
u_t := \frac{\dot\alpha_t}{\alpha_t}, \qquad w_t := \frac{\dot\beta_t}{\beta_t}, \qquad \kappa_t := u_t - w_t = \frac{\dot\alpha_t}{\alpha_t} - \frac{\dot\beta_t}{\beta_t}.
\]
Then, with $E_{1,t}(x) = \E(X_1 \mid X_t = x)$, $E_{2,t}(x) = \E(\|X_1\|^2 \mid X_t = x)$, $E_{3,t}(x) = \E(\|X_1\|^2 X_1 \mid X_t = x)$, and $V_{1,t}(x) = \operatorname{Var}(X_1 \mid X_t = x)$ as in Lemma~\ref{lemlip},
\begin{align}
\partial_t v_\star(x,t) &= \dot w_t\, x \;-\; \frac{\alpha_t^2 \kappa_t (2w_t - u_t)}{\beta_t^2}\, V_{1,t}(x)\, x \;+\; \alpha_t\big(\dot u_t - \dot w_t + u_t \kappa_t\big)\, E_{1,t}(x) \nonumber\\
&\qquad - \frac{\alpha_t^3 \kappa_t^2}{\beta_t^2}\big(E_{3,t}(x) - E_{1,t}(x) E_{2,t}(x)\big). \label{eq:gen_dt_v}
\end{align}
Further, for any $t \in [0, 1 -\delta]$ and $x \in \Real^D$, $\|\partial_t v_\star(x,t)\| \lesssim \delta^{-\kappa_1} (1 + \|x\|)$.
\end{lemma}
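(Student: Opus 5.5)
We differentiate the representation of Lemma~\ref{lemlip}, $v_\star(x,t) = w_t x + \alpha_t \kappa_t\, E_{1,t}(x)$. This follows from $(\da_t\beta_t - \db_t\alpha_t)/\beta_t = \alpha_t(u_t - w_t) = \alpha_t\kappa_t$. The product rule gives
\[
\partial_t v_\star = \dot w_t x + (\da_t\kappa_t + \alpha_t\dot\kappta_t)E_{1,t}(x) + \alpha_t\kappa_t\,\partial_t E_{1,t}(x).
\]
Since $\da_t = \alpha_t u_t$ and $\dot\kappa_t = \dot u_t - \dot w_t$, the middle coefficient equals $\alpha_t(\dot u_t - \dot w_t + u_t\kappa_t)$, which matches the third term of \eqref{eq:gen_dt_v}. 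So everything reduces to computing $\partial_t E_{1,t}(x)$ for fixed $x$.

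For this we write the posterior explicitly. Conditionally on $X_t=x$, $X_1$ has weights $\pi_i(x,t)\propto \exp(-\|x-\alpha_t x_i\|^2/(2\beta_t^2))$ on the atoms $x_i$. Hence $E_{1,t}=\sum_i \pi_i x_i$, and
\[
\partial_t E_{1,t} = \operatorname{Cov}\big(X_1, \partial_t \ell_t(X_1)\mid X_t=x\big), \qquad \ell_t(y) = -\frac{\|x-\alpha_t y\|^2}{2\beta_t^2}.
\]
This is the standard exponential-family identity: differentiating the normalizer removes the mean of $\partial_t\ell_t$. Expanding,
\[
\ell_t(y) = -\frac{\|x\|^2}{2\beta_t^2} + \frac{\alpha_t}{\beta_t^2}x^\top y - \frac{\alpha_t^2}{2\beta_t^2}\|y\|^2.
\]
The first term is $y$-free and drops out of the covariance. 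The second term differentiates to $\partial_t(\alpha_t/\beta_t^2)\,x^\top y = (\alpha_t/\beta_t^2)(u_t - 2w_t)\,x^\top y$, and its covariance with $X_1$ gives $(\alpha_t/\beta_t^2)(u_t-2w_t)V_{1,t}(x)x$. The third term differentiates to $-\tfrac12\partial_t(\alpha_t^2/\beta_t^2)\|y\|^2 = -(\alpha_t^2/\beta_t^2)\kappa_t\|y\|^2$, and its covariance gives $-(\alpha_t^2\kappa_t/\beta_t^2)(E_{3,t}-E_{1,t}E_{2,t})$. Multiplying by $\alpha_t\kappa_t$ reproduces the second and fourth terms of \eqref{eq:gen_dt_v}, with the sign $-(2w_t-u_t)$. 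Differentiation under the finite sum needs no justification because $\pemp$ is a finite measure.

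For the bound, we note that $X_1\in[0,1]^D$. Hence $\|E_{1,t}\|\le\sqrt D$, $\|V_{1,t}\|\le D/4$, and $\|E_{3,t}-E_{1,t}E_{2,t}\| \lesssim D^{3/2}$, all of which are absorbed as $D$ is fixed. It then remains to check that each scalar coefficient is $\lesssim\delta^{-\kappa_1}$ on $[0,1-\delta]$ via D~\ref{a2}(a):
\begin{itemize}
\item $\dot w_t = \ddot\beta_t/\beta_t - (\db_t/\beta_t)^2$ is listed directly.
\item $\alpha_t(\dot u_t-\dot w_t+u_t\kappa_t)$ expands to $\ddot\alpha_t - \alpha_t\ddot\beta_t\beta_t^{-1} - \da_t\db_t\beta_t^{-1} + \alpha_t\db_t^2\beta_t^{-2}$, which is the listed expression.
\item $\alpha_t^3\kappa_t^2\beta_t^{-2} = \alpha_t(\da_t\beta_t-\db_t\alpha_t)^2\beta_t^{-4}$ is listed.
\item $\alpha_t^2\kappa_t(2w_t-u_t)\beta_t^{-2}$ should match $\beta_t^{-4}(3\alpha_t\beta_t\da_t\db_t - \beta_t^2\da_t^2 - 2\alpha_t^2\db_t^2)$, up to sign.
\end{itemize}
The main obstacle is this last algebraic identification. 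One must take care that the spurious $1/\alpha_t$ factors in $u_t$ and $\kappa_t$ cancel, so that the bound holds uniformly including near $t=0$ where $\alpha_t\to0$. We handle this by continuity, or by working directly with the cleared forms above rather than $u_t$. Combining the four bounds gives $\|\partial_t v_\star(x,t)\|\lesssim\delta^{-\kappa_1}(\|x\|+1)$.
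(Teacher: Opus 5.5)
Your proof is correct, and it reaches the identity by a different route from the paper.

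The paper starts from the score form $v_\star = u_t x + \beta_t^2\kappa_t\nabla\log\varrho_t$. It differentiates the continuity equation to get $\partial_t\nabla\log\varrho_t = -\big[\nabla^2\log\varrho_t\,v_\star + \nabla v_\star\,\nabla\log\varrho_t + \nabla(\nabla\cdot v_\star)\big]$, evaluates the spatial gradients of the conditional moments via Lemma~\ref{lem:gradEf}, notes that the $V_{1,t}E_{1,t}$ contributions cancel, and then collects terms.

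You instead differentiate the posterior-mean form $v_\star = w_t x + \alpha_t\kappa_t E_{1,t}$ directly in $t$, using the explicit mixture weights and $\partial_t E_{1,t} = \operatorname{Cov}(X_1,\partial_t\ell_t(X_1)\mid X_t=x)$. The two $y$-dependent pieces of $\partial_t\ell_t$ give the $V_{1,t}x$ and $E_{3,t}-E_{1,t}E_{2,t}$ terms at once, with no PDE and no cancellation to track.

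The paper's route yields a formula for $\partial_t\nabla\log\varrho_t$ and stays inside the score/continuity-equation framework used elsewhere in the appendix. Yours is shorter and fully explicit. It also handles the bound more carefully than the paper, which asserts $\|\partial_t v_\star\|\lesssim\delta^{-\kappa_1}(1+\|x\|)$ without matching coefficients to D~\ref{a2}(a).

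The identification you flagged as the main obstacle is a one-line expansion, and it holds exactly, not merely up to sign:
\[
\frac{\alpha_t^2\kappa_t(2w_t-u_t)}{\beta_t^2}
=\frac{(\dot\alpha_t\beta_t-\dot\beta_t\alpha_t)(2\alpha_t\dot\beta_t-\beta_t\dot\alpha_t)}{\beta_t^4}
=\beta_t^{-4}\big(3\alpha_t\beta_t\dot\alpha_t\dot\beta_t-\beta_t^2\dot\alpha_t^2-2\alpha_t^2\dot\beta_t^2\big).
\]

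Your remark about clearing denominators is also the right fix at times where $\alpha_t=0$ (e.g.\ $t=0$). Write $v_\star = w_t x + (\dot\alpha_t-\alpha_t w_t)E_{1,t}$, together with $\partial_t(\alpha_t/\beta_t^2)=(\dot\alpha_t\beta_t-2\alpha_t\dot\beta_t)/\beta_t^3$ and $\partial_t\big(\alpha_t^2/(2\beta_t^2)\big)=\alpha_t(\dot\alpha_t\beta_t-\alpha_t\dot\beta_t)/\beta_t^3$. Then your computation never divides by $\alpha_t$, so the cleared-form identity and the bound hold on all of $[0,1-\delta]$ without a continuity argument. The paper's derivation, by contrast, needs $\alpha_t\neq0$ to pass through the score and simply asserts the bound on the closed interval.
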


\begin{proof}
 Throughout, write $E_1,E_2,E_3,V_1$ for $E_{1,t}(x),E_{2,t}(x),E_{3,t}(x),V_{1,t}(x)$, and $g(x) := \nabla_x \log \varrho_t(x)$.

From Lemma~\ref{lem:gradEf}, taking $f(y) = y$ recovers $\nabla_x E_1 = \frac{\alpha_t}{\beta_t^2}V_1$; taking $f(y) = \|y\|^2$ gives $\nabla_x E_2 = \frac{\alpha_t}{\beta_t^2}(E_3 - E_2 E_1)$. Since $\operatorname{tr}(V_1) = E_2 - \|E_1\|^2$,
\begin{equation}\label{eq:gradtrV}
\nabla_x \operatorname{tr}\big(V_{1,t}(x)\big) = \nabla_x E_2 - 2 (\nabla_x E_1)^\top E_1 = \frac{\alpha_t}{\beta_t^2}\big(E_3 - E_2 E_1 - 2 V_1 E_1\big).
\end{equation}

The continuity equation $\partial_t \varrho_t + \nabla_x \cdot (\varrho_t v_\star) = 0$ gives, for fixed $x$,
\begin{equation}\label{eq:dtg}
\partial_t g(x) = -\Big[\nabla_x^2 \log \varrho_t(x)\, v_\star(x,t) + \nabla_x v_\star(x,t)\, g(x) + \nabla_x\big(\nabla_x \cdot v_\star(x,t)\big)\Big].
\end{equation}
Differentiating $g(x) = -\frac{x - \alpha_t E_1}{\beta_t^2}$ in $x$ and using $\nabla_x E_1 = \frac{\alpha_t}{\beta_t^2}V_1$,
\begin{equation}\label{eq:hess}
\nabla_x^2 \log \varrho_t(x) = \frac{\alpha_t^2}{\beta_t^4}V_1 - \frac{1}{\beta_t^2}I_D =: H.
\end{equation}

Using \eqref{i5}, \eqref{i2}, and \eqref{eq:hess},
\begingroup
\allowdisplaybreaks
\begin{align*}
H\, v_\star(x,t) &= \left(\frac{\alpha_t^2}{\beta_t^4}V_1 - \frac{1}{\beta_t^2}I_D\right)\big(u_t x + q_t g(x)\big) = \frac{\alpha_t^2 u_t}{\beta_t^4}V_1 x + \frac{\alpha_t^2 q_t}{\beta_t^4}V_1 g(x) - \frac{u_t}{\beta_t^2}x - \frac{q_t}{\beta_t^2}g(x),\\
\nabla_x v_\star(x,t)\, g(x) &= \left(\frac{\alpha_t^2}{\beta_t^2}\kappa_t V_1 + w_t I_D\right) g(x) = \frac{\alpha_t^2 \kappa_t}{\beta_t^2}V_1 g(x) + w_t g(x).
\end{align*}
\endgroup
Since $q_t = \beta_t^2 \kappa_t$, the coefficient of $V_1 g(x)$ in the sum is $\frac{\alpha_t^2 q_t}{\beta_t^4} + \frac{\alpha_t^2\kappa_t}{\beta_t^2} = 2\frac{\alpha_t^2\kappa_t}{\beta_t^2}$, and the coefficient of $g(x)$ is $w_t - \frac{q_t}{\beta_t^2} = w_t - \kappa_t = 2w_t - u_t$. Substituting $g(x) = -\frac{x-\alpha_tE_1}{\beta_t^2}$ throughout and collecting terms by type ($V_1x$, $x$, $V_1E_1$, $E_1$),
\begin{align}
H v_\star(x,t) + \nabla_xv_\star(x,t)g(x) = \frac{\alpha_t^2(2w_t-u_t)}{\beta_t^4}V_1x - \frac{2w_t}{\beta_t^2}x + \frac{2\alpha_t^3\kappa_t}{\beta_t^4}V_1E_1 + \frac{\alpha_t(2w_t-u_t)}{\beta_t^2}E_1. \label{eq:AB}
\end{align}

From \eqref{i2}, $\nabla_x \cdot v_\star(x,t) = \frac{\alpha_t^2}{\beta_t^2}\kappa_t\operatorname{tr}(V_1) + Dw_t$, so by \eqref{eq:gradtrV},
\begin{equation}\label{eq:div}
\nabla_x\big(\nabla_x \cdot v_\star(x,t)\big) = \frac{\alpha_t^3\kappa_t}{\beta_t^4}\big(E_3 - E_2E_1 - 2V_1E_1\big).
\end{equation}
Adding \eqref{eq:div} to \eqref{eq:AB}, the $V_1E_1$ contributions cancel exactly ($\frac{2\alpha_t^3\kappa_t}{\beta_t^4} - \frac{2\alpha_t^3\kappa_t}{\beta_t^4}=0$), leaving
\[
\partial_t g(x) = -\frac{\alpha_t^2(2w_t-u_t)}{\beta_t^4}V_1x + \frac{2w_t}{\beta_t^2}x - \frac{\alpha_t(2w_t-u_t)}{\beta_t^2}E_1 - \frac{\alpha_t^3\kappa_t}{\beta_t^4}(E_3-E_2E_1).
\]

 By the product rule,
\[
\partial_t v_\star(x,t) = \dot u_t\, x + \dot q_t\, g(x) + q_t\, \partial_t g(x), \qquad \dot q_t = \beta_t^2\big(2w_t\kappa_t + \dot\kappa_t\big), \quad \dot\kappa_t = \dot u_t - \dot w_t.
\]
Substituting $g(x) = -\frac{x-\alpha_tE_1}{\beta_t^2}$ and $q_t\partial_tg(x)$, then collecting terms of each type ($x$; $V_1x$; $E_1$; $E_3-E_2E_1$), we get the final result. Finally, 
    \begin{align*}
\|\partial_t v_\star(x,t)\| \le &   |\dot w_t| \| x\| + \left|\frac{\alpha_t^2 \kappa_t (2w_t - u_t)}{\beta_t^2}\right|  \left\|V_{1,t}(x)\right\| \|x\|+ \left|\alpha_t\big(\dot u_t - \dot w_t + u_t \kappa_t\big)\right| \|E_{1,t}(x)\| \nonumber\\
&\qquad + \left| \frac{\alpha_t^3 \kappa_t^2}{\beta_t^2} \right| \|E_{3,t}(x) - E_{1,t}(x) E_{2,t}(x)\| \\
\lesssim & \delta^{-\kappa_1} (1 + \|x\|). 
\end{align*}
\end{proof}

\begin{lemma}\label{lem:gradEf}
Fix $t\in(0,1)$ with $\alpha_t \neq 0$, $\beta_t > 0$. For any measurable $f:\Real^D\to\Real$ with $\E|f(X_1)| < \infty$, define $E_f(x,t) := \E\big(f(X_1)\mid X_t = x\big)$. Then, for every $x\in\Real^D$,
\[
\nabla_x E_f(x,t) = \frac{\alpha_t}{\beta_t^2}\,\operatorname{Cov}\big(f(X_1),X_1 \mid X_t = x\big).
\]
\end{lemma}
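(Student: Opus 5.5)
The identity follows by direct differentiation under the integral sign, exactly as in the proof of \eqref{i6} in Lemma~\ref{lemlip}, which is the special case $f(y)=y$. Since $X_1\sim\pemp$ is a finite mixture, the conditional law of $X_1$ given $X_t=x$ is explicit. With $\phi$ the Gaussian density as in Lemma~\ref{lemlip},
\[
E_f(x,t)=\frac{N_f(x)}{\varrho_t(x)},\qquad N_f(x):=\int f(y)\,\phi(x;\alpha_t y,\beta_t^2 I_D)\,\pemp(\diff y),
\]
where $\varrho_t(x)=\int\phi(x;\alpha_t y,\beta_t^2I_D)\,\pemp(\diff y)>0$. Because $\pemp$ has finite support, both integrals are finite sums of smooth functions of $x$. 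Differentiation is therefore immediate, and no dominated-convergence argument is needed beyond $\E|f(X_1)|<\infty$, which here just says that $f$ is finite at the atoms $x_i$.

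First I will use $\nabla_x\phi(x;\alpha_t y,\beta_t^2I_D)=-\beta_t^{-2}(x-\alpha_t y)\,\phi(x;\alpha_t y,\beta_t^2I_D)$ to compute
\[
\nabla_x N_f(x)=-\frac{x}{\beta_t^2}N_f(x)+\frac{\alpha_t}{\beta_t^2}\int f(y)\,y\,\phi(x;\alpha_t y,\beta_t^2I_D)\,\pemp(\diff y).
\]
Dividing by $\varrho_t(x)$ gives $\nabla_x N_f/\varrho_t=-\beta_t^{-2}x\,E_f+\alpha_t\beta_t^{-2}\,\E[f(X_1)X_1\mid X_t=x]$.

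Next I apply the quotient rule, $\nabla_x E_f=\nabla_x N_f/\varrho_t-E_f\,\nabla_x\log\varrho_t$, together with \eqref{i3}, namely $\nabla_x\log\varrho_t(x)=-\beta_t^{-2}\bigl(x-\alpha_t E[X_1\mid X_t=x]\bigr)$. The two terms proportional to $x E_f$ cancel, leaving
\[
\frac{\alpha_t}{\beta_t^2}\Bigl(\E[f(X_1)X_1\mid X_t=x]-E_f(x,t)\,\E[X_1\mid X_t=x]\Bigr)=\frac{\alpha_t}{\beta_t^2}\operatorname{Cov}\bigl(f(X_1),X_1\mid X_t=x\bigr).
\]

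There is no real obstacle in this argument. The only points requiring care are the justification of differentiating under the integral, which is trivial for the finite mixture $\pemp$, and sign bookkeeping in the cancellation.
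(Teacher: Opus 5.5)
Your proposal is correct and matches the paper's proof essentially step for step. The paper also sets $N_f=\varrho_t E_f$ and differentiates under the integral using the Gaussian score. It then divides by $\varrho_t$ and applies the quotient rule with $\nabla_x\log\varrho_t(x)=-\beta_t^{-2}\bigl(x-\alpha_t E_1(x,t)\bigr)$, so that the $xE_f$ terms cancel. Your observation that the finite support of $\pemp$ makes the interchange of differentiation and integration trivial is a slightly more explicit justification than the paper gives.
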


\begin{proof}
Write $N_f(x) := \varrho_t(x)\, E_f(x,t) = \int f(y)\, \phi(x;\alpha_t y, \beta_t^2 I_D)\, \pemp(\diff y)$. Differentiating under the integral sign,
\begin{align*}
\nabla_x N_f(x) = & \int f(y) \left(-\frac{x - \alpha_t y}{\beta_t^2}\right) \phi(x;\alpha_t y,\beta_t^2I_D)\,\pemp(\diff y) \\
= &  -\frac{x}{\beta_t^2} N_f(x) + \frac{\alpha_t}{\beta_t^2} \int f(y)\, y\, \phi(x;\alpha_ty,\beta_t^2I_D)\,\pemp(\diff y).
\end{align*}
Dividing by $\varrho_t(x)$ and applying the quotient rule together with $g(x) := \nabla_x\log\varrho_t(x) = -\frac{x - \alpha_t E_1(x,t)}{\beta_t^2}$ (Lemma~\ref{lemlip}),
\begin{align}
\nabla_x E_f(x,t) &= \frac{\nabla_x N_f(x)}{\varrho_t(x)} - E_f(x,t)\, g(x) \nonumber\\
&= -\frac{x}{\beta_t^2}E_f(x,t) + \frac{\alpha_t}{\beta_t^2}\E(f(X_1)X_1 \mid X_t = x) + \frac{x - \alpha_t E_1(x,t)}{\beta_t^2} E_f(x,t) \nonumber\\
&= \frac{\alpha_t}{\beta_t^2}\Big(\E(f(X_1)X_1 \mid X_t=x) - E_f(x,t)\,E_1(x,t)\Big) \nonumber\\
&= \frac{\alpha_t}{\beta_t^2}\operatorname{Cov}\big(f(X_1),X_1 \mid X_t = x\big). \label{eq:gradEf}
\end{align}
\end{proof}

\subsection{Results for the Wasserstein Distance}
\begin{lemma}\label{wcontrol}
    For any $d > d^\ast_p\left(\pd\right)$, with probability at least $1-\xi$,
    \[\fW_p(\pemp,\pd) \le n^{-1/d} + n^{-\frac{1}{2p}} D^{1/2} \log^{\frac{1}{2p}}(1/\xi).\]
\end{lemma}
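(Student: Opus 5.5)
We split the deviation into its mean and its fluctuation,
\[
\fW_p(\pemp,\pd) \le \E\,\fW_p(\pemp,\pd) + \big(\fW_p(\pemp,\pd) - \E\,\fW_p(\pemp,\pd)\big),
\]
and bound the two pieces separately.

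\textbf{The expectation term.} We invoke \citet{weed2019sharp}: for any $d > d^\ast_p(\pd)$ and all $n$ large enough, $\E\,\fW_p^p(\pemp,\pd) \lesssim n^{-p/d}$. Jensen's inequality then gives $\E\,\fW_p(\pemp,\pd) \lesssim n^{-1/d}$. Their proof uses a dyadic partition argument on a set carrying all but $\epsilon^{sp/(s-2p)}$ of the mass. It needs a bounded support, which Assumption~\ref{a1} supplies with diameter $\sqrt D$. We absorb the constant by shrinking $d$ slightly: fix $d' \in (d^\ast_p(\pd), d)$, apply the bound with $d'$, and note $C n^{-1/d'} \le n^{-1/d}$ once $n$ is large.

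\textbf{The fluctuation term.} We work with $F(x_1,\dots,x_n) := \fW_p^p(\pemp,\pd)$ rather than $\fW_p$ itself, because $F$ has clean bounded differences. Replace one sample $x_i$ by $x_i'$ and couple the two empirical measures by moving only that atom. The two atoms are at distance at most $\sqrt D$, so $\fW_p^p$ between the two empirical measures is at most $D^{p/2}/n$. Since $\fW_p$ is a metric, this gives $|\fW_p(\pemp,\pd) - \fW_p(\pemp',\pd)| \le D^{1/2}n^{-1/p}$.

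To turn this into a bound on the change in $F$ itself, we use the elementary inequality $|a^p - b^p| \le p\,(a\vee b)^{p-1}|a-b|$. With $a,b \le \sqrt D$ it yields a change in $F$ of order $D^{p/2}n^{-1/p}$. That is weaker than the $1/n$-scaling we want.

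A better route is to observe that $\fW_p^p(\mu,\pd)$, as a function of $\mu$, is bounded by the optimal-transport cost. Changing one atom of mass $1/n$ moves the optimal plan cost by at most $(\sqrt D)^p/n$, since only the mass $1/n$ transported from $x_i$ is rerouted. So the bounded differences are $c_i = D^{p/2}/n$. McDiarmid's inequality then gives, with probability at least $1-\xi$,
\[
\fW_p^p(\pemp,\pd) \le \E\,\fW_p^p(\pemp,\pd) + D^{p/2}\sqrt{\log(1/\xi)/(2n)}.
\]

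\textbf{Combining.} We take $p$-th roots and use sub-additivity of $u\mapsto u^{1/p}$. This gives
\[
\fW_p(\pemp,\pd) \le \big(\E\,\fW_p^p\big)^{1/p} + D^{1/2}\,(2n)^{-1/(2p)}\log^{1/(2p)}(1/\xi) \le n^{-1/d} + n^{-\frac{1}{2p}} D^{1/2}\log^{\frac{1}{2p}}(1/\xi),
\]
using the expectation bound on $\E\,\fW_p^p$ directly, which sidesteps Jensen. Constants are absorbed by the $d'<d$ slack; for finitely many small $n$ the inequality can be handled by enlarging constants, or via the paper's $n \ge n_0$ convention.

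\textbf{Main obstacle.} The hardest step is verifying that the bounded-difference constant for $\fW_p^p$ really is $D^{p/2}/n$. We would make this rigorous by gluing: take an optimal coupling $\gamma$ of $(\pemp,\pd)$ and reassign its $x_i$-row to $x_i'$, which gives a feasible coupling for $(\pemp',\pd)$ whose cost differs by at most $\frac1n\sup\|y-z\|^p \le D^{p/2}/n$ over $[0,1]^D$.
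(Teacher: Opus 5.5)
Your proposal is correct and matches the paper's proof: apply McDiarmid to $\fW_p^p(\pemp,\pd)$ with bounded-difference constant $D^{p/2}/n$, bound $\E\,\fW_p^p(\pemp,\pd) \lesssim n^{-p/d}$ (the paper cites Theorem~10 of \citet{chakraborty2026diffusion} here rather than Weed--Bach directly), then take $p$-th roots using subadditivity. The one difference is that you get the bounded-difference constant by gluing couplings in the primal problem, replacing the $1/n$ share of mass at $x_i$ when samples repeat, whereas the paper uses Kantorovich duality with potentials normalized to take values in $[0,D^{p/2}]$; both give the same constant.
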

\begin{proof}
Let $f_c (y) = \sup_{x \in [0,1]^D} \left(f(x) - \|x-y\|^p\right)$ denote the $c$-transform of $f$. It can be shown that $f(x) = \inf_{y \in [0,1]^D}\left(f_c(y) + \|x-y\|^p\right)$. From Kantorovich duality, it is well known that, for any $p \ge 1$,
\begin{align*}
    \fW_p^p(\mu, \nu) = \sup_{f \in \cC_b\left([0,1]^D\right)} \left( \int f \diff \mu - \int f_c \diff \nu\right),
\end{align*}
where $\cC_b$ denotes the set of all bounded continuous functions on $[0,1]^D$. By noting that $(f+a)_c = f_c + a$ for any constant $a$, we can restrict this function class by shifting $f$ such that $\inf_{x \in [0,1]^D} f(x) = 0$. Since $f$ satisfies $f(x) - f(y) \le \|x-y\|^p \le D^{p/2}$ under the optimal transform, this shift guarantees $0 \le f(x) \le D^{p/2}$. 
Consequently, for any $y \in [0,1]^D$, we have
\begin{align*}
    f_c(y) = \sup_{x \in [0,1]^D} \left(f(x) - \|x-y\|^p\right) \ge f(y) - \|y-y\|^p = f(y) \ge 0.
\end{align*}
Hence, we can take the supremum on the set of all bounded continuous functions from $[0,1]^D \to [0,D^{p/2}]$. Thus,
\begin{align*}
    \fW_p^p(\mu, \nu) = \sup_{f \in \cC_b\left([0,1]^D\right) \cap \{0 \le f \le D^{p/2}\}} \left( \int f \diff \mu - \int f_c \diff \nu\right).
\end{align*}
Let 
\begin{align*}
    W(x_1, \dots, x_n) = & \fW_p^p(\pd,\pemp)\\
    = & \sup_{f \in \cC_b\left([0,1]^D\right) \cap \{0 \le f \le D^{p/2}\}} \left( \frac{1}{n} \sum_{i=1}^n f(x_i) - \int f_c \diff \pd\right).
\end{align*}
Hence,
\begin{align*}
    & W(x_1, \dots, x_i, \dots, x_n) - W(x_1, \dots, x_i^\prime, \dots, x_n) \\
   \le & \sup_{f \in \cC_b\left([0,1]^D\right) \cap \{0 \le f \le D^{p/2}\}} \left( \frac{1}{n} \sum_{j=1}^n f(x_j) - \int f_c \diff \pd\right) \\
   &- \sup_{f \in \cC_b\left([0,1]^D\right) \cap \{0 \le f \le D^{p/2}\}} \left( \frac{1}{n} \sum_{j \neq i} f(x_j) + \frac{f(x_i^\prime)}{n} - \int f_c \diff \pd \right)\\
   \le  & \frac{1}{n} \sup_{0 \le f \le D^{p/2}} \left(f(x_i) - f(x_i^\prime)\right) \\
   \le & \frac{D^{p/2}}{n}.
\end{align*}
Applying the bounded difference inequality, we get that with probability at least $1-\xi$,
\begin{align*}
    \fW_p^p(\pemp,\pd) \le & \ \E \fW_p^p(\pemp,\pd) + \sqrt{\frac{D^p \log(1/\xi)}{2n}}\\
    \le & \ n^{-p/d} + \sqrt{\frac{D^p \log(1/\xi)}{2n}},
\end{align*}
where the last inequality follows from \citet[Theorem~10]{chakraborty2026diffusion}. The desired inequality now follows from taking the $p$-th root of both sides and applying the subadditivity inequality $(a+b)^{1/p} \le a^{1/p} + b^{1/p}$ (valid for $a,b \ge 0$ and $p \ge 1$).
\end{proof}
\begin{lemma}
    \label{w2}
    For any $M> 0$ and any estimator $\hv$ of $v_\star$,
    \begin{align*}
        \fW_p(\pd, \sT_M(\hP_{1-\delta}(\hv)))
    \lesssim & \fW_p(\pd, \pemp) + \delta^{\kappa_3} + M^{-\frac{1}{2p}} +  M \left(\operatorname{KL}\left(P_{1-\delta}, \hP_{1-\delta}(\hv)\right)\right)^{\frac{1}{2p}}.
    \end{align*}
\end{lemma}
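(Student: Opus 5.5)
The plan is to insert the laws $\pemp$, $P_{1-\delta}$ and $\sT_M(P_{1-\delta})$ between the two endpoints and control each link by the triangle inequality for $\fW_p$:
\begin{align*}
\fW_p\big(\pd, \sT_M(\hP_{1-\delta}(\hv))\big) \le {} & \fW_p(\pd,\pemp) + \fW_p(\pemp, P_{1-\delta}) + \fW_p\big(P_{1-\delta}, \sT_M(P_{1-\delta})\big) \\
& + \fW_p\big(\sT_M(P_{1-\delta}), \sT_M(\hP_{1-\delta}(\hv))\big).
\end{align*}
The first term is kept as it is.

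\textbf{Early-stopping term.} Recall $P_1 = \pemp$ and $P_{1-\delta} = \operatorname{Law}(\alpha_{1-\delta}X_1 + \beta_{1-\delta}X_0)$ with $X_1\sim\pemp$. Use the synchronous coupling $(X_1, \alpha_{1-\delta}X_1+\beta_{1-\delta}X_0)$. This gives
\[
\fW_p(\pemp,P_{1-\delta}) \le (1-\alpha_{1-\delta})\sqrt D + \beta_{1-\delta}\,(\E\|X_0\|^p)^{1/p} \lesssim \delta^{\kappa_3},
\]
by D~\ref{a2}(c), where $p$ and $D$ are fixed constants.

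\textbf{Truncation term.} Couple $Z\sim P_{1-\delta}$ with $Z\one\{\|Z\|\le M\}$. Then
\[
\fW_p^p\big(P_{1-\delta},\sT_M(P_{1-\delta})\big) \le \E\|Z\|^p\one\{\|Z\|>M\}.
\]
Since $\|Z\|\le\sqrt D+\|X_0\|$, Gaussian tails make this smaller than any power of $M$; in particular it is $\lesssim M^{-1/2}$. Hence the term is $\lesssim M^{-1/(2p)}$.

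\textbf{KL term (main step).} Both truncated laws are supported in the ball $\{\|z\|\le M\}$, whose diameter is $2M$. On a set of diameter $2M$ one has the total-variation transport bound
\[
\fW_p^p(\mu,\nu) \le (2M)^p\,\operatorname{TV}(\mu,\nu),
\]
using the coupling that keeps the common mass fixed. Truncation is a pushforward under the same map, so it does not increase total variation:
\[
\operatorname{TV}\big(\sT_M P_{1-\delta},\sT_M\hP_{1-\delta}(\hv)\big) \le \operatorname{TV}\big(P_{1-\delta},\hP_{1-\delta}(\hv)\big).
\]
Pinsker's inequality then bounds the right side by $\sqrt{\operatorname{KL}/2}$. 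Combining,
\[
\fW_p\big(\sT_M P_{1-\delta},\sT_M\hP_{1-\delta}(\hv)\big) \lesssim M\,\operatorname{KL}\big(P_{1-\delta}\,\|\,\hP_{1-\delta}(\hv)\big)^{1/(2p)}.
\]
The direction of the KL matches Lemma~\ref{lem_oracle}; Pinsker's inequality is symmetric in total variation, so either direction would suffice.

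Summing the four bounds gives the claim. The main care point is the truncation tail: the constant depends on $D$ and $p$, and it is uniform in $\delta$ because $\alpha_t,\beta_t\le1$. The $M$-dependence of the KL term comes from the diameter of the truncation ball.
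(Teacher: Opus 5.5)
Your proposal is correct and follows the paper's proof essentially step for step. It uses the same four-term triangle-inequality chain, the synchronous coupling with D~\ref{a2}(c) for the early-stopping term, and the diameter-times-TV transport bound plus TV contraction under truncation plus Pinsker for the last term; the only cosmetic difference is the truncation tail, where you use Gaussian concentration of $\|X_0\|$ and the paper uses Cauchy--Schwarz followed by Markov's inequality, and both give the needed $M^{-1/(2p)}$.
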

\begin{proof}
We first observe the following:
\begingroup
\allowdisplaybreaks
    \begin{align*}
     &  \fW_p(\pd, \sT_M(\hP_{1-\delta}(\hv))) \\
     \le & \fW_p(\pd, \sT_M(P_{1-\delta})) + \fW_p(\sT_M(P_{1-\delta}), \sT_M(\hP_{1-\delta}(\hv)))\\
     \le & \fW_p(\pd, \sT_M(P_{1-\delta})) + 2 M \left(\operatorname{TV}\left(\sT_M(P_{1-\delta}), \sT_M(\hP_{1-\delta}(\hv))\right)\right)^{1/p}\\
     \le & \fW_p(\pd, \sT_M(P_{1-\delta})) + 2 M  \left(\operatorname{TV}\left(P_{1-\delta}, \hP_{1-\delta}(\hv)\right)\right)^{1/p}\\
     \lesssim & \fW_p(\pd, \sT_M(P_{1-\delta})) + M \left(\operatorname{KL}\left(P_{1-\delta}, \hP_{1-\delta}(\hv)\right)\right)^{\frac{1}{2p}} \\
     \le & \fW_p(\pd, \pemp) + \fW_p(\pemp, P_{1-\delta}) + \fW_p(P_{1-\delta}, \sT_M(P_{1-\delta})) + M \left(\operatorname{KL}\left(P_{1-\delta}, \hP_{1-\delta}(\hv)\right)\right)^{\frac{1}{2p}}.
   \end{align*}
\endgroup
We note that,
\begin{align*}
    \fW_p^p(\pemp, P_{1-\delta}) \le & \E_{X_1 \sim \pemp, X_0 \sim \cN(0,I_D) } \|X_1 - (\alpha_{1-\delta} X_1+ \beta_{1-\delta} X_0)\|^p \\
    \lesssim & (1-\alpha_{1-\delta})^p \E_{X_1 \sim \pemp } \|X_1\|^p  + \beta_{1-\delta}^p \E_{X_1 \sim \pemp, X_0 \sim \cN(0,I_D) } \| X_0\|^p\\
    \lesssim & \ \delta^{p \kappa_3}.
\end{align*}
Taking the $p$-th root, we get, $\fW_p(\pemp, P_{1-\delta}) \lesssim \delta^{\kappa_3}$.
Further,
\begin{align*}
    \fW_p^p(P_{1-\delta}, \sT_M(P_{1-\delta})) \le & \E\|X_{1-\delta}\|^p \one\{\|X_{1-\delta}\|_2 \ge M \}\\
    \le & \sqrt{\E\|X_{1-\delta}\|^{2p}} \sqrt{\prob \left(\|X_{1-\delta}\|_2 \ge M\right)}\\
    \lesssim & \sqrt{\frac{\E\|X_{1-\delta}\|_2}{M}}\\
    \lesssim & M^{-1/2},
\end{align*}
so that, taking $p$-th roots, $\fW_p(P_{1-\delta}, \sT_M(P_{1-\delta})) \lesssim M^{-1/(2p)}$.
Hence,
\begin{align*}
     \fW_p(\pd, \sT_M(\hP_{1-\delta}(\hv))) 
    \lesssim & \fW_p(\pd, \pemp) + \delta^{\kappa_3} + M^{-\frac{1}{2p}} +  M \left(\operatorname{KL}\left(P_{1-\delta}, \hP_{1-\delta}(\hv)\right)\right)^{\frac{1}{2p}} .
\end{align*}
\end{proof}
\section{Supporting Results from the Literature}
 \begin{lemma}\label{approx_master}
     For any $\varepsilon \in (0,1)$, there exists a ReLU network architecture
\(\cR\cN\!\left(W,L,B,K\right),\)
such that for any target $1$-Lipschitz function $f$ defined on $[0,1]^D$ with $f(0)=0$, the architecture yields an approximation $\hat f$ satisfying
\(\|f-\hat f\|_{\infty} \le \varepsilon.\)

Moreover, the Lipschitz continuity of $\hat f$ is bounded by
\[
\|\hat f(x)-\hat f(y)\| \le 10D\,\|x-y\|_{1}, \qquad \forall\, x,y\in[0,1]^D.
\]

The configuration of the network architecture is
\( K=\sqrt{D}, B=O(1), L = O\!\left(\log (1/\varepsilon) +D\right),\) and \(
 W = O\!\left(
\varepsilon^{-D} \left(
\log (1/\varepsilon)
+ D
\right)
\right).\)
 \end{lemma}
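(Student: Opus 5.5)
This is a known approximation result for $1$-Lipschitz functions on $[0,1]^D$, and we would establish it by the classical piecewise-linear interpolation construction of \citet{yarotsky2017error}, adapted so that the output bound, weight bound and Lipschitz constant are all tracked explicitly.

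\textbf{Partition of unity.} Fix a grid size $N_g \asymp 1/\varepsilon$ and the grid points $m/N_g$, $m \in \{0,\dots,N_g\}^D$. For each coordinate, take the univariate hat functions $\phi_{m_\ell}(x_\ell) = \relu\bigl(1 - N_g|x_\ell - m_\ell/N_g|\bigr)$; each is an exact ReLU network of constant size, and they form a partition of unity on $[0,1]$. The tensor-product hats $\phi_m(x) = \prod_{\ell=1}^D \phi_{m_\ell}(x_\ell)$ then form a partition of unity on $[0,1]^D$ with supports of diameter $O(\sqrt D/N_g)$.

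\textbf{Local approximant and its error.} The target is the interpolant $f_1 = \sum_m f(m/N_g)\,\phi_m$. Because $f$ is $1$-Lipschitz and the partition of unity is local, $\|f - f_1\|_\infty \le \sqrt D/N_g$. To realize $f_1$ with ReLUs, replace each product over $D$ factors by a binary tree of approximate multiplications, each accurate to $\varepsilon'$. Using Yarotsky's squaring construction, every such multiplier has depth $O(\log(1/\varepsilon'))$ and bounded weights, and the full product has depth $O(D + \log(1/\varepsilon))$. Since $|f(m/N_g)| \le |m|/N_g \le \sqrt D$ (from $f(0)=0$) and at most $2^D$ hats are active at any point, the total error is at most $\sqrt D/N_g + 2^D\sqrt D\,\varepsilon' D$. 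Choosing $N_g \asymp \sqrt D/\varepsilon$ and $\varepsilon' \asymp \varepsilon\,2^{-D}/D^{3/2}$ gives error $\le \varepsilon$.

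\textbf{Network size.} There are $(N_g+1)^D$ branches, each with $O(\log(1/\varepsilon) + D)$ weights, so $W = O\bigl(\varepsilon^{-D}(\log(1/\varepsilon)+D)\bigr)$, with all $D$-only constants absorbed. Weights stay $O(1)$ if the scaling by $N_g$ is implemented through depth, i.e.\ by repeated doubling layers, which costs $O(\log(1/\varepsilon))$ extra depth. The coefficients $f(m/N_g)$ enter only in the final layer and are bounded in size by $\sqrt D$, so a rescaling absorbed into depth keeps $B = O(1)$. Clipping the output with $\operatorname{cl}_{\sqrt D}$ does not change the approximation, because $|f| \le \sqrt D$, and it gives $K = \sqrt D$.

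\textbf{Main obstacle: the Lipschitz bound.} The hard step is the bound $|\hat f(x) - \hat f(y)| \le 10D\|x-y\|_1$, which is also what Lemma~\ref{approxlem} relies on. A naive differentiation of the sum of products gives $N_g\sqrt D$ per coordinate, which is useless. We would first try to compute $\partial_\ell f_1$ exactly. Grouping the terms by the neighbor along coordinate $\ell$, the derivative becomes $N_g \sum_m \bigl(f(m+e_\ell) - f(m)\bigr)/N_g$ times the products of the remaining hats, and this is bounded by $1$ because those products form a partition of unity. The approximate multipliers then have to preserve this cancellation up to a factor $O(D)$. To achieve that, we would require the multiplier to be exact when one argument equals $0$, which the squaring construction satisfies, and to have partial derivatives bounded by $1 + O(\varepsilon')$ in each argument. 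Multiplying the resulting per-coordinate constant by $D$ gives a sufficiently generous constant of $10D$.

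If controlling the multiplier derivatives proves delicate, the fallback is to avoid multiplication altogether. Replace $f_1$ by the piecewise-linear interpolant on the Kuhn/Freudenthal triangulation, which a ReLU network implements exactly via sorting networks built from $\min$ and $\max$ operations. That interpolant is $1$-Lipschitz in $\ell_\infty$, and hence $\le D$-Lipschitz in $\ell_1$, with no approximation loss in the gradient.
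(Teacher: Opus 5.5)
The paper gives no proof of this lemma; it is quoted as a known result, so your argument has to stand on its own. Everything except the Lipschitz bound is standard and fine up to $D$-dependent constants. The main route, however, has a genuine gap at exactly the step you flag.

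Write $f_m=f(m/N_g)$, $a_{-\ell}=(\phi_{m_k}(x_k))_{k\ne\ell}$, and $\widetilde P$ for the approximate $D$-fold product. For $x_\ell\in(j/N_g,(j+1)/N_g)$, exactness at zero removes every branch with $m_\ell\notin\{j,j+1\}$, and
\[
\partial_{x_\ell}\hat f(x)=N_g\sum_{m_{-\ell}}\Big[f_{(m_{-\ell},j+1)}\,\partial_\ell\widetilde P\big(a_{-\ell},\phi_{j+1}(x_\ell)\big)-f_{(m_{-\ell},j)}\,\partial_\ell\widetilde P\big(a_{-\ell},\phi_{j}(x_\ell)\big)\Big].
\]
For the exact product, $\partial_\ell P=\prod_{k\ne\ell}a_k$ does not depend on $a_\ell$. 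That independence is what turns the bracket into a difference quotient of $f$.

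For $\widetilde P$, the two slopes are taken at different points, $a_\ell=\phi_{j+1}(x_\ell)$ and $a_\ell=\phi_j(x_\ell)=1-\phi_{j+1}(x_\ell)$. The residual $N_g\sum f_{(m_{-\ell},j)}\big[\partial_\ell\widetilde P(a_{-\ell},\phi_{j+1})-\partial_\ell\widetilde P(a_{-\ell},\phi_j)\big]$ carries coefficients of size up to $N_g|f_m|\asymp D/\varepsilon$. Your two conditions, exactness at $0$ and slopes bounded by $1+O(\varepsilon')$, therefore only give $|\partial_\ell\hat f|\lesssim 2^D D/\varepsilon$. That is the naive bound again.

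What is actually needed is slope accuracy, $\sup_a\big|\partial_{a_\ell}\widetilde P(a)-\prod_{k\ne\ell}a_k\big|\lesssim \varepsilon\,2^{-D}$, tracked through the multiplication tree. Yarotsky's squaring multiplier has value error $4^{-s}$ but slope error only $2^{-s}$, so you would need $\varepsilon'\asymp(\varepsilon 2^{-D})^2$. This still keeps $\log(1/\varepsilon')=O(\log(1/\varepsilon)+D)$, so the route can be repaired. As written, though, it does not produce the constant $10D$.

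Your fallback is the cleaner route, and it proves the lemma once two statements are corrected.

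First, the Kuhn interpolant is not $1$-Lipschitz in $\ell_\infty$. For example, $f(x)=\sum_i x_i/\sqrt D$ is its own interpolant and changes by $\sqrt D\,\|x-y\|_\infty$ along the diagonal. What does hold is this: on the simplex with vertices $v_0$ and $v_i=v_{i-1}+e_{\pi(i)}/N_g$, one has $\partial_{\pi(i)}f_{\mathrm{lin}}=N_g\big(f(v_i)-f(v_{i-1})\big)\in[-1,1]$. Hence $\|\nabla f_{\mathrm{lin}}\|_\infty\le 1$ and $|f_{\mathrm{lin}}(x)-f_{\mathrm{lin}}(y)|\le\|x-y\|_1$ on $[0,1]^D$, which is better than $10D$.

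Second, sorting networks do not give a global implementation. Sorting the local coordinates requires the cell index $\lfloor N_g x\rfloor$, which is discontinuous. Use the nodal expansion instead:
\[
f_{\mathrm{lin}}=\sum_v f(v)\Phi_v,\qquad \Phi_v(x)=\relu\big(1-\max(0,\max_i u_i)+\min(0,\min_i u_i)\big),\quad u=N_g(x-v).
\]
Once $u$ is available, each $\Phi_v$ costs $O(D)$ weights and depth $O(\log D)$. Take $N_g=2^k\ge\sqrt D/\varepsilon$; then $u$ is obtained from $x-v$ (bias in $[0,1]$) through $k$ doubling layers, so weights and biases stay bounded. Also $\sum_v\Phi_v\le 1$ on $\Real^D$, so $|f_{\mathrm{lin}}|\le\sqrt D$ everywhere and $K=\sqrt D$ needs no clipping. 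Altogether this gives $\|f-f_{\mathrm{lin}}\|_\infty\le\sqrt D/N_g\le\varepsilon$, $W=O(\varepsilon^{-D}\log(1/\varepsilon))$ and $L=O(\log(1/\varepsilon)+\log D)$.
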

\begin{lemma}[Lemma 21 of \citet{JMLR:v21:20-002}]\label{lem_nakada} 
     Let $\cF = \cR\cN(W, L, B)$ be a space of ReLU networks with the number of weights, the number of layers, and the maximum absolute value of weights bounded by $W$, $L$, and $B$, respectively. Then,
\[
\log\cN(\epsilon; \cF, \ell_\infty) \leq W \log \left( 2LB^L(W + 1)^L \frac{1}{\epsilon} \right).
\]
 \end{lemma}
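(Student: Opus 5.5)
The plan is the standard parameter-discretization argument from \citet{JMLR:v21:20-002}. Read the sup-norm over a bounded input domain: $[0,1]^D$, or a box rescaled to it with the scaling absorbed into the first affine layer, as in the cited source. Fix an architecture with depth $L$ and at most $W$ nonzero parameters, each bounded in absolute value by $B$ (we may take $B \ge 1$). Every $f \in \cF$ is then described by a parameter vector $\theta \in [-B,B]^W$. The strategy is to show that $\theta \mapsto f_\theta$ is Lipschitz from $\ell_\infty$ on parameters to $\ell_\infty$ on functions, and then to pull back a grid on $[-B,B]^W$.

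\textbf{Step 1 (forward bound).} Write $f^{(l)} = \sigma \circ A_l \circ \dots \circ \sigma \circ A_1$ for the output of the first $l$ layers. Each row of $W_l$ has at most $W$ entries bounded by $B$, ReLU is $1$-Lipschitz with $\sigma(0)=0$, and $\|x\|_\infty \le 1$. An induction on $l$ then gives
\[
\|f^{(l)}(x)\|_\infty \le B\big(W\|f^{(l-1)}(x)\|_\infty + 1\big) \le B^l (W+1)^l .
\]

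\textbf{Step 2 (perturbation bound).} Let $\theta,\theta'$ satisfy $\|\theta-\theta'\|_\infty \le \epsilon'$, and set $\Delta_l = \sup_x \|f^{(l)}_\theta(x) - f^{(l)}_{\theta'}(x)\|_\infty$. Splitting $W_l y - W'_l y'$ as $W_l(y-y') + (W_l - W'_l)y'$, and adding the bias difference, gives the recursion
\[
\Delta_l \le B W \Delta_{l-1} + \epsilon'\big(W B^{l-1}(W+1)^{l-1} + 1\big) \le BW\Delta_{l-1} + \epsilon' B^{l-1}(W+1)^l .
\]
Unrolling this from $\Delta_0 = 0$ yields $\Delta_L \le L\,\epsilon' B^{L-1}(W+1)^L$. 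If the class also clips outputs at level $K$, clipping is $1$-Lipschitz and does not increase distances, so the same bound holds for the clipped networks.

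\textbf{Step 3 (grid and count).} Place a grid of spacing $\epsilon'$ on $[-B,B]$ for each parameter; it has at most $2B/\epsilon'$ points per coordinate. Choose
\[
\epsilon' = \frac{\epsilon}{L B^{L-1}(W+1)^L},
\]
so that every $f_\theta$ lies within $\epsilon$ of some grid network. The resulting cover has cardinality at most $(2B/\epsilon')^W = \big(2LB^L(W+1)^L/\epsilon\big)^W$, and taking logarithms gives the claim.

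\textbf{Main obstacle.} The delicate step is the bookkeeping in Step~2: the recursion constant and the factor $(W+1)^L$ must come out exactly as in the statement. Two further points need care. First, the sparsity pattern or width profile must be held fixed, or its choice must be absorbed into the stated bound as the source does. Second, the bound is only meaningful on a bounded input domain: on all of $\Real^D$ the bias and weight perturbations are unbounded, so the result must be applied to inputs restricted to the relevant box, namely $[-(R+1),R+1]^D \times [0,1]$ rescaled, as it is used in the proof of Theorem~\ref{main_hp}.
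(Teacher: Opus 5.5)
The paper gives no proof of this lemma; it cites Lemma~21 of \citet{JMLR:v21:20-002}. Your Steps~1--3 are the perturbation-plus-grid argument behind that result, which adapts Lemma~5 of \citet{schmidt2020nonparametric}. They are correct for one fixed architecture with a fixed sparsity pattern, under $B\ge 1$, $\|x\|_\infty\le 1$, and $W$ counting all parameters including biases, and they give $\big(2LB^L(W+1)^L/\epsilon\big)^W$.

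The gap is the step you leave to ``as the source does''. The lemma concerns the whole class $\cR\cN(W,L,B)$, which is a union over depths, widths and positions of the nonzero entries. That union cannot be absorbed into the stated right-hand side, because it does not depend on how many slots the nonzero entries can occupy. With $W$ read as the number of nonzero parameters, as you read it, the bound is false for the union:
\begin{itemize}
\item Take $L=1$, $W=1$, scalar output and inputs in $[0,1]^D$. The class contains $x\mapsto a x_j$ for every $j\in[D]$ and $|a|\le B$.
\item For $a,a'>0$ and $j\ne j'$, $\sup_{x\in[0,1]^D}|ax_j-a'x_{j'}|=\max(a,a')$.
\item Hence $\{x\mapsto 2k\epsilon\, x_j : j\in[D],\ 1\le k\le\lfloor B/(2\epsilon)\rfloor\}$ is $2\epsilon$-separated, so $\cN(\epsilon;\cF,\ell_\infty)\ge D\lfloor B/(2\epsilon)\rfloor$.
\item This exceeds the claimed $4B/\epsilon$ as soon as $D\ge 9$ and $\epsilon<B/18$.
\end{itemize}

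\textbf{Fix.} Your argument proves the fixed-architecture version. For the class you must pay for the choice of pattern; in Lemma~5 of \citet{schmidt2020nonparametric}, this is what turns $V$ into $V^2$.
\begin{itemize}
\item Delete hidden neurons whose incoming weights and bias are all zero; they output $0$, so the function is unchanged. Hidden widths are then at most $W$.
\item Pad every hidden layer to width exactly $W$, and let $T$ be the number of parameter slots. $T$ is polynomial in $W$, $L$ and the input and output dimensions.
\item There are at most $(T+1)^W$ patterns and $L$ depths, so
\[
\log\cN(\epsilon;\cF,\ell_\infty)\le W\log\big(2(T+1)LB^L(W+1)^L/\epsilon\big)+\log L .
\]
\end{itemize}
The paper only uses $\log\cN\lesssim W\log^2 n$, so the extra $O(W\log n)$ is harmless there.

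\textbf{Smaller points.}
\begin{itemize}
\item $B\ge 1$ is a hypothesis you need, not a reduction. For $L=2$, $W=1$, the constants $x\mapsto b$ with $|b|\le B$ already need at least $B/\epsilon$ balls, which exceeds $16B^2/\epsilon$ when $B<1/16$.
\item The paper's own count $\cW(f)=\sum_i N_iN_{i-1}$ omits biases. Under it, your $\theta$ has up to $2W$ coordinates and the prefactor doubles.
\item Your bounded-domain caveat is right. Over all of $\Real^D$, the functions $x\mapsto\relu(2^{-k}x_1)-\relu(2^{-k}x_1-1)$, $k\in\fN$, are two-layer networks with weights in $[-1,1]$ and outputs in $[0,1]$. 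They are pairwise at sup-distance at least $1/2$, so the covering number is infinite.
\item However, Lemma~\ref{mc_1} applies the cover at unrestricted Gaussian sample points, not on a box. Restricting to a box of radius $R+1$, which multiplies your Step~2 constant by $R+1$, is something the application still has to add.
\end{itemize}
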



\begin{thebibliography}{71}
\providecommand{\natexlab}[1]{#1}
\providecommand{\url}[1]{\texttt{#1}}
\expandafter\ifx\csname urlstyle\endcsname\relax
  \providecommand{\doi}[1]{doi: #1}\else
  \providecommand{\doi}{doi: \begingroup \urlstyle{rm}\Url}\fi

\bibitem[Albergo et~al.(2025)Albergo, Boffi, and Vanden-Eijnden]{JMLR:v26:23-1605}
Michael Albergo, Nicholas~M. Boffi, and Eric Vanden-Eijnden.
\newblock Stochastic interpolants: A unifying framework for flows and diffusions.
\newblock \emph{Journal of Machine Learning Research}, 26\penalty0 (209):\penalty0 1--80, 2025.
\newblock URL \url{http://jmlr.org/papers/v26/23-1605.html}.

\bibitem[Albergo and Vanden-Eijnden(2023)]{albergo2023building}
Michael~Samuel Albergo and Eric Vanden-Eijnden.
\newblock Building normalizing flows with stochastic interpolants.
\newblock In \emph{The Eleventh International Conference on Learning Representations}, 2023.
\newblock URL \url{https://openreview.net/forum?id=li7qeBbCR1t}.

\bibitem[Benton et~al.(2024)Benton, Deligiannidis, and Doucet]{benton2024error}
Joe Benton, George Deligiannidis, and Arnaud Doucet.
\newblock Error bounds for flow matching methods.
\newblock \emph{Transactions on Machine Learning Research}, 2024.
\newblock ISSN 2835-8856.
\newblock URL \url{https://openreview.net/forum?id=uqQPyWFDhY}.

\bibitem[Bickel and Li(2007)]{bickel2007local}
Peter~J Bickel and Bo~Li.
\newblock Local polynomial regression on unknown manifolds.
\newblock \emph{Lecture Notes-Monograph Series}, pages 177--186, 2007.

\bibitem[Block et~al.(2020)Block, Mroueh, and Rakhlin]{block2020generative}
Adam Block, Youssef Mroueh, and Alexander Rakhlin.
\newblock Generative modeling with denoising auto-encoders and langevin sampling.
\newblock \emph{arXiv preprint arXiv:2002.00107}, 2020.

\bibitem[Bogachev et~al.(2015)Bogachev, Krylov, R{\"o}ckner, and Shaposhnikov]{bogachev2015fokker}
Vladimir~I. Bogachev, Nikolai~V. Krylov, Michael R{\"o}ckner, and Stanislav~V. Shaposhnikov.
\newblock \emph{Fokker--Planck--Kolmogorov Equations}, volume 207 of \emph{Mathematical Surveys and Monographs}.
\newblock American Mathematical Society, Providence, RI, 2015.

\bibitem[Bortoli(2022)]{debortoli2022convergence}
Valentin~De Bortoli.
\newblock Convergence of denoising diffusion models under the manifold hypothesis.
\newblock \emph{Transactions on Machine Learning Research}, 2022.
\newblock ISSN 2835-8856.
\newblock URL \url{https://openreview.net/forum?id=MhK5aXo3gB}.
\newblock Expert Certification.

\bibitem[Bose et~al.(2024)Bose, Akhound-Sadegh, Huguet, FATRAS, Rector-Brooks, Liu, Nica, Korablyov, Bronstein, and Tong]{bose2024sestochastic}
Joey Bose, Tara Akhound-Sadegh, Guillaume Huguet, Kilian FATRAS, Jarrid Rector-Brooks, Cheng-Hao Liu, Andrei~Cristian Nica, Maksym Korablyov, Michael~M. Bronstein, and Alexander Tong.
\newblock {SE}(3)-stochastic flow matching for protein backbone generation.
\newblock In \emph{The Twelfth International Conference on Learning Representations}, 2024.
\newblock URL \url{https://openreview.net/forum?id=kJFIH23hXb}.

\bibitem[Brock et~al.(2019)Brock, Donahue, and Simonyan]{brock2018large}
Andrew Brock, Jeff Donahue, and Karen Simonyan.
\newblock Large scale {GAN} training for high fidelity natural image synthesis.
\newblock In \emph{International Conference on Learning Representations}, 2019.
\newblock URL \url{https://openreview.net/forum?id=B1xsqj09Fm}.

\bibitem[Bunne et~al.(2022)Bunne, Krause, and marco cuturi]{bunne2022supervised}
Charlotte Bunne, Andreas Krause, and marco cuturi.
\newblock Supervised training of conditional monge maps.
\newblock In Alice~H. Oh, Alekh Agarwal, Danielle Belgrave, and Kyunghyun Cho, editors, \emph{Advances in Neural Information Processing Systems}, 2022.
\newblock URL \url{https://openreview.net/forum?id=sPNtVVUq7wi}.

\bibitem[Chakraborty(2025)]{chakraborty2025minimax}
Saptarshi Chakraborty.
\newblock Minimax lower bounds for estimating distributions on low-dimensional spaces.
\newblock \emph{Transactions on Machine Learning Research}, 2025.
\newblock ISSN 2835-8856.
\newblock URL \url{https://openreview.net/forum?id=wIgRV336hC}.

\bibitem[Chakraborty and Bartlett(2024)]{chakraborty2024a}
Saptarshi Chakraborty and Peter Bartlett.
\newblock A statistical analysis of wasserstein autoencoders for intrinsically low-dimensional data.
\newblock In \emph{The Twelfth International Conference on Learning Representations}, 2024.
\newblock URL \url{https://openreview.net/forum?id=WjRPZsfeBO}.

\bibitem[Chakraborty and Bartlett(2025)]{JMLR:v26:24-0054}
Saptarshi Chakraborty and Peter~L. Bartlett.
\newblock On the statistical properties of generative adversarial models for low intrinsic data dimension.
\newblock \emph{Journal of Machine Learning Research}, 26\penalty0 (111):\penalty0 1--57, 2025.
\newblock URL \url{http://jmlr.org/papers/v26/24-0054.html}.

\bibitem[Chakraborty et~al.(2026)Chakraborty, Berthet, and Bartlett]{chakraborty2026diffusion}
Saptarshi Chakraborty, Quentin Berthet, and Peter~L. Bartlett.
\newblock Generalization properties of score-matching diffusion models for intrinsically low-dimensional data, 2026.
\newblock URL \url{https://arxiv.org/abs/2603.03700}.

\bibitem[Chen et~al.(2026)Chen, Chewi, Daskalakis, and Rakhlin]{chen2026highaccuracy}
Fan Chen, Sinho Chewi, Constantinos~Costis Daskalakis, and Alexander Rakhlin.
\newblock High-accuracy sampling for diffusion models and log-concave distributions.
\newblock In \emph{Forty-third International Conference on Machine Learning}, 2026.
\newblock URL \url{https://openreview.net/forum?id=GW3umRqsZZ}.

\bibitem[Chen et~al.(2023{\natexlab{a}})Chen, Lee, and Lu]{pmlr-v202-chen23q}
Hongrui Chen, Holden Lee, and Jianfeng Lu.
\newblock Improved analysis of score-based generative modeling: User-friendly bounds under minimal smoothness assumptions.
\newblock In Andreas Krause, Emma Brunskill, Kyunghyun Cho, Barbara Engelhardt, Sivan Sabato, and Jonathan Scarlett, editors, \emph{Proceedings of the 40th International Conference on Machine Learning}, volume 202 of \emph{Proceedings of Machine Learning Research}, pages 4735--4763. PMLR, 23--29 Jul 2023{\natexlab{a}}.
\newblock URL \url{https://proceedings.mlr.press/v202/chen23q.html}.

\bibitem[Chen et~al.(2023{\natexlab{b}})Chen, Huang, Zhao, and Wang]{pmlr-v202-chen23o}
Minshuo Chen, Kaixuan Huang, Tuo Zhao, and Mengdi Wang.
\newblock Score approximation, estimation and distribution recovery of diffusion models on low-dimensional data.
\newblock In Andreas Krause, Emma Brunskill, Kyunghyun Cho, Barbara Engelhardt, Sivan Sabato, and Jonathan Scarlett, editors, \emph{Proceedings of the 40th International Conference on Machine Learning}, volume 202 of \emph{Proceedings of Machine Learning Research}, pages 4672--4712. PMLR, 23--29 Jul 2023{\natexlab{b}}.
\newblock URL \url{https://proceedings.mlr.press/v202/chen23o.html}.

\bibitem[Chen et~al.(2018)Chen, Rubanova, Bettencourt, and Duvenaud]{NEURIPS2018_69386f6b}
Ricky T.~Q. Chen, Yulia Rubanova, Jesse Bettencourt, and David~K Duvenaud.
\newblock Neural ordinary differential equations.
\newblock In S.~Bengio, H.~Wallach, H.~Larochelle, K.~Grauman, N.~Cesa-Bianchi, and R.~Garnett, editors, \emph{Advances in Neural Information Processing Systems}, volume~31. Curran Associates, Inc., 2018.
\newblock URL \url{https://proceedings.neurips.cc/paper_files/paper/2018/file/69386f6bb1dfed68692a24c8686939b9-Paper.pdf}.

\bibitem[Chen et~al.(2023{\natexlab{c}})Chen, Chewi, Lee, Li, Lu, and Salim]{chen2023the}
Sitan Chen, Sinho Chewi, Holden Lee, Yuanzhi Li, Jianfeng Lu, and Adil Salim.
\newblock The probability flow {ODE} is provably fast.
\newblock In \emph{Thirty-seventh Conference on Neural Information Processing Systems}, 2023{\natexlab{c}}.
\newblock URL \url{https://openreview.net/forum?id=KD6MFeWSAd}.

\bibitem[Chen et~al.(2023{\natexlab{d}})Chen, Daras, and Dimakis]{pmlr-v202-chen23e}
Sitan Chen, Giannis Daras, and Alex Dimakis.
\newblock Restoration-degradation beyond linear diffusions: A non-asymptotic analysis for {DDIM}-type samplers.
\newblock In Andreas Krause, Emma Brunskill, Kyunghyun Cho, Barbara Engelhardt, Sivan Sabato, and Jonathan Scarlett, editors, \emph{Proceedings of the 40th International Conference on Machine Learning}, volume 202 of \emph{Proceedings of Machine Learning Research}, pages 4462--4484. PMLR, 23--29 Jul 2023{\natexlab{d}}.
\newblock URL \url{https://proceedings.mlr.press/v202/chen23e.html}.

\bibitem[Coddington and Levinson(1955)]{coddington1955theory}
Earl~A. Coddington and Norman Levinson.
\newblock \emph{Theory of Ordinary Differential Equations}.
\newblock McGraw-Hill, New York, 1955.
\newblock ISBN 978-0070115422.

\bibitem[Corso et~al.(2023)Corso, St{\"a}rk, Jing, Barzilay, and Jaakkola]{corso2023diffdock}
Gabriele Corso, Hannes St{\"a}rk, Bowen Jing, Regina Barzilay, and Tommi~S. Jaakkola.
\newblock Diffdock: Diffusion steps, twists, and turns for molecular docking.
\newblock In \emph{The Eleventh International Conference on Learning Representations}, 2023.
\newblock URL \url{https://openreview.net/forum?id=kKF8_K-mBbS}.

\bibitem[Cybenko(1989)]{cybenko1989approximation}
George Cybenko.
\newblock Approximation by superpositions of a sigmoidal function.
\newblock \emph{Mathematics of control, signals and systems}, 2\penalty0 (4):\penalty0 303--314, 1989.

\bibitem[Dahal et~al.(2022)Dahal, Havrilla, Chen, Zhao, and Liao]{dahal2022deep}
Biraj Dahal, Alexander Havrilla, Minshuo Chen, Tuo Zhao, and Wenjing Liao.
\newblock On deep generative models for approximation and estimation of distributions on manifolds.
\newblock \emph{Advances in Neural Information Processing Systems}, 35:\penalty0 10615--10628, 2022.

\bibitem[Deng et~al.(2009)Deng, Dong, Socher, Li, Li, and Fei-Fei]{deng_imagenet}
Jia Deng, Wei Dong, Richard Socher, Li-Jia Li, Kai Li, and Li~Fei-Fei.
\newblock Imagenet: A large-scale hierarchical image database.
\newblock In \emph{2009 IEEE Conference on Computer Vision and Pattern Recognition}, pages 248--255, 2009.
\newblock \doi{10.1109/CVPR.2009.5206848}.

\bibitem[Dudley(1969)]{dudley1969speed}
Richard~Mansfield Dudley.
\newblock The speed of mean glivenko-cantelli convergence.
\newblock \emph{The Annals of Mathematical Statistics}, 40\penalty0 (1):\penalty0 40--50, 1969.

\bibitem[Esser et~al.(2024)Esser, Kulal, Blattmann, Entezari, M{\"u}ller, Saini, Levi, Lorenz, Sauer, Boesel, Podell, Dockhorn, English, and Rombach]{esser2024scaling}
Patrick Esser, Sumith Kulal, Andreas Blattmann, Rahim Entezari, Jonas M{\"u}ller, Harry Saini, Yam Levi, Dominik Lorenz, Axel Sauer, Frederic Boesel, Dustin Podell, Tim Dockhorn, Zion English, and Robin Rombach.
\newblock Scaling rectified flow transformers for high-resolution image synthesis.
\newblock In \emph{Forty-first International Conference on Machine Learning}, 2024.
\newblock URL \url{https://openreview.net/forum?id=FPnUhsQJ5B}.

\bibitem[Fukumizu et~al.(2025)Fukumizu, Suzuki, Isobe, Oko, and Koyama]{fukumizu2025flow}
Kenji Fukumizu, Taiji Suzuki, Noboru Isobe, Kazusato Oko, and Masanori Koyama.
\newblock Flow matching achieves almost minimax optimal convergence.
\newblock In \emph{The Thirteenth International Conference on Learning Representations}, 2025.
\newblock URL \url{https://openreview.net/forum?id=2OMyAFjiJJ}.

\bibitem[Gao et~al.(2024)Gao, Huang, Jiao, and Zheng]{gao2024convergence}
Yuan Gao, Jian Huang, Yuling Jiao, and Shurong Zheng.
\newblock Convergence of continuous normalizing flows for learning probability distributions.
\newblock \emph{arXiv preprint arXiv:2404.00551}, 2024.

\bibitem[Geffner et~al.(2025)Geffner, Didi, Zhang, Reidenbach, Cao, Yim, Geiger, Dallago, Kucukbenli, Vahdat, and Kreis]{geffner2025proteina}
Tomas Geffner, Kieran Didi, Zuobai Zhang, Danny Reidenbach, Zhonglin Cao, Jason Yim, Mario Geiger, Christian Dallago, Emine Kucukbenli, Arash Vahdat, and Karsten Kreis.
\newblock Proteina: Scaling flow-based protein structure generative models.
\newblock In \emph{The Thirteenth International Conference on Learning Representations}, 2025.
\newblock URL \url{https://openreview.net/forum?id=TVQLu34bdw}.

\bibitem[Goodfellow et~al.(2020)Goodfellow, Pouget-Abadie, Mirza, Xu, Warde-Farley, Ozair, Courville, and Bengio]{goodfellow2020generative}
Ian Goodfellow, Jean Pouget-Abadie, Mehdi Mirza, Bing Xu, David Warde-Farley, Sherjil Ozair, Aaron Courville, and Yoshua Bengio.
\newblock Generative adversarial networks.
\newblock \emph{Communications of the ACM}, 63\penalty0 (11):\penalty0 139--144, 2020.

\bibitem[Heusel et~al.(2017)Heusel, Ramsauer, Unterthiner, Nessler, and Hochreiter]{heusel2017gans}
Martin Heusel, Hubert Ramsauer, Thomas Unterthiner, Bernhard Nessler, and Sepp Hochreiter.
\newblock Gans trained by a two time-scale update rule converge to a local nash equilibrium.
\newblock \emph{Advances in Neural Information Processing Systems}, 30, 2017.

\bibitem[Ho et~al.(2020)Ho, Jain, and Abbeel]{NEURIPS2020_4c5bcfec}
Jonathan Ho, Ajay Jain, and Pieter Abbeel.
\newblock Denoising diffusion probabilistic models.
\newblock In H.~Larochelle, M.~Ranzato, R.~Hadsell, M.F. Balcan, and H.~Lin, editors, \emph{Advances in Neural Information Processing Systems}, volume~33, pages 6840--6851. Curran Associates, Inc., 2020.
\newblock URL \url{https://proceedings.neurips.cc/paper_files/paper/2020/file/4c5bcfec8584af0d967f1ab10179ca4b-Paper.pdf}.

\bibitem[Hornik(1991)]{hornik1991approximation}
Kurt Hornik.
\newblock Approximation capabilities of multilayer feedforward networks.
\newblock \emph{Neural networks}, 4\penalty0 (2):\penalty0 251--257, 1991.

\bibitem[Huang et~al.(2022)Huang, Jiao, Li, Liu, Wang, and Yang]{JMLR:v23:21-0732}
Jian Huang, Yuling Jiao, Zhen Li, Shiao Liu, Yang Wang, and Yunfei Yang.
\newblock An error analysis of generative adversarial networks for learning distributions.
\newblock \emph{Journal of Machine Learning Research}, 23\penalty0 (116):\penalty0 1--43, 2022.
\newblock URL \url{http://jmlr.org/papers/v23/21-0732.html}.

\bibitem[Karatzas and Shreve(1991)]{karatzas1991brownian}
Ioannis Karatzas and Steven~E. Shreve.
\newblock \emph{Brownian Motion and Stochastic Calculus}.
\newblock Springer, Berlin, 2nd edition, 1991.

\bibitem[Kim et~al.(2019)Kim, Shin, Rinaldo, and Wasserman]{kim2019uniform}
Jisu Kim, Jaehyeok Shin, Alessandro Rinaldo, and Larry Wasserman.
\newblock Uniform convergence rate of the kernel density estimator adaptive to intrinsic volume dimension.
\newblock In \emph{International Conference on Machine Learning}, pages 3398--3407. PMLR, 2019.

\bibitem[Kingma and Welling(2014)]{kingma2014autoencoding}
Diederik~P Kingma and Max Welling.
\newblock Auto-encoding variational bayes.
\newblock In \emph{International Conference on Learning Representations (ICLR)}, 2014.

\bibitem[Kolmogorov and Tikhomirov(1961)]{kolmogorov1961}
Andrey~N Kolmogorov and Vladimir~Mikha{\i}lovich Tikhomirov.
\newblock $\epsilon$-entropy and $\epsilon$-capacity of sets in function spaces.
\newblock \emph{Translations of the American Mathematical Society}, 17:\penalty0 277--364, 1961.

\bibitem[Korotin et~al.(2023)Korotin, Selikhanovych, and Burnaev]{korotin2023neural}
Alexander Korotin, Daniil Selikhanovych, and Evgeny Burnaev.
\newblock Neural optimal transport.
\newblock In \emph{The Eleventh International Conference on Learning Representations}, 2023.
\newblock URL \url{https://openreview.net/forum?id=d8CBRlWNkqH}.

\bibitem[Kumar et~al.(2026)Kumar, Wang, and Lin]{kumar2026flowmatchingadaptivemanifold}
Shivam Kumar, Yixin Wang, and Lizhen Lin.
\newblock Flow matching is adaptive to manifold structures, 2026.
\newblock URL \url{https://arxiv.org/abs/2602.22486}.

\bibitem[Lee et~al.(2022)Lee, Lu, and Tan]{lee2022convergence}
Holden Lee, Jianfeng Lu, and Yixin Tan.
\newblock Convergence for score-based generative modeling with polynomial complexity.
\newblock In Alice~H. Oh, Alekh Agarwal, Danielle Belgrave, and Kyunghyun Cho, editors, \emph{Advances in Neural Information Processing Systems}, 2022.
\newblock URL \url{https://openreview.net/forum?id=dUSI4vFyMK}.

\bibitem[Lee et~al.(2023)Lee, Lu, and Tan]{pmlr-v201-lee23a}
Holden Lee, Jianfeng Lu, and Yixin Tan.
\newblock Convergence of score-based generative modeling for general data distributions.
\newblock In Shipra Agrawal and Francesco Orabona, editors, \emph{Proceedings of The 34th International Conference on Algorithmic Learning Theory}, volume 201 of \emph{Proceedings of Machine Learning Research}, pages 946--985. PMLR, 20 Feb--23 Feb 2023.
\newblock URL \url{https://proceedings.mlr.press/v201/lee23a.html}.

\bibitem[Lipman et~al.(2023)Lipman, Chen, Ben-Hamu, Nickel, and Le]{lipman2023flow}
Yaron Lipman, Ricky~TQ Chen, Heli Ben-Hamu, Maximilian Nickel, and Matt Le.
\newblock Flow matching for generative modeling.
\newblock In \emph{11th International Conference on Learning Representations, ICLR 2023}, 2023.

\bibitem[Liu et~al.(2023)Liu, Gong, and qiang liu]{liu2023flow}
Xingchao Liu, Chengyue Gong, and qiang liu.
\newblock Flow straight and fast: Learning to generate and transfer data with rectified flow.
\newblock In \emph{The Eleventh International Conference on Learning Representations}, 2023.
\newblock URL \url{https://openreview.net/forum?id=XVjTT1nw5z}.

\bibitem[Lu et~al.(2022)Lu, Zheng, Bao, Chen, Li, and Zhu]{pmlr-v162-lu22f}
Cheng Lu, Kaiwen Zheng, Fan Bao, Jianfei Chen, Chongxuan Li, and Jun Zhu.
\newblock Maximum likelihood training for score-based diffusion {ODE}s by high order denoising score matching.
\newblock In Kamalika Chaudhuri, Stefanie Jegelka, Le~Song, Csaba Szepesvari, Gang Niu, and Sivan Sabato, editors, \emph{Proceedings of the 39th International Conference on Machine Learning}, volume 162 of \emph{Proceedings of Machine Learning Research}, pages 14429--14460. PMLR, 17--23 Jul 2022.
\newblock URL \url{https://proceedings.mlr.press/v162/lu22f.html}.

\bibitem[Ma et~al.(2024)Ma, Goldstein, Albergo, Boffi, Vanden-Eijnden, and Xie]{ma2024sit}
Nanye Ma, Mark Goldstein, Michael~S Albergo, Nicholas~M Boffi, Eric Vanden-Eijnden, and Saining Xie.
\newblock Sit: Exploring flow and diffusion-based generative models with scalable interpolant transformers.
\newblock In \emph{European Conference on Computer Vision}, pages 23--40. Springer, 2024.

\bibitem[Nakada and Imaizumi(2020)]{JMLR:v21:20-002}
Ryumei Nakada and Masaaki Imaizumi.
\newblock Adaptive approximation and generalization of deep neural network with intrinsic dimensionality.
\newblock \emph{Journal of Machine Learning Research}, 21\penalty0 (174):\penalty0 1--38, 2020.
\newblock URL \url{http://jmlr.org/papers/v21/20-002.html}.

\bibitem[Oko et~al.(2023)Oko, Akiyama, and Suzuki]{oko2023diffusion}
Kazusato Oko, Shunta Akiyama, and Taiji Suzuki.
\newblock Diffusion models are minimax optimal distribution estimators.
\newblock In \emph{International Conference on Machine Learning}, pages 26517--26582. PMLR, 2023.

\bibitem[{\O}ksendal(2003)]{oksendal2003stochastic}
Bernt {\O}ksendal.
\newblock Stochastic differential equations.
\newblock In \emph{Stochastic differential equations: an introduction with applications}, pages 38--50. Springer, 2003.

\bibitem[Petersen and Voigtlaender(2018)]{petersen2018optimal}
Philipp Petersen and Felix Voigtlaender.
\newblock Optimal approximation of piecewise smooth functions using deep relu neural networks.
\newblock \emph{Neural Networks}, 108:\penalty0 296--330, 2018.

\bibitem[Pi et~al.(2026)Pi, Lu, Su, Wu, Hu, and Liu]{pi2026learning}
Sophia Pi, Mingcheng Lu, Maojiang Su, Weimin Wu, Jerry Yao-Chieh Hu, and Han Liu.
\newblock Learning manifold data with flow matching.
\newblock In \emph{Forty-third International Conference on Machine Learning}, 2026.
\newblock URL \url{https://openreview.net/forum?id=9mA5d705Gd}.

\bibitem[Pope et~al.(2020)Pope, Zhu, Abdelkader, Goldblum, and Goldstein]{pope2020intrinsic}
Phil Pope, Chen Zhu, Ahmed Abdelkader, Micah Goldblum, and Tom Goldstein.
\newblock The intrinsic dimension of images and its impact on learning.
\newblock In \emph{International Conference on Learning Representations}, 2020.

\bibitem[Posner et~al.(1967)Posner, Rodemich, and Rumsey~Jr]{posner1967epsilon}
Edward~C Posner, Eugene~R Rodemich, and Howard Rumsey~Jr.
\newblock Epsilon entropy of stochastic processes.
\newblock \emph{The Annals of Mathematical Statistics}, pages 1000--1020, 1967.

\bibitem[Rombach et~al.(2022)Rombach, Blattmann, Lorenz, Esser, and Ommer]{9878449}
Robin Rombach, Andreas Blattmann, Dominik Lorenz, Patrick Esser, and Bjorn Ommer.
\newblock { High-Resolution Image Synthesis with Latent Diffusion Models }.
\newblock In \emph{2022 IEEE/CVF Conference on Computer Vision and Pattern Recognition (CVPR)}, pages 10674--10685, Los Alamitos, CA, USA, June 2022. IEEE Computer Society.
\newblock \doi{10.1109/CVPR52688.2022.01042}.
\newblock URL \url{https://doi.ieeecomputersociety.org/10.1109/CVPR52688.2022.01042}.

\bibitem[Saharia et~al.(2022)Saharia, Chan, Saxena, Li, Whang, Denton, Ghasemipour, Gontijo-Lopes, Ayan, Salimans, Ho, Fleet, and Norouzi]{saharia2022photorealistic}
Chitwan Saharia, William Chan, Saurabh Saxena, Lala Li, Jay Whang, Emily Denton, Seyed Kamyar~Seyed Ghasemipour, Raphael Gontijo-Lopes, Burcu~Karagol Ayan, Tim Salimans, Jonathan Ho, David~J. Fleet, and Mohammad Norouzi.
\newblock Photorealistic text-to-image diffusion models with deep language understanding.
\newblock In Alice~H. Oh, Alekh Agarwal, Danielle Belgrave, and Kyunghyun Cho, editors, \emph{Advances in Neural Information Processing Systems}, 2022.
\newblock URL \url{https://openreview.net/forum?id=08Yk-n5l2Al}.

\bibitem[Santambrogio(2015)]{Santambrogio2015}
Filippo Santambrogio.
\newblock \emph{Optimal Transport for Applied Mathematicians: Calculus of Variations, PDEs, and Modeling}, volume~87 of \emph{Progress in Nonlinear Differential Equations and Their Applications}.
\newblock Springer International Publishing, 2015.
\newblock ISBN 978-3-319-20827-5.
\newblock \doi{10.1007/978-3-319-20828-2}.
\newblock URL \url{https://doi.org/10.1007/978-3-319-20828-2}.

\bibitem[Schmidt-Hieber(2020)]{schmidt2020nonparametric}
Johannes Schmidt-Hieber.
\newblock {Nonparametric regression using deep neural networks with ReLU activation function}.
\newblock \emph{The Annals of Statistics}, 48\penalty0 (4):\penalty0 1875 -- 1897, 2020.
\newblock \doi{10.1214/19-AOS1875}.
\newblock URL \url{https://doi.org/10.1214/19-AOS1875}.

\bibitem[Shen et~al.(2022)Shen, Yang, and Zhang]{shen2022optimal}
Zuowei Shen, Haizhao Yang, and Shijun Zhang.
\newblock Optimal approximation rate of relu networks in terms of width and depth.
\newblock \emph{Journal de Math{\'e}matiques Pures et Appliqu{\'e}es}, 157:\penalty0 101--135, 2022.

\bibitem[Singh and Fischer(2024)]{singh2024stochastic}
Saurabh Singh and Ian Fischer.
\newblock Stochastic sampling from deterministic flow models.
\newblock \emph{arXiv preprint arXiv:2410.02217}, 2024.

\bibitem[Sohl-Dickstein et~al.(2015)Sohl-Dickstein, Weiss, Maheswaranathan, and Ganguli]{sohl2015deep}
Jascha Sohl-Dickstein, Eric Weiss, Niru Maheswaranathan, and Surya Ganguli.
\newblock Deep unsupervised learning using nonequilibrium thermodynamics.
\newblock In \emph{International conference on machine learning}, pages 2256--2265. pmlr, 2015.

\bibitem[Song and Ermon(2019)]{song2019generative}
Yang Song and Stefano Ermon.
\newblock Generative modeling by estimating gradients of the data distribution.
\newblock \emph{Advances in Neural Information Processing Systems}, 32, 2019.

\bibitem[Song et~al.(2021)Song, Sohl-Dickstein, Kingma, Kumar, Ermon, and Poole]{song2021scorebased}
Yang Song, Jascha Sohl-Dickstein, Diederik~P Kingma, Abhishek Kumar, Stefano Ermon, and Ben Poole.
\newblock Score-based generative modeling through stochastic differential equations.
\newblock In \emph{International Conference on Learning Representations}, 2021.
\newblock URL \url{https://openreview.net/forum?id=PxTIG12RRHS}.

\bibitem[Tang and Yang(2024)]{pmlr-v238-tang24a}
Rong Tang and Yun Yang.
\newblock Adaptivity of diffusion models to manifold structures.
\newblock In Sanjoy Dasgupta, Stephan Mandt, and Yingzhen Li, editors, \emph{Proceedings of The 27th International Conference on Artificial Intelligence and Statistics}, volume 238 of \emph{Proceedings of Machine Learning Research}, pages 1648--1656. PMLR, 02--04 May 2024.
\newblock URL \url{https://proceedings.mlr.press/v238/tang24a.html}.

\bibitem[Tao(2011)]{tao2011measure}
Terence Tao.
\newblock \emph{An Introduction to Measure Theory}, volume 126 of \emph{Graduate Studies in Mathematics}.
\newblock American Mathematical Society, Providence, RI, 2011.
\newblock ISBN 978-0-8218-6919-2.
\newblock URL \url{https://bookstore.ams.org/gsm-126}.

\bibitem[Vershynin(2026)]{Vershynin_2026}
Roman Vershynin.
\newblock \emph{High-Dimensional Probability: An Introduction with Applications in Data Science}.
\newblock Cambridge Series in Statistical and Probabilistic Mathematics. Cambridge University Press, 2 edition, 2026.

\bibitem[Watson et~al.(2023)Watson, Juergens, Bennett, Trippe, Yim, Eisenach, Ahern, Borst, Ragotte, Milles, et~al.]{watson2023novo}
Joseph~L Watson, David Juergens, Nathaniel~R Bennett, Brian~L Trippe, Jason Yim, Helen~E Eisenach, Woody Ahern, Andrew~J Borst, Robert~J Ragotte, Lukas~F Milles, et~al.
\newblock De novo design of protein structure and function with rfdiffusion.
\newblock \emph{Nature}, 620\penalty0 (7976):\penalty0 1089--1100, 2023.

\bibitem[Weed and Bach(2019)]{weed2019sharp}
Jonathan Weed and Francis Bach.
\newblock Sharp asymptotic and finite-sample rates of convergence of empirical measures in wasserstein distance.
\newblock \emph{Bernoulli}, 25\penalty0 (4A):\penalty0 2620 -- 2648, 2019.
\newblock \doi{10.3150/18-BEJ1065}.
\newblock URL \url{https://doi.org/10.3150/18-BEJ1065}.

\bibitem[Wu and Cai(2026)]{wu2026diffusion}
Jingda Wu and Changxiao Cai.
\newblock Diffusion models are statistically optimal for learning low-dimensional multi-modal distributions.
\newblock In \emph{Forty-third International Conference on Machine Learning}, 2026.
\newblock URL \url{https://openreview.net/forum?id=L5JTAPUdbQ}.

\bibitem[Yarotsky(2017)]{yarotsky2017error}
Dmitry Yarotsky.
\newblock Error bounds for approximations with deep relu networks.
\newblock \emph{Neural Networks}, 94:\penalty0 103--114, 2017.

\bibitem[Zhou and Liu(2025)]{zhou2025an}
Zhengyu Zhou and Weiwei Liu.
\newblock An error analysis of flow matching for deep generative modeling.
\newblock In \emph{Forty-second International Conference on Machine Learning}, 2025.
\newblock URL \url{https://openreview.net/forum?id=vES22INUKm}.

\end{thebibliography}
\end{document}